\pdfoutput=1
\documentclass[11pt]{article}

\usepackage{graphicx}
\usepackage{placeins}
\usepackage{comment}
\usepackage{subcaption}
\usepackage[margin=1in]{geometry}
\usepackage{lmodern}
\usepackage[T1]{fontenc}
\usepackage{microtype}
\usepackage{times} 
\usepackage{natbib}
\usepackage{amsmath,amssymb,amsfonts}
\usepackage{mathrsfs}
\usepackage{calrsfs}

\usepackage{amsmath,amsfonts,bm}

\def\eqref#1{equation~\ref{#1}}

\def\1{\bm{1}}

\DeclareMathAlphabet{\mathsfit}{\encodingdefault}{\sfdefault}{m}{sl}
\SetMathAlphabet{\mathsfit}{bold}{\encodingdefault}{\sfdefault}{bx}{n}

\newcommand{\E}{\mathbb{E}}

\newcommand{\R}{\mathbb{R}}

\newcommand{\KL}{D_{\mathrm{KL}}}

\usepackage{booktabs}

\usepackage{textcomp}

\usepackage{xcolor}
\usepackage{url}
\usepackage[colorlinks=true,linkcolor=blue,citecolor=blue,urlcolor=blue]{hyperref}
\hypersetup{ pdftitle={Smoothness-Based Derandomization of PAC-Bayes Bounds}, pdfauthor={Alexandre Lemire Paquin, Brahim Chaib-Draa, Philippe Giguère} }
\usepackage{titlesec}
\allowdisplaybreaks
\titleformat{\section}
  {\normalfont\Large\bfseries}
  {\thesection}
  {1em}
  {}

\titleformat{\subsection}
  {\normalfont\large\bfseries}
  {\thesubsection}
  {1em}
  {}

\titleformat{\subsubsection}
  {\normalfont\normalsize\bfseries}
  {\thesubsubsection}
  {1em}
  {}

\titlespacing*{\section}{0pt}{2.0ex plus 0.5ex minus 0.3ex}{1.0ex plus 0.3ex}
\titlespacing*{\subsection}{0pt}{1.5ex plus 0.3ex minus 0.2ex}{0.7ex}
\titlespacing*{\subsubsection}{0pt}{1.0ex plus 0.2ex minus 0.1ex}{0.5ex}
\usepackage{setspace}

\usepackage{amsthm}

\newtheorem{lem}{Lemma}[section]

\newtheorem{prop}[lem]{Proposition}
\newtheorem{thm}[lem]{Theorem}
\newtheorem{coro}[lem]{Corollary}

\theoremstyle{definition}
\newtheorem{de}[lem]{Definition}

\theoremstyle{remark}
\newtheorem{rem}[lem]{Remark}

\newcommand{\klb}{\operatorname{kl}}

\newcommand{\jen}{{\operatorname{Jen}}}
\newcommand{\var}{{\operatorname{Var}}}
\newcommand{\tr}{{\operatorname{Trace}}}

\DeclareMathOperator{\erf}{erf}

\newcommand{\Endproof}{\hfill$\blacksquare$\par\medskip}
\title{Smoothness-Based Derandomization of PAC-Bayes Bounds}

\author{
Alexandre Lemire Paquin \quad Brahim Chaib-Draa \quad Philippe Gigu\`ere\\
Department of Computer Science and Software Engineering\\
Université Laval, Québec, Canada\\
\texttt{alexandre.lemire-paquin.1@ulaval.ca}\\
\texttt{brahim.chaib-draa@ift.ulaval.ca}\\
\texttt{philippe.giguere@ift.ulaval.ca}
}
\date{June 2026}

\begin{document}

\maketitle

\begin{abstract}
We study PAC-Bayes derandomization for smooth loss functions. Our goal is to obtain generalization bounds that hold with high probability for deterministic predictors by exploiting smoothness properties of both the loss and the predictor class. We show that passing from the Gibbs predictor to the deterministic predictor at the posterior mean has a precise cost, given by the generalization gap of the Jensen gap class. We control this class through its Rademacher complexity, leading to bounds for deterministic predictors that involve flatness quantities expressed in terms of parameter Jacobians and Hessians of the score map. The framework applies to both bounded and unbounded smooth loss functions, and we specialize the results to linear predictors and smooth neural networks. Finally, the Jacobian and Hessian quantities appearing in the theory motivate a practical regularizer. For BatchNorm networks, we compute this regularizer with respect to effective BatchNorm weights obtained by folding the BatchNorm transformation into the adjacent affine weights. Experiments on CIFAR-10 illustrate the behavior of this regularizer under different batch sizes.
\end{abstract}

\newpage
\setcounter{tocdepth}{1}
\tableofcontents
\newpage

\section{Introduction}

Flatness measures defined through parameter perturbations are among the more
promising predictors of the generalization gap in large-scale empirical studies
on neural networks
\citep{jiang2020fantastic,dziugaite2020search}. PAC-Bayes theory provides a
natural framework for deriving high-probability generalization bounds involving
such measures
\citep{mcallester1998some,mcallester1999pacbayes,seeger2002pacbayes,
catoni2007pacbayes,haddouche2025flatminima}. However, standard PAC-Bayes bounds apply to stochastic
Gibbs predictors, whereas a single deterministic predictor is often used
in practice. Derandomization of PAC-Bayes bounds provides a rigorous bridge
between guarantees for stochastic classifiers and guarantees for deterministic
predictors
\citep{langford2002pacbayes,viallard2021general,
neyshabur2018spectralpacbayes,banerjee2020derandomized,biggs2022margins}.
Existing results connect ideas involving classification margins and flatness
through an analysis of parameter perturbations. While this provides an
important connection, it ties the resulting guarantees to margin losses, which
cannot be directly optimized using gradient-based methods. In this work, we
consider a new general strategy for PAC-Bayes derandomization that relies on
smoothness assumptions. This strategy yields bounds involving flatness measures
that apply directly to deterministic predictors, for both classification and
regression with bounded or unbounded smooth loss functions.

We show that one can pass from a PAC-Bayes bound for the stochastic Gibbs
predictor to a bound for the deterministic predictor at the posterior mean by
paying an additional penalty term. This penalty is the generalization gap of
the Jensen gap of the loss class; see
Theorem~\ref{thm:pb-jensen-gap-omega} and
Corollary~\ref{cor:replace-omega-by-g-delta-half}. The Jensen gap of a function
\(f\) with respect to a posterior \(Q\) is the difference
\(f(\mathbb{E}_{w\sim Q}[w])-\mathbb{E}_{w\sim Q}f(w)\); see
Definition~\ref{def:jensen-gap}. This penalty therefore identifies the cost of
replacing the Gibbs classifier by its posterior mean.

The remaining task is to control this Jensen gap class uniformly with high
probability. For this purpose, we use a Talagrand comparison inequality for
sub-Gaussian processes to upper bound its Rademacher complexity; see
Section~\ref{sec:jg-rademacher}. The resulting bounds explicitly depend on
smoothness and flatness properties of both the loss and the hypothesis class.
In particular, Theorem~\ref{combinedThm} gives the general result in the
unbounded case, while Theorem~\ref{combinedThmLin} gives its specialization to
linear predictors.

The main contributions of this work are as follows.
\begin{enumerate}
\item We obtain uniform high-probability control of the Jensen gap class by
bounding its Rademacher complexity (Corollary~\ref{cor:rad-jensen-gap},
Proposition~\ref{cor:jensen-gap-genbound}, and
Proposition~\ref{prop:jensen-gap-genbound-dyadic}).

\item We derive general derandomized PAC-Bayes bounds for unbounded and bounded
smooth loss functions. The unbounded case is treated in
Section~\ref{sec:pb-unbounded}, with the main general result stated in
Theorem~\ref{combinedThm}. The bounded case is treated separately in
Section~\ref{sec:pb-bounded}, leading in particular to
Theorem~\ref{bounded} and to the inverse-kl bound of
Theorem~\ref{thm:bounded-kl-regrouped-rademacher-sigmai}.

\item We specialize the theory to linear predictors. In the unbounded case,
Theorem~\ref{combinedThmLin} gives the corresponding linear bound, while
Theorem~\ref{boundedLin} gives the analogue for bounded losses. A key step in
the unbounded analysis is the evaluation of the PAC-Bayes complexity term under
Gaussian priors. Lemma~\ref{lem:pacbayes-gaussian-term} gives a square root
dependence on the dimension for this term. This improves over the dimension
dependence obtained in
\citet{haddouche2020pacbayes}. Specific results for classification with the
multi-class unhinged loss
\citep{shoham2021exploration,Zhou2023unhinged,
lemirepaquin2026symmetrization} and for regression are given in
Proposition~\ref{unhprop}, Corollary~\ref{cor:absolute-loss-linear}, and
Corollary~\ref{cor:normalized-huber-loss-linear}. A comparison with a classical Rademacher complexity bound for a bounded smooth loss is presented in Section~\ref{sec:linear-bounded-experiments} and
Figures~\ref{fig:relative-improvement}--\ref{fig:linear-ce-bounds}.

\item We specialize our bounds to SHEL networks and compare them
with the margin bound of \citet{biggs2022margins} in
Section~\ref{sec:shel} and
Figures~\ref{fig:ce-training-batch-size}--\ref{fig:ce-training-width}.

\item We propose a practical regularizer inspired by the Jacobian and
Hessian quantities appearing in the theory. For BatchNorm networks, we
investigate a parametrization obtained by combining the BatchNorm
transformation with the preceding affine weights, leading to effective
BatchNorm weights \(w_{\mathrm{eff},B'}\). The resulting experiments are
presented in Section~\ref{sec:jh-bn-regularizer},
Figures~\ref{fig:batchsize-clean-fixed-vs-sqrt}--\ref{fig:batchsize-jh-effective-comparison},
and Table~\ref{tab:jh-bprime-probes-sqrt-epoch-comparison}.

\end{enumerate}

\paragraph{Structure of the document.}
In Section~\ref{related}, we contextualize our general method.
The presentation in Section~\ref{sec:pb-unbounded} follows the conceptual development of the bounds. Some of the technical ingredients
required for Theorem~\ref{combinedThm} are established later in
Sections~\ref{sec:jg-uniform-upper}--\ref{sec:jg-rademacher}. For clarity of
exposition, we state the main results first and refer to those later sections
within the proofs when the corresponding technical estimates are needed.

\section{Motivation and relation to existing derandomization approaches}
\label{related}
We first motivate the role of the Jensen gap through the simple case where the loss is convex in the parameters.
If a loss function $L(h_w(x),y)$ is convex with respect to $w$, then Jensen's inequality implies
\[
L(h_{\overline{w}}(x),y)\leq \E_{w\sim Q}[L(h_w(x),y)],
\]
where $\overline{w}=\E_{w\sim Q}[w]$. 
Starting from a PAC-Bayesian bound for $L$, one can then obtain in a straightforward manner a generalization bound for the deterministic classifier $\overline{w}$ of the form
\[
L_D(\overline{w})\leq L_S(\overline{w})
+\frac{1}{n}\sum_{i=1}^n\Big[
\E_{w\sim Q}L(h_w(x_i),y_i)
- L(h_{\overline{w}}(x_i),y_i)
\Big]
+\widetilde{O}\!\left(\frac{1}{\sqrt{n}}\right).
\]
The quantity
\[
\frac{1}{n}\sum_{i=1}^n\Big[
\E_{w\sim Q}L(h_w(x_i),y_i)
- L(h_{\overline{w}}(x_i),y_i)
\Big]
\]
can be interpreted as an empirical measure of flatness of the loss $L$ around the mean $\overline{w}$. 
This direct argument constitutes perhaps the simplest form of derandomization of a PAC-Bayes bound. 
However, it relies on convexity and therefore cannot be applied to typical neural network hypothesis classes. Moreover, even for linear predictors, it does not apply for example to
non-linear symmetric loss functions \(L(z,y)\), which are non-convex
\citep{Ghoshneuro}. This motivates the development of a more general
approach.

The framework developed in this work replaces this convexity argument by an exact
decomposition. The additional term required for derandomization is the
generalization gap of the Jensen gap of the loss class, and the resulting
bounds take the schematic form
\[
L_D(\overline{w})
\leq
L_S(\overline{w})
+
\frac{\text{smoothness/flatness terms}}{\sqrt n}
+
\widetilde{O}\!\left(\frac{1}{\sqrt n}\right).
\]
This perspective can be advantageous even in convex settings, since the
empirical flatness term appearing in the direct convexity argument need not
vanish with the sample size.
 
We now position our approach within the literature that seeks  guarantees on deterministic classifiers derived from PAC-Bayesian bounds.

\citet{viallard2021general} develop a general framework for deriving pointwise PAC-Bayesian generalization bounds that apply to an individual hypothesis sampled from the posterior, rather than to the Gibbs predictor.
This form of derandomization is referred to as the \emph{disintegration} of PAC-Bayesian bounds. Related deterministic PAC-Bayesian perspectives were later developed by \citet{clerico2025deterministic}.
In the disintegrated setting, for instance, if one assumes an explicit posterior family (e.g., a Gaussian centered at the algorithm’s output), the resulting guarantee applies to a specific hypothesis drawn from that posterior and is probabilistic with respect to this sampling step (in addition to the randomness of the sample $S$).
In contrast, our bounds apply directly to the deterministic classifier at the posterior mean in the classical sense and do not introduce additional hypothesis level randomness.

Among direct derandomization approaches, a prominent line of work relies on margin-based arguments. 
Early PAC-Bayesian margin formulations appear in \citet{langford2002pacbayes}. 
Subsequent developments derived results for modern neural network architectures \citep{neyshabur2018spectralpacbayes,biggs2022margins,banerjee2020derandomized}. These approaches provide sharp guarantees when suitable margin notions are available, but they depend on margin-specific constructions and are therefore not directly adapted to arbitrary loss functions, such as the symmetric loss functions considered in this work.
Define the margin of an example $(x,y)$ as
\[
m_{\overline w}(x,y)
=
h_{\overline w}(x)_y
-
\max_{k\neq y} h_{\overline w}(x)_k .
\]
For a margin parameter $\gamma>0$, the corresponding margin loss is defined by
\[
L_\gamma(h_{\overline w}(x),y)
=
\mathbf 1\{m_{\overline w}(x,y)\le \gamma\},
\]
which upper bounds the $0\text{-}1$ loss. Margin-based PAC-Bayesian approaches exploit inequalities involving the $0\text{-}1$ loss, the margin loss and the margin loss under a posterior $Q$. Consider as an illustration the strategy of \citet{neyshabur2018spectralpacbayes}. They define a subset $S_{\overline w}$ of weights $w$ such that the outputs of $h_w$ remain uniformly close to those of $h_{\overline w}$, namely
\[
S_{\overline w}
\subseteq
\Big\{
w:\;
\sup_{x\in\mathcal X}
\|h_w(x)-h_{\overline w}(x)\|_\infty
<
\gamma/4
\Big\}.
\]
Conditioning the posterior on this set yields a truncated posterior $\widetilde Q$ under which the inequalities
\[
L_D^{0\text{-}1}(h_{\overline w})
\le
\mathbb E_{w\sim \widetilde Q}
\big[
L_{D,\gamma/2}(h_{w})
\big]
\qquad\text{and}\qquad
\mathbb E_{w\sim \widetilde Q}
\big[
L_{S,\gamma/2}(h_{w})
\big]
\le
L_{S,\gamma}(h_{\overline w})
\]
hold. This enables the conversion of a stochastic PAC-Bayes guarantee into a deterministic margin-based bound. 

A related derandomization strategy, developed in \citet{banerjee2020derandomized}, also relies on inequalities that directly relate the margin loss of the deterministic predictor to the corresponding Gibbs (i.e., posterior-averaged) margin loss under $Q$. These inequalities are obtained by controlling how Gaussian perturbations of the parameters affect the margin, and they introduce a residual term of the form
\[
c_0 \exp\!\big(-\min(c_2\gamma^2,\; c_1\gamma)\big),
\]
which captures the probability that the margin changes significantly under the perturbation. This term is intrinsic to their derandomization step and does not vanish with the sample size $n$. In contrast, our approach preserves statistical consistency in the sense that all additional terms in the bound vanish as $n\to\infty$.

Fundamentally, the Jensen gap framework adopted in this work does not rely on
such ``derandomization inequalities'' for the margin loss. Instead, it starts
from an exact decomposition relating deterministic and stochastic risks through
the Jensen gap. This gives a conceptually distinct route to derandomization
that is naturally suited to smooth loss functions beyond margin-based
constructions. Moreover, the empirical loss appearing in our bounds can
coincide with the loss used for training, so gradient-based optimization
directly acts on a term that enters the bound. By contrast, the margin loss
\(L_\gamma\) cannot be directly optimized using gradient-based methods and
therefore requires a surrogate training loss that is different from the loss
appearing in the bound.

Another line of work obtaining deterministic guarantees from PAC-Bayes bounds is the literature on weighted majority-vote classifiers \citep{mcallester1999pacbayes,langford2002pacbayes}. Explicit bounds for the risk of the majority vote were derived in terms of the mean and variance of the error of the Gibbs classifier in \citep{lacasse2007majorityvote}, leading to the C-bound and the MinCq learning algorithm \citep{germain2015risk}. Second-order refinements of the C-bound were later established in \citep{masegosa2020secondorder}. More recently, \citet{leblanc2025deterministicrisk} proposed a general framework for extracting deterministic risk guarantees from stochastic PAC-Bayesian bounds, with applications to majority-vote classifiers. While powerful, this line of work is inherently tied, in its main applications, to vote-based predictors and does not directly address deterministic predictors defined through the posterior mean of parameters.  

\section{Generalization bounds for unbounded loss functions from the PAC-Bayes framework}
\label{sec:pb-unbounded}
Unbounded loss functions arise naturally in several learning problems. Standard
examples include regression losses such as the squared loss, the absolute loss,
and the Huber loss, as well as classification losses such as the cross-entropy
loss. They also arise in classification through symmetric losses such as the
multi-class unhinged loss
\citep{shoham2021exploration,Zhou2023unhinged} and through related
symmetrized losses studied in \citet{lemirepaquin2026symmetrization}. In this section, we rely on a PAC-Bayes result applicable to unbounded loss functions from \citet{alquier2016properties}; see also \citep{rivasplata2020pacbayesbeyond,casado2024pacbayeschernoff} for related PAC-Bayesian treatments beyond the bounded loss setting. We then exploit the assumption that, for each individual hypothesis \(h\), the loss function \(L(h(x),y)\) is bounded on \(X\) (typically a bounded domain) in order to apply Hoeffding’s inequality. This strategy is in the same spirit as \citet{haddouche2020pacbayes}, where hypothesis-dependent boundedness is used to extend PAC-Bayes bounds to learning problems involving unbounded regression loss functions.
\begin{thm}[{\cite{alquier2016properties}}]
\label{Alquier}
Given a distribution $D$ over $X \times Y$, a hypothesis set $\mathcal{H}$, a loss function $L : \mathcal{H} \times X \times Y \to \mathbb{R}$, a prior distribution $P$ over $\mathcal{H}$, a $\delta \in (0, 1]$, and a real number $\lambda > 0$, with probability at least $1 - \delta$ over the choice of $S \sim D^n$, we have:

\[\displaystyle
\forall Q \text{ on } \mathcal{H}:\quad 
\mathbb{E}_{h \sim Q} L_{D}(h) 
\leq 
\mathbb{E}_{h \sim Q} L_{S}(h) 
+ \frac{1}{\lambda} \left[
\KL(Q \| P) + \ln \frac{1}{\delta} + \Psi_{L,P,D}(\lambda, n)
\right],
\]

where
\[\displaystyle
\Psi_{L,P,D}(\lambda, n) := \ln \mathbb{E}_{h \sim P} \mathbb{E}_{S \sim D^n} \exp\left[
\lambda \left(
L_{D}(h) - L_{S}(h)
\right)
\right].
\]
\end{thm}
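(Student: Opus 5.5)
The plan is to use the standard Donsker--Varadhan change-of-measure argument combined with Markov's inequality. First, I will apply the Donsker--Varadhan variational formula: for any measurable $\phi:\mathcal H\to\mathbb R$ and any $Q\ll P$,
\[
\mathbb{E}_{h\sim Q}\phi(h)\le \KL(Q\|P)+\ln \mathbb{E}_{h\sim P}e^{\phi(h)}.
\]
If $Q\not\ll P$, then $\KL(Q\|P)=+\infty$ and the bound is trivial. I will use this with $\phi(h)=\lambda\,(L_D(h)-L_S(h))$. This gives, simultaneously for all posteriors $Q$,
\[
\lambda\big(\mathbb{E}_{Q}L_D(h)-\mathbb{E}_{Q}L_S(h)\big)\le \KL(Q\|P)+\ln \mathbb{E}_{h\sim P}e^{\lambda(L_D(h)-L_S(h))}.
\]
The essential feature is that the right-hand side depends on $S$ only through one random variable, $Z(S):=\mathbb{E}_{h\sim P}e^{\lambda(L_D(h)-L_S(h))}$, which does not involve $Q$. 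So the uniformity over $Q$ comes for free.

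Next, I will control $Z(S)$ with high probability. Since $Z(S)\ge 0$, Markov's inequality gives, with probability at least $1-\delta$ over $S\sim D^n$,
\[
Z(S)\le \frac{1}{\delta}\,\mathbb{E}_{S}Z(S).
\]
Because $P$ is chosen independently of $S$ and the integrand is nonnegative, Tonelli's theorem lets me swap the order of integration: $\mathbb{E}_S Z(S)=\mathbb{E}_{h\sim P}\mathbb{E}_{S\sim D^n}e^{\lambda(L_D(h)-L_S(h))}=e^{\Psi_{L,P,D}(\lambda,n)}$. Taking logarithms yields $\ln Z(S)\le \ln\frac1\delta+\Psi_{L,P,D}(\lambda,n)$ on this event. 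Substituting this into the change-of-measure inequality and dividing by $\lambda>0$ gives the stated bound.

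The only delicate points are measure-theoretic, and I expect these to be the main obstacle. First, I need joint measurability of $(h,S)\mapsto L_S(h)$ so that Tonelli's theorem applies. Second, I need to handle the case where $\Psi=+\infty$, in which the bound holds trivially. Third, I need to handle the case where $\mathbb{E}_Q L_S$ or $\mathbb{E}_Q L_D$ is infinite, which I would treat with the usual convention or by assuming the expectations are finite. I would prove Donsker--Varadhan itself, if needed, by applying Jensen's inequality to $\ln \mathbb{E}_Q[e^{\phi}\,dP/dQ]$.
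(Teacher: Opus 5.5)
Your proposal is correct and is essentially the standard argument from the works the paper cites for this result (the paper gives no proof of its own and defers to Alquier et al.\ and Germain et al.). That argument is the Donsker--Varadhan change of measure with $\phi(h)=\lambda(L_D(h)-L_S(h))$, then Markov's inequality applied to the $Q$-independent variable $\mathbb{E}_{h\sim P}e^{\phi(h)}$, then Fubini--Tonelli to exchange the expectations over $P$ and $S$, which is valid because $P$ does not depend on $S$.
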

\noindent \textbf{Proof:} see \cite{alquier2016properties} and \cite{germain2016pac}. \Endproof

 Our next step is to upper bound $\Psi_{L,P,D}(\lambda,n)$ under a hypothesis-dependent boundedness condition.

\begin{prop}
\label{ah}
Assume that $\|h(x)\| \leq a_h$ on $X$ for every $h \in \mathcal{H}$.
Furthermore, assume that $L(z,y)$ is $\ell$-Lipschitz in $z$ for every $y$,
and that $L(0,y)$ is independent of $y$. Then, for every $\lambda>0$,
\[
\Psi_{L,P,D}(\lambda,n)
\le
\ln \E_{h\sim P}
\exp\!\left(
\frac{\lambda^2\ell^2 a_h^2}{2n}
\right).
\]
\end{prop}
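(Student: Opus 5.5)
Fix $h$ and, by Fubini/Tonelli (the integrand is nonnegative), exchange the order of $\E_{h\sim P}$ and $\E_{S\sim D^n}$. It then suffices to show that for each fixed $h\in\mathcal H$,
\[
\E_{S\sim D^n}\exp\!\big[\lambda(L_D(h)-L_S(h))\big]\le \exp\!\left(\frac{\lambda^2\ell^2a_h^2}{2n}\right).
\]
Integrating over $P$ and taking the logarithm then gives the claim.

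For fixed $h$, the random variables $Z_i=L(h(x_i),y_i)$ are i.i.d. under $D$, and $L_D(h)-L_S(h)=\frac1n\sum_i(\E Z_i - Z_i)$. We show that each $Z_i$ lies almost surely in an interval of length at most $2\ell a_h$. Let $c:=L(0,y)$, which does not depend on $y$ by assumption. Lipschitzness in $z$ gives
\[
|L(h(x),y)-c|=|L(h(x),y)-L(0,y)|\le \ell\|h(x)\|\le \ell a_h
\]
for every $(x,y)$. Hence $Z_i\in[c-\ell a_h,\,c+\ell a_h]$. The independence of $L(0,y)$ from $y$ is exactly what makes this interval the same for all labels. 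Without it, the range would also depend on the spread of $L(0,\cdot)$.

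Next apply Hoeffding's lemma to each centered variable $\E Z_i-Z_i$, which has range length $2\ell a_h$. With $t=\lambda/n$ it gives
\[
\E e^{t(\E Z_i-Z_i)}\le \exp\!\left(\frac{t^2(2\ell a_h)^2}{8}\right)=\exp\!\left(\frac{\lambda^2\ell^2a_h^2}{2n^2}\right).
\]
Multiplying over the $n$ independent factors yields $\exp(\lambda^2\ell^2a_h^2/(2n))$, which is the desired bound.

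There is no real obstacle here. The only step needing care is the range argument: one must use the common value $L(0,y)=c$ to get an interval of length $2\ell a_h$ rather than a one-sided bound. The rest is a standard application of Hoeffding's lemma, together with a measurability/Tonelli justification for swapping the two expectations.
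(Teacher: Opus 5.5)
Your proof is correct and follows the paper's argument: the paper likewise uses the $y$-independence of $L(0,y)$ (by shifting the loss so that $L(0,y)=0$) to obtain $|L(h(x),y)|\le \ell a_h$, applies Hoeffding's lemma for each fixed $h$, and then averages over $P$ and takes the logarithm. The only superfluous step is your Tonelli exchange, since $\Psi_{L,P,D}$ is already defined with $\E_{S\sim D^n}$ inside $\E_{h\sim P}$.
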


\noindent\textbf{Proof:}
Without loss of generality, assume that $L(0,y)=0$, since adding a constant
to the loss does not affect $L_D(h)-L_S(h)$. Then, for every fixed $h$,
\[
|L(h(x),y)|\le \ell \|h(x)\| \le \ell a_h .
\]
Thus, by Hoeffding's lemma,
\[
\E_{S\sim D^n}
\exp\!\left(\lambda(L_D(h)-L_S(h))\right)
\le
\exp\!\left(
\frac{\lambda^2\ell^2 a_h^2}{2n}
\right).
\]
Taking expectation over $h\sim P$ and then the logarithm gives
\[
\Psi_{L,P,D}(\lambda,n)
=
\ln \E_{h\sim P}\E_{S\sim D^n}
\exp\!\left(\lambda(L_D(h)-L_S(h))\right)
\le
\ln \E_{h\sim P}
\exp\!\left(
\frac{\lambda^2\ell^2 a_h^2}{2n}
\right).
\]
\Endproof

We now convert the bound on the Gibbs risk from Theorem \ref{Alquier} into a bound at the mean of the distribution.

\begin{thm}
\label{thm:pb-jensen-gap-omega}
Let \(D\) be a distribution on \(X\times Y\), let \(S=\{(x_i,y_i)\}_{i=1}^n\sim D^n\), let
\(\mathcal H=\{h_w:w\in\R^m\}\), and let \(P\) be a prior on \(\R^m\).
Fix \(\delta\in(0,1]\) and \(\lambda>0\).
For any distribution \(Q\) on \(\R^m\) and any sample \(S\), define
\[
\Omega_{L,Q,D,S}(n)
:=
\Big(L_D(h_{\overline w})-\E_{w\sim Q}L_D(h_w)\Big)
-
\Big(L_S(h_{\overline w})-\E_{w\sim Q}L_S(h_w)\Big),
\qquad
\overline w:=\E_{w\sim Q}[w].
\]
Then, with probability at least \(1-\delta\) over the draw of \(S\), for all distributions
\(Q\) on \(\R^m\),
\[
L_D(h_{\overline w})
\le
L_S(h_{\overline w})
+
\Omega_{L,Q,D,S}(n)
+
\frac{1}{\lambda}\Big[
\KL(Q\|P)+\ln\tfrac1\delta+\Psi_{L,P,D}(\lambda,n)
\Big],
\]
where \(\Psi_{L,P,D}(\lambda,n)\) is as in Theorem~\ref{Alquier}.
\end{thm}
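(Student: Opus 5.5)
The plan is to apply Theorem~\ref{Alquier} to the hypothesis set \(\mathcal H=\{h_w:w\in\R^m\}\), with distributions over \(\mathcal H\) identified with distributions over parameters \(w\in\R^m\), and then rewrite its left-hand side with an exact algebraic identity. No further probabilistic argument will be needed. In particular, \(\Omega_{L,Q,D,S}(n)\) will be kept as a sample- and posterior-dependent term rather than bounded here. Since \(\Omega\) is defined for every \(Q\) and every \(S\), the identity holds pointwise, so the uniformity over \(Q\) is inherited directly from Theorem~\ref{Alquier}.

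\emph{Step 1.} Fix \(\delta\) and \(\lambda\), and invoke Theorem~\ref{Alquier} with prior \(P\). On an event of probability at least \(1-\delta\), simultaneously for all \(Q\),
\[
\E_{w\sim Q}L_D(h_w)\le \E_{w\sim Q}L_S(h_w)+\frac1\lambda\Big[\KL(Q\|P)+\ln\tfrac1\delta+\Psi_{L,P,D}(\lambda,n)\Big].
\]

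\emph{Step 2.} For any \(Q\) with mean \(\overline w\), write the trivially true identity
\[
L_D(h_{\overline w})=L_S(h_{\overline w})+\Big[\big(L_D(h_{\overline w})-\E_QL_D(h_w)\big)-\big(L_S(h_{\overline w})-\E_QL_S(h_w)\big)\Big]+\E_QL_D(h_w)-\E_QL_S(h_w).
\]
The bracketed term is exactly \(\Omega_{L,Q,D,S}(n)\). The remaining difference \(\E_QL_D-\E_QL_S\) is then bounded using Step 1 on the good event, which gives the claim for all \(Q\) at once.

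The only real obstacle is well-definedness. The mean \(\overline w\) must exist, and the expectations \(\E_QL_D(h_w)\) and \(\E_QL_S(h_w)\) must be finite, so that the differences are not of the form \(\infty-\infty\). I would handle this by restricting to \(Q\) with finite first moment and integrable losses, or by noting that when \(\KL(Q\|P)=\infty\) the bound is vacuous and holds trivially.
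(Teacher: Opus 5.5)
Your proposal is correct and follows the paper's proof: both invoke Theorem~\ref{Alquier} on its probability-$(1-\delta)$ event, which is uniform in $Q$, and combine it with the exact add-and-subtract identity that isolates $\Omega_{L,Q,D,S}(n)$; the paper adds $L_D(h_{\overline w})-\E_Q L_D$ and then $L_S(h_{\overline w})$ in two steps, whereas you do both in one identity. Your integrability caveat is a reasonable addition, since the paper leaves the finiteness of $\E_Q L_D(h_w)$ and $\E_Q L_S(h_w)$ implicit.
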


\noindent\textbf{Proof:}
Start from the identity
\[
L_D(h_{\overline w})
=
\E_{w\sim Q}L_D(h_w)
+
\Big(L_D(h_{\overline w})-\E_{w\sim Q}L_D(h_w)\Big).
\]
On the event of probability at least \(1-\delta\) given by Theorem~\ref{Alquier}, we have
simultaneously for all \(Q\):
\[
\E_{w\sim Q}L_D(h_w)
\le
\E_{w\sim Q}L_S(h_w)
+
\frac{1}{\lambda}\Big[
\KL(Q\|P)+\ln\tfrac1\delta+\Psi_{L,P,D}(\lambda,n)
\Big].
\]
Substituting this bound into the previous identity yields
\[
L_D(h_{\overline w})
\le
\E_{w\sim Q}L_S(h_w)
+
\Big(L_D(h_{\overline w})-\E_{w\sim Q}L_D(h_w)\Big)
+
\frac{1}{\lambda}\Big[
\KL(Q\|P)+\ln\tfrac1\delta+\Psi_{L,P,D}(\lambda,n)
\Big].
\]
Next, decompose the Gibbs empirical risk by adding and
subtracting \(L_S(h_{\overline w})\):
\[
\E_{w\sim Q}L_S(h_w)
=
L_S(h_{\overline w})
+
\Big(\E_{w\sim Q}L_S(h_w)-L_S(h_{\overline w})\Big).
\]
Plugging this decomposition into the last inequality gives
\[
\begin{aligned}
L_D(h_{\overline w})
\le\;&
L_S(h_{\overline w})
+
\bigg[\Big(L_D(h_{\overline w})-\E_{w\sim Q}L_D(h_w)\Big)
-
\Big(L_S(h_{\overline w})-\E_{w\sim Q}L_S(h_w)\Big)\bigg]
\\
&+
\frac{1}{\lambda}\Big[
\KL(Q\|P)+\ln\tfrac1\delta+\Psi_{L,P,D}(\lambda,n)
\Big],
\end{aligned}
\]
which concludes the proof.
\Endproof

\begin{rem}
    The term $\Omega_{L,Q,D,S}(n)$ is the exact penalty term needed to obtain a bound on the deterministic classifier at the mean of the distribution $Q$ from a bound on the stochastic classifier.
\end{rem}

\begin{coro}
\label{cor:replace-omega-by-g-delta-half}
Let \(D\) be a distribution on \(X\times Y\), let \(S\sim D^n\), and let \(L\) be a loss function.
Let \(\Theta\) be a nonempty subset of probability distributions on \(\R^m\).
Assume that there exists a function \(g(\Theta,L,S,\delta,n)\) such that, for any \(\delta\in(0,1]\), with
probability at least \(1-\delta\) over \(S\sim D^n\),
\[
\sup_{Q\in\Theta}\,\Omega_{L,Q,D,S}(n)\;\le\; g(\Theta,L,S,\delta,n).
\]
Fix \(\delta\in(0,1]\), \(\lambda>0\), and a prior \(P\) on \(\R^m\). Then, with probability at least \(1-\delta\) over \(S\sim D^n\), we have simultaneously for all
\(Q\in\Theta\),
\[
L_D(h_{\overline w})
\le
L_S(h_{\overline w})
+
g(\Theta,L,S,\delta/2,n)
+
\frac{1}{\lambda}\Big[
\KL(Q\|P)+\ln\tfrac{2}{\delta}+\Psi_{L,P,D}(\lambda,n)
\Big].
\]
\end{coro}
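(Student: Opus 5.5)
The plan is a union bound over two events, each given probability at least \(1-\delta/2\), followed by substitution into Theorem~\ref{thm:pb-jensen-gap-omega}.

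\textbf{Event \(E_1\).} Apply Theorem~\ref{Alquier} at confidence level \(\delta/2\) with the fixed prior \(P\) and the fixed \(\lambda\). With probability at least \(1-\delta/2\), simultaneously for all distributions \(Q\) on \(\R^m\),
\[
\E_{w\sim Q}L_D(h_w)\le \E_{w\sim Q}L_S(h_w)+\frac{1}{\lambda}\Big[\KL(Q\|P)+\ln\tfrac{2}{\delta}+\Psi_{L,P,D}(\lambda,n)\Big].
\]
Call this event \(E_1\). On \(E_1\), the algebraic manipulations in the proof of Theorem~\ref{thm:pb-jensen-gap-omega} apply verbatim for each \(Q\). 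They give, for all \(Q\) (in particular all \(Q\in\Theta\)),
\[
L_D(h_{\overline w})\le L_S(h_{\overline w})+\Omega_{L,Q,D,S}(n)+\frac{1}{\lambda}\Big[\KL(Q\|P)+\ln\tfrac{2}{\delta}+\Psi_{L,P,D}(\lambda,n)\Big].
\]
Those steps are pure identities plus the PAC-Bayes inequality, so they do not depend on the confidence level.

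\textbf{Event \(E_2\).} Invoke the hypothesis on \(g\) with confidence \(\delta/2\in(0,1]\). This is allowed because the assumption holds for every \(\delta\in(0,1]\). With probability at least \(1-\delta/2\),
\[
\sup_{Q\in\Theta}\Omega_{L,Q,D,S}(n)\le g(\Theta,L,S,\delta/2,n).
\]
Call this event \(E_2\).

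\textbf{Combining.} By the union bound, \(\Pr(E_1\cap E_2)\ge 1-\delta\). On \(E_1\cap E_2\), for each \(Q\in\Theta\) we have \(\Omega_{L,Q,D,S}(n)\le\sup_{Q'\in\Theta}\Omega_{L,Q',D,S}(n)\le g(\Theta,L,S,\delta/2,n)\). Substituting this into the displayed inequality from \(E_1\) gives the claimed bound simultaneously for all \(Q\in\Theta\).

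There is no substantial obstacle. The one point needing care is that both events must be uniform in \(Q\), so that the intersection yields a simultaneous statement over \(\Theta\). Theorem~\ref{Alquier} is uniform over all \(Q\), and the assumption on \(g\) is a bound on the supremum over \(\Theta\), so both are uniform. Measurability of \(\sup_{Q\in\Theta}\Omega\) is implicitly part of the hypothesis that the event has probability at least \(1-\delta/2\).
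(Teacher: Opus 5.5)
Your proof is correct and is exactly the paper's argument: the paper simply states that the corollary is a direct union bound, namely Theorem~\ref{thm:pb-jensen-gap-omega} at confidence $\delta/2$ (which produces the $\ln(2/\delta)$ term) intersected with the hypothesis on $g$ at confidence $\delta/2$. You also checked that both events are uniform in $Q$, so their intersection gives a statement holding simultaneously for all $Q\in\Theta$, which is the only point that needed verifying.
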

\noindent\textbf{Proof:} This is a direct union bound. \Endproof

Corollary~\ref{cor:replace-omega-by-g-delta-half} reduces the problem of controlling deterministic predictors to obtaining a uniform upper bound on $\Omega_{L,Q,D,S}(n)$ over a family of posteriors. Such a bound will be established in
Section~\ref{sec:jg-rademacher}. In the next definition, we formalize the key object that will control this penalty term: the Jensen gap.

\begin{de}
\label{def:jensen-gap}
Let \( f : \mathbb{R}^m \to \mathbb{R}^n \) be a measurable function and let \( Q \) be a probability distribution on \( \mathbb{R}^m \). We will refer to
\[
\jen_Q[f(w)] :=  f\left( \mathbb{E}_{w \sim Q}[w] \right) - \mathbb{E}_{w \sim Q}[f(w)] 
\]
as the \emph{Jensen gap} of \( f \) with respect to \( Q \).
The quantity $\|\jen_Q[f(w)]\|$ (or $|\jen_Q[f(w)]|$) is a measure of non-linearity of the function $f$. Note that \( f \) is not required to be convex. 
\end{de}

Combining Corollary \ref{cor:replace-omega-by-g-delta-half} with Proposition \ref{ah} and Proposition \ref{cor:jensen-gap-genbound} from Section~\ref{sec:jg-rademacher} leads to the following result:

\begin{thm}
\label{combinedThm}
Assume that \(L(z,y)\) is a real-valued loss function such that, for every \(y\), the map \(z\mapsto L(z,y)\) is \(\ell\)-Lipschitz and \(\beta\)-smooth, and that \(L(0,y)\) is independent of \(y\). Let $D$ be a distribution over $X\times Y$ and let $S=\{(x_i,y_i)\}_{i=1}^n\sim D^n$.

Let $\mathcal H=\{h_w:\ w\in\R^m\}$ be a hypothesis class.
Assume that for every $h_w\in\mathcal H$ there exists $a_w>0$ such that
\[
\|h_w(x)\|\le a_w
\qquad\text{for all }x\in X .
\]
Let
\[
\Theta := \{\overline w\in\R^m:\ \|\overline w\|_2\le R\},
\]
and fix $\sigma>0$. For each $\overline w\in\Theta$, define
\[
Q:=\mathcal N(\overline w,\sigma^2 I_m),
\qquad\text{so that}\qquad
\overline w=\E_{w\sim Q}[w].
\]
Assume that the hypotheses of Corollary~\ref{cor:rad-jensen-gap} (Section~\ref{sec:jg-rademacher}) hold,
and assume moreover that there exists $B>0$ such that, for all $(x,y)$ and all
$\overline w\in\Theta$,
\[
|\jen_{Q}\!\big[L(h_w(x),y)\big]|\le\sigma^2 B .
\]
Let $P$ be any prior distribution on $\R^m$, let $\delta\in(0,1]$, and let $\lambda>0$.
Then, with probability at least $1-\delta$ over $S\sim D^n$, we have uniformly for all
$\overline w\in\Theta$,
\[
\begin{aligned}
L_D(h_{\overline w})
&\;\le\;
L_S(h_{\overline w})
\;+\;
2c\,\frac{R\sigma}{\sqrt n}\,
\sqrt{\beta^2 J_{S,\sigma}+\ell^2 H_{S,\sigma}}
\;+\;
8\sigma^2 B\,\sqrt{\frac{2\ln(8/\delta)}{n}} \\
&\;+\;
\frac{1}{\lambda}\Bigg[
\KL(Q\|P)
+
\ln \bigg(
\frac{2}{\delta}
\mathbb{E}_{w \sim P}
\exp\!\left(
\frac{\lambda^2 \ell^2 a_w^2}{2n}
\right)
\bigg)
\Bigg].
\end{aligned}
\]

where $c>0$ is an absolute constant, and $J_{S,\sigma}$ and $H_{S,\sigma}$ are defined as in Corollary~\ref{cor:rad-jensen-gap} and are empirical measures of flatness depending on the hypothesis class.
\end{thm}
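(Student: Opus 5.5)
The plan is to apply Corollary~\ref{cor:replace-omega-by-g-delta-half} with \(\Theta\) identified with the family of Gaussian posteriors \(\{\mathcal N(\overline w,\sigma^2 I_m):\|\overline w\|_2\le R\}\). The function \(g\) will come from the uniform Jensen-gap generalization bound of Proposition~\ref{cor:jensen-gap-genbound}, and \(\Psi_{L,P,D}(\lambda,n)\) will be bounded by Proposition~\ref{ah}. The first observation is that \(\Omega_{L,Q,D,S}(n)\) is exactly the generalization gap of the Jensen-gap function
\[
\phi_{\overline w}(x,y):=\jen_Q\big[L(h_w(x),y)\big],
\]
since
\[
\Omega_{L,Q,D,S}(n)=\E_D\,\phi_{\overline w}-\frac1n\sum_{i=1}^n\phi_{\overline w}(x_i,y_i).
\]
Here the exchange of \(\E_{w\sim Q}\) with \(\E_D\) and with the empirical average is justified by Fubini and the integrability implicit in the bound \(|\phi_{\overline w}|\le\sigma^2B\). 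So controlling \(\sup_{\overline w\in\Theta}\Omega\) is a uniform deviation problem for the class \(\mathcal F_\Theta=\{\phi_{\overline w}:\overline w\in\Theta\}\).

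For this class I would use the standard symmetrization and McDiarmid argument. Because each \(\phi_{\overline w}\) takes values in \([-\sigma^2B,\sigma^2B]\), with probability at least \(1-\delta'\) we have
\[
\sup_{\overline w}\Omega\le 2\widehat{\mathfrak R}_S(\mathcal F_\Theta)+c'\sigma^2B\sqrt{\ln(c''/\delta')/n}.
\]
Using the empirical Rademacher complexity requires a second McDiarmid application, and this is where \(\ln(8/\delta)\) with \(\delta'=\delta/2\) and the constant \(8\) would come from. I expect this step is exactly what Proposition~\ref{cor:jensen-gap-genbound} packages. The Rademacher term is then bounded by Corollary~\ref{cor:rad-jensen-gap}, whose hypotheses we assume, as
\[
\widehat{\mathfrak R}_S(\mathcal F_\Theta)\le c\,\frac{R\sigma}{\sqrt n}\sqrt{\beta^2J_{S,\sigma}+\ell^2H_{S,\sigma}}.
\]
The mechanism behind this is a Talagrand comparison. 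The increments \(\phi_{\overline w}-\phi_{\overline w'}\) are controlled by \(\|\overline w-\overline w'\|\) times a quantity combining \(\beta\cdot\)(Jacobian terms) and \(\ell\cdot\)(Hessian terms) of the score map. This makes the Rademacher process sub-Gaussian with respect to a scaled Euclidean metric on a ball of radius \(R\), which yields the \(R\) factor. Together these give \(g(\Theta,L,S,\delta,n)\) with the first two penalty terms, evaluated at \(\delta/2\).

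Finally, Corollary~\ref{cor:replace-omega-by-g-delta-half} produces the \(\frac1\lambda[\KL(Q\|P)+\ln\frac2\delta+\Psi]\) term. The hypotheses of Proposition~\ref{ah} (Lipschitzness, \(L(0,y)\) independent of \(y\), and \(\|h_w(x)\|\le a_w\)) are exactly those assumed here, so
\[
\Psi_{L,P,D}(\lambda,n)\le\ln\E_{w\sim P}\exp\!\Big(\frac{\lambda^2\ell^2a_w^2}{2n}\Big).
\]
Merging the two logarithms gives the displayed final term, and this holds uniformly over \(\overline w\in\Theta\).

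The main obstacle is the uniform Rademacher control of the Jensen-gap class, namely establishing the sub-Gaussian increment bound needed for Talagrand's comparison. Since this is deferred to Corollary~\ref{cor:rad-jensen-gap} and Proposition~\ref{cor:jensen-gap-genbound}, the work remaining in the proof itself is bookkeeping. That bookkeeping consists of matching the confidence splits so that the constants \(2c\), \(8\), and \(\ln(8/\delta)\) come out correctly, and checking that \(\Omega\) coincides with the Jensen-gap generalization gap.
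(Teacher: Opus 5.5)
Your proposal is correct and follows the paper's own argument. You apply Corollary~\ref{cor:replace-omega-by-g-delta-half} with $g$ supplied by Proposition~\ref{cor:jensen-gap-genbound} at confidence $\delta/2$, which turns its $\ln(4/\delta)$ into $\ln(8/\delta)$; you bound $\Psi_{L,P,D}(\lambda,n)$ by Proposition~\ref{ah} and merge it with $\ln\frac{2}{\delta}$. One small imprecision in your heuristic: Lemma~\ref{metric} controls increments through the quadratic form $(w_1-w_2)^\top M_{S,\sigma}(w_1-w_2)$, an ellipsoidal rather than a scaled Euclidean metric, which is why $\sqrt{\tr(M_{S,\sigma})}$ appears. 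Since you defer that step to Corollary~\ref{cor:rad-jensen-gap}, this does not affect the proof.
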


\begin{lem}[Evaluation of the PAC-Bayes complexity term for Gaussian $P$]
\label{lem:pacbayes-gaussian-term}
Let $P$ be a prior distribution on the full parameter space of a $T$-layer model,
denoted $W_{1:T}$. Let $W_T\in\R^{m_T}$ be the weights of the last linear layer, and
assume that the marginal distribution of $W_T$ under $P$ is
\[
W_T \sim \mathcal N(0,\sigma^2 I_{m_T}),
\qquad \sigma>0.
\]
Assume that there exists $M>0$ such that, for all $x\in X$,
\[
\|h_{W_{1:T}}(x)\|\le M\|W_T\|.
\]
This holds both in the linear case $h_{W_T}(x)=W_Tx$ with $\|x\|\le M$, and in the
neural-network case when the representation vector entering the last layer has norm upper bounded by $M$.
Let $\ell>0$, $n\ge1$, and choose
\[
\lambda=\frac{\sqrt n}{M\ell\sigma\sqrt{m_T+1}}.
\]
Then, with $a_{W_{1:T}}:=M\|W_T\|$, we have
\[
\frac{1}{\lambda}
\ln
\mathbb{E}_{W_{1:T}\sim P}
\exp\!\left(
\frac{\lambda^2\ell^2a_{W_{1:T}}^2}{2n}
\right)
\le
\frac{M\ell\sigma\sqrt{m_T+1}}{2\sqrt n}.
\]
\end{lem}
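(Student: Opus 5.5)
The plan is to reduce the expectation to a Gaussian moment generating function of a chi-square variable. Since $a_{W_{1:T}}^2=M^2\|W_T\|^2$ depends only on $W_T$, and the marginal of $W_T$ under $P$ is $\mathcal N(0,\sigma^2 I_{m_T})$, the expectation over $W_{1:T}\sim P$ equals an expectation over $W_T$ alone. Set $t:=\lambda^2\ell^2M^2/(2n)$. Then
\[
\mathbb{E}_{W_{1:T}\sim P}\exp\!\left(\frac{\lambda^2\ell^2a_{W_{1:T}}^2}{2n}\right)
=\mathbb{E}\exp\!\big(t\|W_T\|^2\big)
=(1-2t\sigma^2)^{-m_T/2},
\]
using $\|W_T\|^2/\sigma^2\sim\chi^2_{m_T}$. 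This formula is valid as long as $2t\sigma^2<1$.

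Next we substitute the prescribed $\lambda$. This gives
\[
2t\sigma^2=\frac{\lambda^2\ell^2M^2\sigma^2}{n}=\frac{1}{m_T+1},
\]
so the condition $2t\sigma^2<1$ holds and the MGF is finite. Writing $u:=1/(m_T+1)\in(0,1/2]$, the left-hand side of the lemma becomes
\[
\frac{1}{\lambda}\cdot\frac{m_T}{2}\,\ln\frac{1}{1-u}.
\]

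The main step is an elementary bound on this logarithm. We use
\[
-\ln(1-u)\le\frac{u}{1-u}=\frac{1}{m_T}.
\]
Then $\frac{m_T}{2}\ln\frac{1}{1-u}\le \frac12$. Dividing by $\lambda$ gives
\[
\frac{1}{2\lambda}=\frac{M\ell\sigma\sqrt{m_T+1}}{2\sqrt n},
\]
which is exactly the claimed bound. The inequality $-\ln(1-u)\le u/(1-u)$ is standard: it follows from $\ln x\le x-1$ applied with $x=1/(1-u)$.

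The only delicate points are checking that $2t\sigma^2<1$, which is what the factor $m_T+1$ rather than $m_T$ in $\lambda$ is designed to ensure, and justifying the reduction to the $W_T$-marginal. The latter holds because the integrand is a measurable function of $W_T$ only.
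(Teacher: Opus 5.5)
Your proposal is correct and follows essentially the same route as the paper: reduce to the $W_T$-marginal, use the chi-square moment generating function $(1-2t\sigma^2)^{-m_T/2}$ with $2t\sigma^2=1/(m_T+1)$, and bound the logarithm. Your inequality $-\ln(1-u)\le u/(1-u)=1/m_T$ is exactly the paper's bound $m_T\ln(1+1/m_T)\le 1$ written in a different variable.
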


\noindent\textbf{Proof:}
The quantity inside the expectation depends on $W_{1:T}$ only through
$W_T$. Writing $P_T$ for the marginal distribution of $W_T$ under $P$, we have
\[
\mathbb{E}_{W_{1:T}\sim P}
\exp\!\left(
\frac{\lambda^2\ell^2a_{W_{1:T}}^2}{2n}
\right)
=
\mathbb{E}_{W_T\sim P_T}
\exp\!\left(
\frac{\lambda^2\ell^2M^2\|W_T\|^2}{2n}
\right).
\]
Let
\[
t=\frac{M^2\lambda^2\ell^2}{2n}.
\]
Since $W_T\sim \mathcal N(0,\sigma^2 I_{m_T})$, we have
\[
\mathbb{E}_{W_T\sim P_T}
\exp\!\left(t\|W_T\|^2\right)
=
(1-2t\sigma^2)^{-m_T/2},
\]
when $2t\sigma^2<1$.
For the chosen value
\[
\lambda=\frac{\sqrt n}{M\ell\sigma\sqrt{m_T+1}},
\]
we have
\[
2t\sigma^2=\frac{1}{m_T+1}.
\]
Therefore,
\[
\mathbb{E}_{W_{1:T}\sim P}
\exp\!\left(
\frac{\lambda^2\ell^2a_{W_{1:T}}^2}{2n}
\right)
=
\left(\frac{m_T+1}{m_T}\right)^{m_T/2}.
\]
Hence,
\[
\frac{1}{\lambda}
\ln
\mathbb{E}_{W_{1:T}\sim P}
\exp\!\left(
\frac{\lambda^2\ell^2a_{W_{1:T}}^2}{2n}
\right)
=
\frac{M\ell\sigma\sqrt{m_T+1}}{\sqrt n}
\frac{m_T}{2}\ln\left(1+\frac{1}{m_T}\right).
\]
Using
\[
m_T\ln\left(1+\frac{1}{m_T}\right)\le 1,
\]
we obtain
\[
\frac{1}{\lambda}
\ln
\mathbb{E}_{W_{1:T}\sim P}
\exp\!\left(
\frac{\lambda^2\ell^2a_{W_{1:T}}^2}{2n}
\right)
\le
\frac{M\ell\sigma\sqrt{m_T+1}}{2\sqrt n}.
\]
\Endproof

Lemma~\ref{lem:pacbayes-gaussian-term} turns the PAC-Bayes logarithmic term into an explicit expression when the prior is Gaussian on the last layer. We now illustrate how the full derandomized bound simplifies in the linear case.

\section{Specialization of the results to linear hypothesis classes}
\label{sec:linear-classes}

\begin{thm}[Linear case of Theorem~\ref{combinedThm}]
\label{combinedThmLin}
Assume that each hypothesis $h_W\in\mathcal H$ is given by $h_W(x)=Wx$ with
$W\in\R^{C\times d}$, and let $m:=Cd$.
Assume that $\|x\|\le M$ for all $x\in X$. Let \(L(z,y)\) be a real-valued loss function such that, for every \(y\), the map \(z\mapsto L(z,y)\) is \(\ell\)-Lipschitz and \(\beta\)-smooth, and assume that \(L(0,y)\) is independent of \(y\).
Let
\[
\Theta:=\{\,W\in\R^{C\times d}:\ \|W\|_F\le R\,\}.
\]
Fix $\sigma>0$ and let $\delta\in(0,1]$.
Then, with probability at least $1-\delta$ over $S\sim D^n$, we have uniformly
for all $W\in\Theta$,
\[
\begin{aligned}
L_D(h_W)
\;\le\;
& L_S(h_W)
\;+\;
2\sqrt{2}\,\frac{\beta R\sigma C M^2}{\sqrt n}
\;+\;
4\beta C M^2\sigma^2\sqrt{\frac{2\ln(8/\delta)}{n}} \\
&\;+\;
\frac{M\ell\sqrt{m+1}}{2\sigma\sqrt n}\,
\|W\|_F^2
\;+\;
\frac{M\ell\sigma\sqrt{m+1}}{\sqrt n}
\left(\ln\frac{2}{\delta}+\frac12\right).
\end{aligned}
\]
\end{thm}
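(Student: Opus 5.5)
The plan is to apply Theorem~\ref{combinedThm} with $h_W(x)=Wx$, identifying $w=\mathrm{vec}(W)\in\R^m$ with $\|w\|_2=\|W\|_F$. We take the prior $P=\mathcal N(0,\sigma^2 I_m)$ and, for each $W\in\Theta$, the posterior $Q=\mathcal N(W,\sigma^2 I_m)$. We then evaluate each term of the bound explicitly.

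\textbf{Step 1 (PAC-Bayes terms).} For Gaussians with equal covariance, $\KL(Q\|P)=\|W\|_F^2/(2\sigma^2)$. Since $\|Wx\|\le M\|W\|_F$, Lemma~\ref{lem:pacbayes-gaussian-term} applies with $m_T=m$ and $a_W=M\|W\|_F$. With $\lambda=\sqrt n/(M\ell\sigma\sqrt{m+1})$, the $\Psi$ contribution is at most $M\ell\sigma\sqrt{m+1}/(2\sqrt n)$. Multiplying $\KL(Q\|P)+\ln(2/\delta)$ by $1/\lambda$ gives
\[
\frac{M\ell\sqrt{m+1}}{2\sigma\sqrt n}\|W\|_F^2+\frac{M\ell\sigma\sqrt{m+1}}{\sqrt n}\ln\tfrac2\delta .
\]
Adding the $\Psi$ term produces the $\bigl(\ln\frac2\delta+\frac12\bigr)$ factor.

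\textbf{Step 2 (uniform Jensen gap bound $B$).} The map $W\mapsto L(Wx,y)$ has Hessian $(I_C\otimes x)\nabla^2_z L\,(I_C\otimes x)^\top$, whose operator norm is at most $\beta\|x\|^2\le\beta M^2$. A second-order Taylor expansion around $\overline W$ (the first-order term vanishes in expectation) gives
\[
|\jen_Q[L]|\le \tfrac12\beta M^2\,\E\|w-\overline w\|^2=\tfrac12\beta M^2\sigma^2 m .
\]
This would give $B=\beta M^2 m/2$, which is larger than the claimed $B=\beta C M^2/2$ (the claimed value corresponds to $8\sigma^2B=4\beta CM^2\sigma^2$). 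To get $C$ instead of $Cd$, I would sharpen the estimate using the linear structure: $z=Wx$ has Gaussian law $\mathcal N(\overline Wx,\sigma^2\|x\|^2 I_C)$. So $\jen_Q[L]$ equals the Jensen gap of $z\mapsto L(z,y)$ under a $C$-dimensional Gaussian with variance $\sigma^2\|x\|^2$, which is bounded by $\frac12\beta\sigma^2 M^2 C$. Hence $B=\beta CM^2/2$.

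\textbf{Step 3 (flatness terms).} Corollary~\ref{cor:rad-jensen-gap} is not visible yet, so I expect $J_{S,\sigma}$ to involve Jacobians of the score map with respect to the parameters, paired with $\beta$, and $H_{S,\sigma}$ to involve Hessians, paired with $\ell$. For a linear model the score map has zero parameter Hessian, so $H_{S,\sigma}=0$. The Jacobian contributes through $\|x_i\|^2\le M^2$ and the output dimension $C$, giving $\beta^2J_{S,\sigma}\lesssim\beta^2C^2M^4$. The term $2c\frac{R\sigma}{\sqrt n}\sqrt{\beta^2 J}$ then becomes $2\sqrt2\,\beta R\sigma CM^2/\sqrt n$, which fixes the constant $c$ for this case, for example via an explicit Gaussian-process comparison. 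This is the main obstacle. The generic Talagrand constant $c$ is unspecified, so I would redo the Rademacher estimate directly. Using the Gaussian reduction from Step 2, the Jensen gap is $\phi_i(\overline W)$ with $\nabla_{\overline W}\phi_i=\bigl(\nabla_z L(\overline Wx_i)-\E\nabla_z L(Wx_i)\bigr)x_i^\top$. Since $\nabla_zL$ is $\beta$-Lipschitz, this gradient has size $\le\beta\,\E\|(W-\overline W)x_i\|\,M\le\beta\sigma M^2\sqrt C$. The Rademacher complexity over the ball of radius $R$ should then follow from a Lipschitz/vector-contraction argument, yielding the stated term with constant $2\sqrt2$.

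Finally, I would combine the three steps in Theorem~\ref{combinedThm}.
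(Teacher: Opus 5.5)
Your proposal is correct and follows essentially the paper's route: Gaussian prior and posterior with Lemma~\ref{lem:pacbayes-gaussian-term} for the PAC-Bayes term, $B=\beta CM^2/2$ from the Gaussian pushforward $Wx\sim\mathcal N(\overline W x,\sigma^2\|x\|^2 I_C)$ (which is the $T=1$ case of Theorem~\ref{jenUpperBound}), and an explicit Maurer vector-contraction bound on the linear Jensen-gap class in place of the unspecified constant $c$, as in Theorem~\ref{thm:rad-jensen-gap-linear-multiclass}. The one point to make precise is that the contraction must use the output-space Lipschitz constant of $\psi_i(z)=L(z,y_i)-\E_U L(z+Ux_i,y_i)$, namely $\beta\,\E\|Ux_i\|\le\beta\sigma\sqrt{C}\,\|x_i\|$, rather than your parameter-space bound $\beta\sigma M^2\sqrt C$, which would cost an extra factor $M$; with the output-space constant you get exactly $2\sqrt2\,\beta R\sigma CM^2/\sqrt n$, whereas the paper instead moves $\E_U$ and the Taylor integral outside the supremum and uses the random constants $\beta\|U_i\|$, reaching the same bound.
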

\noindent \textbf{Proof:}
Let the prior $P$ and posterior $Q$ be isotropic Gaussian distributions with the
same covariance matrix $\sigma^2 I_{m\times m}$, with prior mean $0$ and posterior mean $W$.
Then,
\[
\KL(Q\|P)=\frac{\|W\|_F^2}{2\sigma^2}.
\]

We apply Theorem~\ref{combinedThm} to the linear hypothesis class $h_W(x)=Wx$ with
parameter set $\Theta=\{W\in\R^{C\times d}:\ \|W\|_F\le R\}$ and with $\|x\|\le M$.
The data-dependent complexity term appearing
in Theorem~\ref{combinedThm} satisfies
\[
2c\,\frac{R\sigma}{\sqrt n}\,
\sqrt{\beta^2 J_{S,\sigma}+\ell^2 H_{S,\sigma}}
\;\le\;
2\sqrt{2}\,\frac{\beta R\sigma C M^2}{\sqrt n}
\]
(see Remark \ref{remlin} and Theorem \ref{thm:rad-jensen-gap-linear-multiclass} in Section~\ref{sec:jg-rademacher}).

Moreover, specializing the Jensen-gap upper bound in Theorem \ref{jenUpperBound} from Section \ref{sec:jg-uniform-upper} to the linear case
($T=1$) yields
\[
|\jen_Q[L(h_W(x),y)]|
\le
\frac{\sigma^2}{2}\,\beta\,C\,M^2.
\]
Thus, the uniform assumption $|\jen_Q[L(h_W(x),y)]|\le\sigma^2 B$ in
Theorem~\ref{combinedThm} holds with
\[
B=\frac{\beta C M^2}{2}.
\]
The corresponding concentration term therefore becomes
\[
8\sigma^2 B\sqrt{\frac{2\ln(8/\delta)}{n}}
=
4\beta C M^2\sigma^2\sqrt{\frac{2\ln(8/\delta)}{n}}.
\]
Finally, using Lemma~\ref{lem:pacbayes-gaussian-term} and the above value of the KL divergence, the PAC-Bayes complexity term is bounded by
\[
\frac{M\ell\sqrt{m+1}}{2\sigma\sqrt n}\|W\|_F^2
+
\frac{M\ell\sigma\sqrt{m+1}}{\sqrt n}\ln\frac{2}{\delta}
+
\frac{M\ell\sigma\sqrt{m+1}}{2\sqrt n}.
\]
\Endproof

\begin{rem}
\label{RemOnCequal1}
    In the scalar-output case \(C=1\), Theorem~\ref{combinedThmLin} is also true but
can be sharpened by replacing Maurer's vector-contraction inequality with the
scalar contraction lemma in the analysis. This removes the \(\sqrt{2}\) constant in the Rademacher complexity term.
Consequently, the first complexity term in
Theorem~\ref{combinedThmLin} becomes
\[
2\,\frac{\beta R\sigma M^2}{\sqrt n}
\]
instead of
\[
2\sqrt{2}\,\frac{\beta R\sigma C M^2}{\sqrt n}.
\]
\end{rem}

We now want to apply this result to the multi-class unhinged loss function. We first define a proper surrogate version of the loss that can be used within our generalization bounds.

\begin{lem}
\label{unhSurrogate}
Let \(C>2\) and \( r > 0 \). Assume that \( \|z\| \leq r \). Define the multi-class unhinged surrogate as
\[
L_r^{\mathrm{unh}}(z, y) := a\left(-z_y + \frac{1}{C} \sum_{k=1}^C z_k\right) + b
\]
with
\[
a = \frac{\sqrt{C}}{r\big(\sqrt{C-1}-\sqrt{\frac{C-2}{2}}\big)}, 
\qquad
b = a r \sqrt{\frac{C-1}{C}} =\frac{1}{1-\sqrt{\frac{C-2}{2(C-1)}}}.
\]
Then, we have
\begin{enumerate}
\item \( \inf_{\|z\| \leq r} L_r^{\mathrm{unh}}(z, y) = 0 \),
\item \( L^{0\text{-}1}(z, y) \leq L_r^{\mathrm{unh}}(z, y) \),
\item The Lipschitz constant of \( L_r^{\mathrm{unh}}(z, y) \) is
\[
\frac{1}{r\left(1-\sqrt{\frac{C-2}{2(C-1)}}\right)}.
\]
\end{enumerate}
\end{lem}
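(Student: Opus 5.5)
The plan is to write the surrogate in terms of a single linear functional of \(z\). Set \(v_y:=e_y-\frac1C\1\in\R^C\), so that
\[
-z_y+\frac1C\sum_{k=1}^C z_k=-\langle z,v_y\rangle
\qquad\text{and}\qquad
L_r^{\mathrm{unh}}(z,y)=b-a\langle z,v_y\rangle .
\]
A direct computation gives \(\|v_y\|^2=(1-\frac1C)^2+(C-1)\frac1{C^2}=\frac{C-1}{C}\). All three items then reduce to maximizing \(\langle z,v_y\rangle\) over the ball \(\|z\|\le r\), in one case under an extra linear constraint, plus some algebra with \(a\) and \(b\).

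For item 1, Cauchy--Schwarz gives \(\max_{\|z\|\le r}\langle z,v_y\rangle=r\sqrt{(C-1)/C}\), attained at \(z=rv_y/\|v_y\|\). Hence \(\inf L_r^{\mathrm{unh}}=b-ar\sqrt{(C-1)/C}\), which is \(0\) by the definition of \(b\). For item 3, the loss is affine in \(z\), so its Lipschitz constant is \(a\|v_y\|=a\sqrt{(C-1)/C}\). Substituting \(a\) and dividing numerator and denominator by \(\sqrt{C-1}\) yields \(\frac{1}{r(1-\sqrt{(C-2)/(2(C-1))})}\). The same manipulation gives the second expression for \(b\).

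Item 2 is the main step. When the prediction is correct, \(L^{0\text{-}1}=0\le L_r^{\mathrm{unh}}\) by item 1. When it is wrong, with ties counted as errors, there is some \(j\neq y\) with \(z_y\le z_j\), i.e. \(\langle z,e_y-e_j\rangle\le0\). It therefore suffices to show
\[
\max\{\langle z,v_y\rangle:\ \|z\|\le r,\ \langle z,e_y-e_j\rangle\le 0\}\le \frac{b-1}{a}.
\]
Since \(\langle v_y,e_y-e_j\rangle=1>0\), the unconstrained maximizer violates the half-space constraint. The constrained maximum is therefore \(r\) times the norm of the projection of \(v_y\) onto the hyperplane \((e_y-e_j)^\perp\). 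That projection is \(v_y-\frac12(e_y-e_j)\), since \(\|e_y-e_j\|^2=2\). Its squared norm is
\[
\frac{C-1}{C}-1+\frac12=\frac{C-2}{2C},
\]
so the constrained maximum equals \(r\sqrt{(C-2)/(2C)}\).

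It remains to verify \(ar\big(\sqrt{(C-1)/C}-\sqrt{(C-2)/(2C)}\big)=1\), which holds exactly by the choice of \(a\). This gives \(L_r^{\mathrm{unh}}\ge b-(b-1)=1=L^{0\text{-}1}\) on misclassified points. The hypothesis \(C>2\) keeps \(\sqrt{(C-2)/(2C)}\) real and the constants positive.

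The only delicate point is the projection argument in item 2. I would justify it via the KKT conditions: the maximizer lies on both the sphere and the hyperplane, because the constraint is active. Alternatively, use Cauchy--Schwarz after writing \(v_y=\big(v_y-\frac12(e_y-e_j)\big)+\frac12(e_y-e_j)\) and noting that the second component contributes \(\frac12\langle z,e_y-e_j\rangle\le 0\).
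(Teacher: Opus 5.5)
Your proof is correct, and for items 1 and 2 it takes a cleaner route than the paper's. The paper computes both infima by Lagrangian optimization. For item 2 it first argues, by an informal perturbation argument, that the constraint $z_k\ge z_y$ is active at a minimizer. It then averages over permutations of the remaining $C-2$ coordinates, which reduces the problem to minimizing $\frac{C-2}{C}(t-s)$ subject to $2s^2+(C-2)t^2\le r^2$, solved with Lagrange multipliers.

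You instead write the loss as $b-a\langle z,v_y\rangle$ and split $v_y=p+\tfrac12(e_y-e_j)$ with $p\perp(e_y-e_j)$. On the misclassification region this gives, in one line, $\langle z,v_y\rangle\le\langle z,p\rangle\le r\|p\|=r\sqrt{(C-2)/(2C)}$. Item 1 is likewise a single application of Cauchy--Schwarz.

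What your route buys is full rigor with no need to establish that a minimizer exists, that the constraint is active, or that the symmetrization is valid. It also makes clear that only an upper bound on the constrained maximum is needed. For that reason the KKT justification you mention is unnecessary; your Cauchy--Schwarz decomposition already settles the point.

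What the paper's symmetrization-and-reduction template buys is reuse: it applies the same argument to the nonlinear multiclass sigmoid surrogate of Lemma~\ref{lem:surrogate-multiclass-sigmoid-loss}. There the objective is not a linear functional of $z$, so your projection trick does not apply directly.

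Item 3 and the algebra identifying $a$ and $b$ are the same in both proofs.
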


\noindent \textbf{Proof:} Let us write
\[
L^{\mathrm{unh}}(z, y) = a\left(-z_y + \frac{1}{C} \sum_{k=1}^C z_k\right) + b
\]
for some constants \( a, b > 0 \).

We want \( \inf_{\|z\| \leq r} L^{\mathrm{unh}}(z, y) = 0 \). This infimum can be computed by standard Lagrangian optimization. The solution is
\[
\inf_{\|z\| \leq r} L^{\mathrm{unh}}(z, y) = -a r \sqrt{\frac{C-1}{C}} + b.
\]
To make the infimum equal to 0, we set:
\[
b = a r \sqrt{\frac{C-1}{C}}.
\]

We now turn our attention towards upper bounding the 0-1 loss. Suppose there exists \( k \neq y \) such that \( z_k \geq z_y \), so the 0-1 loss is $1$ at this value of $(z,y)$. We want to compute the infimum of $-z_y + \frac{1}{C} \sum_{k=1}^C z_k$ under the constraints $\|z\|\leq r$ and $z_k\geq z_y$. At a global minimizer, we must actually have $z_k=z_y$ (this can be seen by a simple perturbation argument). We may therefore restrict the optimization to the set
\[
B(0,r)\cap \{z:\, z_k=z_y\}.
\]

Both the objective and the constraint are invariant under permutations of the coordinates \(\{z_j\}_{j\neq y,k}\). Therefore, if \(z\) is a minimizer, any permutation of these coordinates is also a minimizer. Averaging over all such permutations gives another feasible point satisfying
\[
z_j=t \qquad (\forall j\neq y,k)
\]
for some \(t\in\mathbb{R}\), and
\[
z_y=z_k=s
\]
for some \(s\in\mathbb{R}\). The optimization problem thus reduces to
\[
\min_{s,t\in\mathbb{R}}
\frac{C-2}{C}(t-s)
\]
subject to
\[
2s^2+(C-2)t^2\le r^2.
\]
The solution of this problem can be obtained using Lagrange multipliers. One obtains
\[
t-s = -r\sqrt{\frac{C}{2(C-2)}}.
\]
Therefore,
\[
\inf_{\substack{\|z\|\le r\\ z_k\ge z_y}}
\left(-z_y+\frac1C\sum_{i=1}^C z_i\right)
=
-r\sqrt{\frac{C-2}{2C}}
\]
and
\[
L^{\mathrm{unh}}(z,y)
\ge
-a r\sqrt{\frac{C-2}{2C}}+b.
\]
Substituting \( b = a r \sqrt{\frac{C-1}{C}} \), we obtain
\[
L^{\mathrm{unh}}(z, y)
\ge
a r
\left(
\sqrt{\frac{C-1}{C}}
-
\sqrt{\frac{C-2}{2C}}
\right).
\]
Therefore, \(L^{0\text{-}1}(z,y) \le L^{\mathrm{unh}}(z,y)\) provided
\[
a \ge
\frac{1}{r\left(\sqrt{\frac{C-1}{C}}-\sqrt{\frac{C-2}{2C}}\right)}
=
\frac{\sqrt{C}}{r\big(\sqrt{C-1}-\sqrt{\frac{C-2}{2}}\big)}.
\]
With the choice of \(a\) in the statement of the Lemma, we thus indeed have
\[
L^{0\text{-}1}(z, y) \leq L^{\mathrm{unh}}(z, y).
\]
Finally, the gradient of the function 
$
-z_y + \frac{1}{C} \sum_{k=1}^C z_k
$
has norm
$
\sqrt{\frac{C-1}{C}},
$
so the Lipschitz constant of \( L^{\mathrm{unh}} \) is
\[
\sup_z\| \nabla_z L^{\mathrm{unh}}(z,y) \|
=
a\sqrt{\frac{C-1}{C}} .
\]
Simplifying concludes the proof.
\Endproof

The multi-class unhinged loss is typically studied in the context of learning with uniform label noise; see, for example, \citet{lemirepaquin2026symmetrization}. We therefore state below a result for this setting.

\begin{prop}
\label{unhprop}
Let \(C>2\) be the number of classes. Define
\[
b_C:=\frac{1}{1-\sqrt{\frac{C-2}{2(C-1)}}}.
\]
Assume that \( \|x\| \leq M \).
Furthermore, assume that each hypothesis \( h_W \in \mathcal{H} \) is given by \( h_W(x) = Wx \), where \( W \in \mathbb{R}^{C \times d} \). We will simplify notations by replacing $h_W$ with $W$.
Let $\mu^{unh}_S$ be the unhinged multi-class data centroid (see \citet{lemirepaquin2026symmetrization}), defined by:
\begin{center}
    $\mu^{unh}_S:=\frac{1}{N}\sum_{i=1}^N c_{y_i}x_i^\top$,
\end{center}
where $c_y$ is a column vector having its $y^{th}$ entry given by $\frac{C-1}{C}$ and every other entry given by $\frac{-1}{C}$. Given a distribution $D$ over $X \times Y$, a probability of corruption $p \in [0, 1)$ and a confidence parameter $\delta \in (0, 1]$,  with probability at least $1 - \delta$ over the choice of $\overline{S} \sim \overline{D}^n$, we have:
\[
\begin{aligned}
L^{0\text{-}1}_{D}(\mu^{unh}_{\overline{S}}) 
&\leq b_C\bigg[1-\sqrt{\frac{C}{C-1}}\frac{\|\mu^{\mathrm{unh}}_{\overline{S}}\|}{(1-p)M}\bigg]
+ \tilde{O}\bigg(\frac{b_C\sqrt{dC}}{(1-p)\sqrt{n}}\bigg).
\end{aligned}
\]
\end{prop}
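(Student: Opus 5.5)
Since the unhinged surrogate $L_r^{\mathrm{unh}}$ of Lemma~\ref{unhSurrogate} is affine in $z$, it is $\beta$-smooth with $\beta=0$. The plan is therefore to apply Theorem~\ref{combinedThmLin}, or rather Theorem~\ref{combinedThm}, where every Jensen-gap term vanishes: for an affine loss composed with a linear predictor, $\jen_Q[L(Wx,y)]=0$ exactly, so $\Omega_{L,Q,D,S}(n)=0$. The setup is as follows.

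\emph{Data and constraint set.} Work with the corrupted sample $\overline S\sim\overline D^n$ and restrict to the ball $\Theta=\{W:\|W\|_F\le R\}$. Take $R=1/M$, so that $\|Wx\|\le \|W\|_F\|x\|\le 1=:r$ on $\Theta$. Lemma~\ref{unhSurrogate} with $r=1$ then gives $\ell=a\sqrt{(C-1)/C}=b_C$ and $L^{0\text{-}1}\le L_1^{\mathrm{unh}}$ on the range of the predictors. The quantity $L(0,y)=b$ is independent of $y$, so Proposition~\ref{ah} applies. Using the Gaussian prior and posterior with Lemma~\ref{lem:pacbayes-gaussian-term}, with $\sigma$ of order a constant times $R$, the complexity term becomes $\tilde O(M\ell\sqrt{dC}\,(R^2/\sigma+\sigma)/\sqrt n)=\tilde O(b_C\sqrt{dC}/\sqrt n)$. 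The result is a bound on $L^{\mathrm{unh}}_{\overline D}(W)$ by $L^{\mathrm{unh}}_{\overline S}(W)$ plus this term, uniformly over $\Theta$.

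\emph{Removing the noise.} Under symmetric (uniform) label noise with rate $p$, the unhinged loss is symmetric: $\sum_{k}\big(-z_k+\frac1C\sum_j z_j\big)=0$. The standard symmetric-loss identity therefore gives
\[
L_{\overline D}(W)=(1-p')L_D(W)+\text{const},
\]
with the constant independent of $W$. Carrying the normalization of \citet{lemirepaquin2026symmetrization}, this yields $L_D^{\mathrm{unh}}(W)-b=(L_{\overline D}^{\mathrm{unh}}(W)-b)/(1-p)$. Combining with the previous step, $L^{0\text{-}1}_D(W)\le b+(L^{\mathrm{unh}}_{\overline S}(W)-b)/(1-p)+\tilde O(b_C\sqrt{dC}/((1-p)\sqrt n))$.

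\emph{Evaluating at the centroid.} Write $L^{\mathrm{unh}}_{\overline S}(W)-b=-a\langle W,\mu^{\mathrm{unh}}_{\overline S}\rangle_F$, using $-z_y+\frac1C\sum_k z_k=-c_y^\top z$. Choose $W=R\,\mu^{\mathrm{unh}}_{\overline S}/\|\mu^{\mathrm{unh}}_{\overline S}\|_F\in\Theta$, which is the normalized centroid and induces the same classifier as $\mu^{\mathrm{unh}}_{\overline S}$, so its 0-1 risk is that of the centroid. This gives $L_{\overline S}-b=-aR\|\mu\|=-a\|\mu\|/M$. Since $a=b_C\sqrt{C/(C-1)}$ when $r=1$, we obtain
\[
b-\frac{a\|\mu\|}{(1-p)M}=b_C\Big[1-\sqrt{\tfrac{C}{C-1}}\tfrac{\|\mu\|}{(1-p)M}\Big],
\]
which is the claimed expression.

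\textbf{Main obstacle.} The delicate step is the noise-transfer identity and its constants, specifically obtaining exactly the factor $1/(1-p)$ on the deviation from $b$, both in the empirical term and in the $\tilde O$ term. I would derive it directly from the definition of symmetric noise, $\overline y=y$ with probability $1-p$ and uniform otherwise, using linearity of the unhinged loss in the label encoding $c_y$. I would then check that $\E[c_{\overline y}\mid y]=(1-p)c_y$ up to the convention for $p$ in the cited paper.
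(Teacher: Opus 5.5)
Your proposal is correct and follows essentially the same route as the paper. Both apply Theorem~\ref{combinedThmLin} with $\beta=0$ to the unhinged surrogate on the noisy sample, use $\sum_y L^{\mathrm{unh}}(z,y)=C\,b_C$ to pass from $\overline D$ to $D$ with the factor $1/(1-p)$, and evaluate at the normalized centroid. The paper's convention is that a corrupted label is uniform over all $C$ classes, so the factor is exactly $1-p$. The only difference is a harmless rescaling. The paper takes $r=M$ and $W=\mu^{\mathrm{unh}}_{\overline S}/\|\mu^{\mathrm{unh}}_{\overline S}\|$, so $\ell=b_C/M$, $\|W\|_F=1$ and $\sigma$ is a constant. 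You take $r=1$, $\|W\|_F=1/M$ and $\sigma\propto 1/M$, which gives the identical final expression.
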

\noindent \textbf{Proof:} Consider the multi-class unhinged surrogate $L_r^{\mathrm{unh}}(z, y)$ with $r=M$. We will omit the subscript $r$ to simplify notation. Then, for any $\overline{S}$,
\[\begin{aligned}
    L^{0\text{-}1}_{D}(\mu^{unh}_{\overline{S}})=L^{0\text{-}1}_{D}\bigg(\frac{\mu^{unh}_{\overline{S}}}{\|\mu^{unh}_{\overline{S}}\|}\bigg)&\leq L^{\mathrm{unh}}_D\bigg(\frac{\mu^{unh}_{\overline{S}}}{\|\mu^{unh}_{\overline{S}}\|}\bigg)\\
    &=\frac{1}{1-p}\bigg[L^{\mathrm{unh}}_{\overline{D}}\bigg(\frac{\mu^{unh}_{\overline{S}}}{\|\mu^{unh}_{\overline{S}}\|}\bigg)-\frac{p}{C}\mathop{\mathbb{E}}_{x\sim D_x }\sum_{y=1}^C L^{\mathrm{unh}}\bigg(\frac{\mu^{unh}_{\overline{S}}}{\|\mu^{unh}_{\overline{S}}\|}x,y\bigg)\bigg]\\
    &=\frac{1}{1-p}\bigg[L^{\mathrm{unh}}_{\overline{D}}\bigg(\frac{\mu^{unh}_{\overline{S}}}{\|\mu^{unh}_{\overline{S}}\|}\bigg)-p b_C\bigg]. 
\end{aligned}\]
For any $0<\sigma<1$ and with $m=dC$, applying Theorem \ref{combinedThmLin} with $\beta=0$ and $\ell=b_C/M$, we obtain that with probability at least \( 1 - \delta \) over the draw \( \overline{S} \sim \overline{D}^n \), 
\[\displaystyle
\begin{aligned}
L^{0\text{-}1}_{D}(\mu^{unh}_{\overline{S}}) 
&\leq 
\frac{1}{1-p}\bigg[L^{\mathrm{unh}}_{\overline{S}}\bigg(\frac{\mu^{unh}_{\overline{S}}}{\|\mu^{unh}_{\overline{S}}\|}\bigg)-p b_C\bigg]  
+ \frac{b_C\sqrt{dC+1}}{2(1-p)\sigma\sqrt{n}}  \\
&\quad+
\frac{b_C\sigma\sqrt{dC+1}}{(1-p)\sqrt{n}}
\left(\ln\frac{2}{\delta}+\frac12\right).
\end{aligned}
\]
The proof is completed by exploiting
\[
L_{\overline{S}}^{\mathrm{unh}}(W) = -\frac{b_C}{M}\sqrt{\frac{C}{C-1}}\tr(\mu^{unh}_{\overline{S}}W^T) + b_C
\]
with \(W=\frac{\mu^{unh}_{\overline{S}}}{\|\mu^{unh}_{\overline{S}}\|}\) and simplifying. \Endproof

We can also apply our results to the problem of linear regression.

\begin{coro}[Linear regression with absolute loss]
\label{cor:absolute-loss-linear}
Let 
\(h_w(x)=\langle w,x\rangle\), \(\|x\|\le M\) and
\[
L(z,y)=|z-y|.
\]
Assume that \(|y|\le Y_{\max}\) almost surely.
Let \(P=\mathcal N(0,\sigma^2 I_d)\) with \(\sigma>0\).
Then, with probability at least \(1-\delta\) over \(S\sim D^n\), for all
probability distributions \(Q\) on \(\mathbb R^d\),
\[
\begin{aligned}
\E_{w\sim Q}L_D(h_w)
&\le
\E_{w\sim Q}L_S(h_w)
+
\frac{M\sigma\sqrt{d+1}}{\sqrt n}
\left[
\KL(Q\|P)
+
\ln\frac{2}{\delta}
+
\frac12
\right]
+
Y_{\max}\sqrt{\frac{\ln(2/\delta)}{2n}}.
\end{aligned}
\]
In particular, the explicit Gaussian-prior complexity term has a square root dependence on the dimension \(d\).
\end{coro}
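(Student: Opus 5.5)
The bound cannot come from Proposition~\ref{ah} directly, because \(L(0,y)=|y|\) depends on \(y\). I would therefore split the loss into a part that vanishes at \(z=0\) and a label-only part. Set
\[
G(z,y):=|z-y|-|y|,\qquad L(z,y)=G(z,y)+|y|.
\]
By the reverse triangle inequality, \(z\mapsto G(z,y)\) is \(1\)-Lipschitz for every \(y\), and \(G(0,y)=0\) is independent of \(y\). Since \(|y|\) does not depend on \(w\), for every \(Q\) we have
\[
\E_{w\sim Q}L_D(h_w)-\E_{w\sim Q}L_S(h_w)
=
\Big(\E_{w\sim Q}G_D(h_w)-\E_{w\sim Q}G_S(h_w)\Big)
+\Big(\E_{D}|y|-\tfrac1n\textstyle\sum_{i=1}^n|y_i|\Big).
\]
I would bound the two brackets on two events, each of probability at least \(1-\delta/2\), and conclude with a union bound.

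For the first bracket, I would apply Theorem~\ref{Alquier} to the loss \(G\) with confidence \(\delta/2\). This gives, uniformly over all \(Q\),
\[
\E_Q G_D\le \E_Q G_S+\frac{1}{\lambda}\Big[\KL(Q\|P)+\ln\tfrac{2}{\delta}+\Psi_{G,P,D}(\lambda,n)\Big].
\]
Next I would apply Proposition~\ref{ah} with \(\ell=1\) and \(a_w=M\|w\|\), which is valid because \(|\langle w,x\rangle|\le M\|w\|\). Then I would apply Lemma~\ref{lem:pacbayes-gaussian-term} with \(T=1\), \(m_T=d\) and \(\lambda=\sqrt n/(M\sigma\sqrt{d+1})\). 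The lemma yields
\[
\frac1\lambda\Psi_{G,P,D}(\lambda,n)\le \frac{M\sigma\sqrt{d+1}}{2\sqrt n}.
\]
Since \(1/\lambda=M\sigma\sqrt{d+1}/\sqrt n\), the first bracket is at most
\[
\frac{M\sigma\sqrt{d+1}}{\sqrt n}\Big[\KL(Q\|P)+\ln\tfrac2\delta+\tfrac12\Big].
\]
Because \(\lambda\) is fixed in advance and does not depend on \(Q\) or \(S\), Theorem~\ref{Alquier} applies without any grid argument.

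For the second bracket, the variables \(|y_i|\in[0,Y_{\max}]\) are i.i.d. The one-sided Hoeffding inequality at confidence \(\delta/2\) gives
\[
\E_D|y|-\frac1n\sum_{i=1}^n|y_i|\le Y_{\max}\sqrt{\frac{\ln(2/\delta)}{2n}}.
\]
Intersecting the two events and adding the bounds gives the statement for all \(Q\) simultaneously. The square-root dependence on \(d\) then comes directly from the \(\sqrt{m_T+1}\) factor in Lemma~\ref{lem:pacbayes-gaussian-term}.

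The only delicate point is the decomposition itself, which makes Proposition~\ref{ah} applicable. Everything after it is routine bookkeeping of the \(\delta/2\) splits.
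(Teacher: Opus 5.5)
Your proposal is correct and follows the paper's proof essentially step for step. The paper also applies Theorem~\ref{Alquier}, Proposition~\ref{ah} and Lemma~\ref{lem:pacbayes-gaussian-term} at confidence \(\delta/2\) to the shifted loss \(\widetilde L(z,y)=|z-y|-|y|\), which is your \(G\), and then controls the label term \(\E_D|y|-\frac1n\sum_i|y_i|\) with Hoeffding's inequality and a union bound. The details you add, namely the reverse-triangle-inequality Lipschitz check and the observation that \(\lambda\) is fixed independently of \(S\) and \(Q\), are correct points that the paper leaves implicit.
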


\noindent\textbf{Proof.}
The absolute loss is \(1\)-Lipschitz in its prediction argument and Proposition~\ref{ah} applies to $\widetilde{L}(z,y)=L(z,y)-L(0,y)$ with
\(a_w=M\|w\|\). Applying Theorem~\ref{Alquier} and
Lemma~\ref{lem:pacbayes-gaussian-term} to \(\widetilde L\), with confidence parameter \(\delta/2\), we obtain that, with probability at least \(1-\delta/2\) over \(S\sim D^n\), for all probability distributions \(Q\) on \(\mathbb R^d\),
\[
\begin{aligned}
\E_{w\sim Q}L_D(h_w)
&\le
\E_{w\sim Q}L_S(h_w)
+
\frac{M\sigma\sqrt{d+1}}{\sqrt n}
\left[
\KL(Q\|P)
+
\ln\frac{2}{\delta}
+
\frac12
\right]
+
\E_D[|y|]-\frac1n\sum_{i=1}^n |y_i|.
\end{aligned}
\]
Applying Hoeffding's inequality with a union bound completes the proof. \Endproof

Since the absolute loss is not smooth, our derandomized framework does not apply to it directly. Nevertheless, it can be
approached by the following smooth family of Huber losses, which converges uniformly to the absolute loss as the smoothing parameter
tends to zero.

\begin{coro}[Linear regression with Huber losses]
\label{cor:normalized-huber-loss-linear}
Let \(h_w(x)=\langle w,x\rangle\), assume that \(\|x\|\le M\), and fix
\(\rho>0\). Define the Huber loss by
\[
L_\rho(z,y)
:=
\begin{cases}
\dfrac{(z-y)^2}{2\rho},
& \text{if } |z-y|\le \rho,\\[1.2ex]
|z-y|-\dfrac{\rho}{2},
& \text{if } |z-y|>\rho.
\end{cases}
\]
Assume that \(|y|\le Y_{\max}\) almost surely and define
\[
Y_{\rho,\max}
:=
\sup_{|y|\le Y_{\max}} L_\rho(0,y).\]
Let
\[
\Theta:=\{w\in\mathbb R^d:\|w\|\le R\}
\]
and let \(P=\mathcal N(0,\sigma^2 I_d)\) with \(\sigma>0\).
Then, with probability at least \(1-\delta\) over \(S\sim D^n\), we have
uniformly for all \(w\in\Theta\),
\[
\begin{aligned}
L_{\rho,D}(h_w)
\le\;&
L_{\rho,S}(h_w)
+
\frac{2\,R\sigma M^2}{\rho\sqrt n}
+
\frac{4M^2\sigma^2}{\rho}
\sqrt{\frac{2\ln(16/\delta)}{n}}
+
Y_{\rho,\max}\sqrt{\frac{\ln(2/\delta)}{2n}}
\\
&+
\frac{M\sqrt{d+1}}{2\sigma\sqrt n}\|w\|^2
+
\frac{M\sigma\sqrt{d+1}}{\sqrt n}
\left(\ln\frac{4}{\delta}+\frac12\right).
\end{aligned}
\]
\end{coro}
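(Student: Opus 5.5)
The plan is to mirror the proof of Corollary~\ref{cor:absolute-loss-linear}, but to route through the derandomized linear bound Theorem~\ref{combinedThmLin} instead of Theorem~\ref{Alquier}. The obstruction is the hypothesis of Theorem~\ref{combinedThmLin} that \(L(0,y)\) be independent of \(y\), which fails for \(L_\rho\). So I first work with the shifted loss
\[
\widetilde L_\rho(z,y):=L_\rho(z,y)-L_\rho(0,y).
\]
For each \(y\), \(z\mapsto \widetilde L_\rho(z,y)\) is \(1\)-Lipschitz, since \(|\partial_z L_\rho|\le 1\). It is also \((1/\rho)\)-smooth, since its derivative \(\mathrm{clip}((z-y)/\rho,-1,1)\) is \(1/\rho\)-Lipschitz. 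Finally \(\widetilde L_\rho(0,y)=0\) for all \(y\).

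I then apply Theorem~\ref{combinedThmLin} to \(\widetilde L_\rho\) with \(C=1\), \(\ell=1\), \(\beta=1/\rho\), \(m=d\), and confidence \(\delta/2\). I use Remark~\ref{RemOnCequal1} to drop the \(\sqrt2\) in the Rademacher term, which gives \(2R\sigma M^2/(\rho\sqrt n)\). The Jensen-gap concentration term becomes \(4M^2\sigma^2\rho^{-1}\sqrt{2\ln(16/\delta)/n}\). The PAC-Bayes terms become \(\frac{M\sqrt{d+1}}{2\sigma\sqrt n}\|w\|^2+\frac{M\sigma\sqrt{d+1}}{\sqrt n}(\ln\frac4\delta+\frac12)\). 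These match the statement exactly once \(\delta\) is replaced by \(\delta/2\): \(8/(\delta/2)=16/\delta\) and \(2/(\delta/2)=4/\delta\). This holds uniformly over \(w\in\Theta\) with probability \(1-\delta/2\).

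Next I translate back. Since \(\widetilde L_{\rho,D}(h_w)-\widetilde L_{\rho,S}(h_w)=L_{\rho,D}(h_w)-L_{\rho,S}(h_w)-\big(\E_D L_\rho(0,y)-\frac1n\sum_i L_\rho(0,y_i)\big)\), the original risks differ from the shifted ones by a \(w\)-independent term. This term involves only the i.i.d. variables \(L_\rho(0,y_i)\in[0,Y_{\rho,\max}]\). One-sided Hoeffding with confidence \(\delta/2\) bounds it by \(Y_{\rho,\max}\sqrt{\ln(2/\delta)/(2n)}\). A union bound over the two events then yields the claim.

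The step I expect to need care is the Jensen-gap constant \(B\) inside Theorem~\ref{combinedThmLin}. There it comes from Theorem~\ref{jenUpperBound} via \(|\jen_Q|\le \frac{\sigma^2}{2}\beta C M^2\), which requires only \(\beta\)-smoothness, so the \(C^1\) Huber loss with Lipschitz derivative should suffice. The Hessian-dependent \(H_{S,\sigma}\) term vanishes because the predictor is linear, as in Remark~\ref{remlin}. I would check that those results are stated for \(\beta\)-smooth (Lipschitz-gradient) losses rather than \(C^2\) ones. If they assume \(C^2\), I would argue by mollifying \(L_\rho\) and passing to the limit, since all bounds are stable under uniform convergence with the same \(\ell,\beta\).
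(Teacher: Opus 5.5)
Your proposal is correct and follows the paper's proof step for step: shift to \(\widetilde L_\rho(z,y)=L_\rho(z,y)-L_\rho(0,y)\), apply Theorem~\ref{combinedThmLin} at confidence \(\delta/2\) with \(C=1\) (via Remark~\ref{RemOnCequal1}), \(\ell=1\) and \(\beta=1/\rho\), then bound the \(w\)-independent term \(\E_D[L_\rho(0,y)]-\frac1n\sum_i L_\rho(0,y_i)\) by Hoeffding and take a union bound. The mollification fallback is unnecessary: Lemma~\ref{JensenLemmaNN} and Theorem~\ref{thm:rad-jensen-gap-linear-multiclass} only assume a differentiable loss with \(\beta\)-Lipschitz gradient, which the Huber loss satisfies.
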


\noindent \textbf{Proof:}
For every \(y\), the map \(z\mapsto L_\rho(z,y)\) is \(1\)-Lipschitz and
\(1/\rho\)-smooth. We can apply
Theorem~\ref{combinedThmLin} with \(C=1\) (see Remark \ref{RemOnCequal1}), \(\ell=1\) and \(\beta=1/\rho\) to $\widetilde{L}_{\rho}(z,y)=L_{\rho}(z,y)-L_{\rho}(0,y)$. We then obtain that, with probability at least \(1-\delta/2\) over \(S\sim D^n\), uniformly for all \(w\in\Theta\),
\[
\begin{aligned}
L_{\rho,D}(h_w)
\le\;&
L_{\rho,S}(h_w)
+
\frac{2\,R\sigma M^2}{\rho\sqrt n}
+
\frac{4M^2\sigma^2}{\rho}
\sqrt{\frac{2\ln(16/\delta)}{n}}
+
\E_D[L_{\rho}(0,y)]-\frac1n\sum_{i=1}^n L_{\rho}(0,y_i)
\\
&+
\frac{M\sqrt{d+1}}{2\sigma\sqrt n}\|w\|^2
+
\frac{M\sigma\sqrt{d+1}}{\sqrt n}
\left(\ln\frac{4}{\delta}+\frac12\right).
\end{aligned}
\]
Applying Hoeffding's inequality with a union bound completes the proof.
\Endproof

\section{Generalization bounds for bounded loss functions from the PAC-Bayes framework}
\label{sec:pb-bounded}
When the loss is uniformly bounded, one can instead invoke standard bounded-loss PAC-Bayes inequalities, leading to a cleaner square root complexity term. Using the same proofs as in the case of unbounded loss functions, but replacing Theorem \ref{Alquier} with Theorem $31.1$ from \cite{ShalevShwartz2014}, we get the following result for bounded loss functions:
\begin{thm}
\label{bounded}
Assume that the loss \(L(z,y)\) takes values in an interval \([A',A]\subset\mathbb{R}\) on the domain \(\mathbb{R}^C\), and that, for every \(y\), the map \(z\mapsto L(z,y)\) is \(\ell\)-Lipschitz and \(\beta\)-smooth.
Let \(D\) be a distribution over \(X\times Y\) and let \(S=\{(x_i,y_i)\}_{i=1}^n\sim D^n\).
Let \(\mathcal H=\{h_w:\ w\in\R^m\}\) be a hypothesis class.
Let
\[
\Theta := \{\overline w\in\R^m:\ \|\overline w\|_2\le R\},
\]
and fix \(\sigma>0\). For any \(\overline w\in\Theta\), define a posterior $Q:=\mathcal N(\overline w,\sigma^2 I_m)$.
Assume that the hypotheses of Corollary~\ref{cor:rad-jensen-gap} hold,
and assume moreover that there exists \(B>0\) such that, for all \((x,y)\) and all
\(\overline w\in\Theta\),
\[
|\jen_{Q}\!\big[L(h_w(x),y)\big]|\le\sigma^2 B .
\]
Let \(P\) be any prior distribution on \(\R^m\), and let \(\delta\in(0,1)\).
Then, with probability at least \(1-\delta\) over \(S\sim D^n\), we have uniformly for all
\(\overline w\in\Theta\),
\[
\begin{aligned}
L_D(h_{\overline w})
\;\le\;&
L_S(h_{\overline w})
\;+\;
2c\,\frac{R\sigma}{\sqrt n}\,
\sqrt{\beta^2 J_{S,\sigma}+\ell^2 H_{S,\sigma}}
\;+\;
8\sigma^2 B\,\sqrt{\frac{2\ln(8/\delta)}{n}}
\\
&\;+\;
(A-A')\sqrt{\frac{\KL(Q\|P)+\ln\!\big(\tfrac{2n}{\delta}\big)}{2(n-1)}}
\;,
\end{aligned}
\]
where \(c>0\) is an absolute constant, and \(J_{S,\sigma}\) and \(H_{S,\sigma}\) are defined as in
Corollary~\ref{cor:rad-jensen-gap}.
\end{thm}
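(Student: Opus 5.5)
We follow the proof of Theorem~\ref{combinedThm} step for step. The only change is that the Alquier bound (Theorem~\ref{Alquier}) is replaced by the bounded-loss PAC-Bayes inequality of \citet[Theorem 31.1]{ShalevShwartz2014}. Since \(L\) takes values in \([A',A]\), we first apply it to the rescaled loss \((L-A')/(A-A')\in[0,1]\) and undo the scaling. With confidence parameter \(\delta/2\), this gives, with probability at least \(1-\delta/2\), simultaneously for all posteriors \(Q\),
\[
\E_{w\sim Q}L_D(h_w)\le \E_{w\sim Q}L_S(h_w)+(A-A')\sqrt{\frac{\KL(Q\|P)+\ln(2n/\delta)}{2(n-1)}} .
\]

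Next we reproduce the exact decomposition from the proof of Theorem~\ref{thm:pb-jensen-gap-omega}. We write \(L_D(h_{\overline w})=\E_QL_D(h_w)+(L_D(h_{\overline w})-\E_QL_D(h_w))\), insert the bound above, and add and subtract \(L_S(h_{\overline w})\). On the same event, for every Gaussian posterior \(Q=\mathcal N(\overline w,\sigma^2 I_m)\) with \(\overline w\in\Theta\), this yields
\[
L_D(h_{\overline w})\le L_S(h_{\overline w})+\Omega_{L,Q,D,S}(n)+(A-A')\sqrt{\frac{\KL(Q\|P)+\ln(2n/\delta)}{2(n-1)}} .
\]
This is the bounded analogue of Theorem~\ref{thm:pb-jensen-gap-omega}.

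It remains to bound \(\sup_{\overline w\in\Theta}\Omega_{L,Q,D,S}(n)\) uniformly. By definition, \(\Omega\) is the generalization gap \(\E_D[\jen_Q L(h_w(x),y)]-\frac1n\sum_i\jen_Q L(h_w(x_i),y_i)\) of the Jensen gap class indexed by \(\overline w\in\Theta\). We invoke Proposition~\ref{cor:jensen-gap-genbound}, which combines the Rademacher bound of Corollary~\ref{cor:rad-jensen-gap} with a McDiarmid-type concentration for the class uniformly bounded by \(\sigma^2 B\). We use it exactly as it enters Theorem~\ref{combinedThm}, at confidence level \(\delta/2\). This gives, with probability at least \(1-\delta/2\),
\[
\sup_{\overline w\in\Theta}\Omega\le 2c\,\frac{R\sigma}{\sqrt n}\sqrt{\beta^2J_{S,\sigma}+\ell^2H_{S,\sigma}}+8\sigma^2B\sqrt{\frac{2\ln(8/\delta)}{n}} .
\]
The \(\ln(8/\delta)\) comes from the internal splitting in that proposition, which bounds the empirical Rademacher complexity and the deviation separately. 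This is the function \(g(\Theta,L,S,\delta/2,n)\) of Corollary~\ref{cor:replace-omega-by-g-delta-half}.

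A union bound over the two events then yields the statement with total probability at least \(1-\delta\). The argument is a routine transcription of the unbounded case, so the main point to check is bookkeeping. First, the hypotheses of Proposition~\ref{cor:jensen-gap-genbound} require only smoothness, Lipschitzness and the Jensen gap bound, not boundedness through \(a_w\), so they are unaffected by the switch. Second, the confidence levels must combine as stated: \(\ln(2n/\delta)\) in the PAC-Bayes term and \(\ln(8/\delta)\) in the concentration term.
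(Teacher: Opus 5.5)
Your proposal is correct and follows the paper's route. The paper replaces Theorem~\ref{Alquier} by Theorem~31.1 of \citet{ShalevShwartz2014}, applied to the rescaled loss at level $\delta/2$; it reuses the exact Jensen-gap decomposition, controls $\Omega_{L,Q,D,S}(n)$ via Proposition~\ref{cor:jensen-gap-genbound} at level $\delta/2$, and concludes with a union bound. One bookkeeping remark: the $\ln(8/\delta)$ is simply that proposition's $\ln(4/\delta)$ evaluated at $\delta/2$, rather than something produced by further internal splitting.
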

Similarly to Theorem \ref{combinedThmLin}, we can obtain a straightforward specialization to the linear case:
\begin{thm}[Linear case of Theorem~\ref{bounded}]
\label{boundedLin}
Assume that each hypothesis $h_W\in\mathcal H$ is given by $h_W(x)=Wx$ with
$W\in\R^{C\times d}$, and let $m:=Cd$.
Assume that $\|x\|\le M$ for all $x\in X$.
Let $L(z,y)$ be a loss function taking values in an interval $[A',A]\subset\mathbb{R}$ on the domain $\mathbb{R}^C$, and assume that, for every $y$, the map $z\mapsto L(z,y)$ is $\ell$-Lipschitz and $\beta$-smooth.
Let
\[
\Theta:=\{\,W\in\R^{C\times d}:\ \|W\|_F\le R\,\}.
\]
Fix $\sigma>0$ and let $\delta\in(0,1)$.
Then, with probability at least $1-\delta$ over $S\sim D^n$, we have uniformly
for all $W\in\Theta$,
\[
\begin{aligned}
L_D(h_W)
\;\le\;
L_S(h_W)
&\;+\;
2\sqrt{2}\,\frac{\beta R\sigma C M^2}{\sqrt n}
\;+\;
4\beta C M^2\sigma^2\sqrt{\frac{2\ln(8/\delta)}{n}}
\\
&\;+\;
(A-A')\sqrt{
\frac{
\frac{\|W\|_F^2}{2\sigma^2}
+\ln\!\big(\tfrac{2n}{\delta}\big)
}{
2(n-1)
}
}\;.
\end{aligned}
\]
\end{thm}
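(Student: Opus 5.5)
The argument mirrors the proof of Theorem~\ref{combinedThmLin}: it is a direct specialization of Theorem~\ref{bounded} to the linear class $h_W(x)=Wx$, with $\Theta=\{W:\|W\|_F\le R\}$. I take the prior $P=\mathcal N(0,\sigma^2 I_m)$ and, for each $W\in\Theta$, the posterior $Q=\mathcal N(W,\sigma^2 I_m)$, with $m=Cd$. Since $P$ and $Q$ share the covariance, the Gaussian KL formula gives
\[
\KL(Q\|P)=\frac{\|W\|_F^2}{2\sigma^2}.
\]
Substituting this into the last term of Theorem~\ref{bounded} produces exactly the square-root term in the statement. Unlike the unbounded case, the bounded PAC-Bayes term needs no $\lambda$ tuning, so Lemma~\ref{lem:pacbayes-gaussian-term} is not used.

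Next, I control the two derandomization terms exactly as in the proof of Theorem~\ref{combinedThmLin}.

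\textbf{Rademacher term.} The loss here is $\ell$-Lipschitz and $\beta$-smooth, the same assumptions used there. For the linear class, Remark~\ref{remlin} and Theorem~\ref{thm:rad-jensen-gap-linear-multiclass} give
\[
2c\frac{R\sigma}{\sqrt n}\sqrt{\beta^2J_{S,\sigma}+\ell^2H_{S,\sigma}}\le 2\sqrt2\,\frac{\beta R\sigma CM^2}{\sqrt n}.
\]
The Hessian term $H_{S,\sigma}$ vanishes because $W\mapsto Wx$ is linear. The Jacobian term is bounded by $CM^2$-type quantities through $\|x\|\le M$. This also verifies the hypotheses of Corollary~\ref{cor:rad-jensen-gap} required by Theorem~\ref{bounded}.

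\textbf{Jensen-gap constant.} Theorem~\ref{jenUpperBound} with $T=1$ gives
\[
|\jen_Q[L(h_W(x),y)]|\le \frac{\sigma^2}{2}\beta CM^2,
\]
so $B=\beta CM^2/2$. Hence
\[
8\sigma^2B\sqrt{\frac{2\ln(8/\delta)}{n}}=4\beta CM^2\sigma^2\sqrt{\frac{2\ln(8/\delta)}{n}}.
\]

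Assembling these pieces in Theorem~\ref{bounded} yields the claimed inequality, uniformly over $W\in\Theta$.

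I expect no genuine obstacle, since all heavy lifting lives in the cited results. The one point to check is that the boundedness hypothesis is really used only through Theorem~31.1 of \cite{ShalevShwartz2014}, which is what gives the $(A-A')$ scaling and the $\ln(2n/\delta)/(2(n-1))$ form. I would also confirm that the $\delta$ bookkeeping matches Theorem~\ref{bounded}, splitting $\delta/2$ between the PAC-Bayes event and the Jensen-gap concentration event via Corollary~\ref{cor:replace-omega-by-g-delta-half}. Since Theorem~\ref{bounded} already carries these constants, the plan is simply to plug in.
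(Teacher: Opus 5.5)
Your proposal is correct and follows the paper's route. The paper proves this by the same specialization used for Theorem~\ref{combinedThmLin}: equal-covariance Gaussian prior and posterior giving $\KL(Q\|P)=\|W\|_F^2/(2\sigma^2)$, the linear Rademacher bound of Theorem~\ref{thm:rad-jensen-gap-linear-multiclass}, and $B=\beta CM^2/2$ from Theorem~\ref{jenUpperBound} with $T=1$, all plugged into Theorem~\ref{bounded}. One point to tighten: the explicit $2\sqrt2$ comes from substituting Theorem~\ref{thm:rad-jensen-gap-linear-multiclass} for Corollary~\ref{cor:rad-jensen-gap} in the Rademacher step behind Theorem~\ref{bounded}, not from comparing against the term with the unspecified constant $c$ (the paper's own wording shares this looseness).
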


 One can alternatively invoke Theorem~5 of \citet{Maurer2004}, which provides an
inverse-\(\klb\) PAC-Bayesian bound for bounded losses. This may lead to a
sharper complexity term, but at the cost of an implicit final expression that is
less transparent.  We obtain the following variant of Theorem~\ref{bounded}.
\begin{thm}[Bounded-loss PAC-Bayes bound in \(\klb^{-1}\) form, uniform over \(\{\sigma_i\}_{i\ge1}\)]
\label{thm:bounded-kl-regrouped-rademacher-sigmai}
Assume that the loss \(L(z,y)\) takes values in an interval
\([A',A]\subset\mathbb{R}\) on the domain \(\mathbb{R}^C\), and that, for every
\(y\), the map \(z\mapsto L(z,y)\) is \(\ell\)-Lipschitz and \(\beta\)-smooth.
Let \(D\) be a distribution over \(X\times Y\), let
\(S=\{(x_j,y_j)\}_{j=1}^n\sim D^n\), with \(n\ge 8\), and let
\[
\Theta:=\{\overline w\in\mathbb R^m:\ \|\overline w\|_2\le R\}.
\]
Fix \(\sigma_{\max}>0\) and for \(i\ge 1\) define
\[
\sigma_i:=2^{\,1-i}\sigma_{\max}.
\]
For \(\overline w\in\Theta\), let
\[
Q_{\overline w,\sigma_i}:=\mathcal N(\overline w,\sigma_i^2 I_m).
\]
Assume the same conditions as in Corollary~\ref{cor:rad-jensen-gap}. Assume
moreover that there exists \(B>0\) such that, for all \((x,y)\), all
\(\overline w\in\Theta\) and all \(i\ge 1\),
\[
\left|\jen_{Q_{\overline w,\sigma_i}}\!\big[L(h_w(x),y)\big]\right|
\le \sigma_i^2 B.
\]
Let \(P\) be any prior distribution on \(\mathbb R^m\) and let
\(\delta\in(0,1)\). Then, with probability at least \(1-\delta\) over
\(S\sim D^n\), we have uniformly for all \(i\ge 1\) and all
\(\overline w\in\Theta\),
\[
\begin{aligned}
L_D(h_{\overline w})
\le\;&
2c\,\frac{R\sigma_i}{\sqrt n}\,
\sqrt{\beta^2 J_{S,\sigma_i}+\ell^2 H_{S,\sigma_i}}
+
8\sigma_i^2 B\,
\sqrt{\frac{4\ln(i)+2\ln(16/\delta)}{n}}
+
A'
+
\jen_{Q_{\overline w,\sigma_i}}\!\big[L_S(h_w)\big]
\\
&\;+\;
(A-A')\,
\klb^{-1}\!\left(
\frac{
\E_{w\sim Q_{\overline w,\sigma_i}}L_S(h_w)
-
A'
}{
A-A'
}
\;\middle|\;
\frac{
\KL(Q_{\overline w,\sigma_i}\|P)
+
\ln\!\big(\tfrac{4\sqrt n}{\delta}\big)
}{
n
}
\right),
\end{aligned}
\]
where
\[
\klb^{-1}(q\mid \varepsilon)
:=
\sup\Bigl\{p\in[q,1]:\ \klb(q\|p)\le \varepsilon\Bigr\}
\]
with
\[
\klb(q\|p)
:=
q\ln\!\frac{q}{p}
+
(1-q)\ln\!\frac{1-q}{1-p}.
\]
\end{thm}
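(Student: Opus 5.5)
The plan is to reuse the decomposition behind Theorem~\ref{thm:pb-jensen-gap-omega}, but to keep the Gibbs empirical risk inside an inverse-\(\klb\) expression instead of linearizing it. For any \(i\ge1\) and \(\overline w\in\Theta\), write \(Q_i:=Q_{\overline w,\sigma_i}\). The starting point is the exact identity
\[
L_D(h_{\overline w})
=\E_{w\sim Q_i}L_D(h_w)+\jen_{Q_i}\!\big[L_S(h_w)\big]+\Omega_{L,Q_i,D,S}(n),
\]
which follows from the definition of \(\Omega\). Indeed, \(\Omega_{L,Q_i,D,S}(n)=\jen_{Q_i}[L_D(h_w)]-\jen_{Q_i}[L_S(h_w)]\). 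The term \(\jen_{Q_i}[L_S(h_w)]\) is kept as is in the final bound, so only two quantities need high-probability control: the Gibbs true risk and \(\Omega\).

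\emph{Gibbs risk.} Apply Theorem~5 of \citet{Maurer2004} to the rescaled loss \((L-A')/(A-A')\in[0,1]\) with confidence \(\delta/2\). This holds for \(n\ge8\) and simultaneously for all posteriors \(Q\). It gives
\[
\klb\!\left(\tfrac{\E_{Q}L_S-A'}{A-A'}\,\middle\|\,\tfrac{\E_{Q}L_D-A'}{A-A'}\right)\le \tfrac{\KL(Q\|P)+\ln(4\sqrt n/\delta)}{n}.
\]
Hence \(\E_{Q}L_D\le A'+(A-A')\klb^{-1}(\cdot\mid\cdot)\). Since this event is uniform over all \(Q\), it covers every \(Q_{\overline w,\sigma_i}\) at once, and no union bound over \(i\) is needed for this term.

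\emph{Jensen-gap generalization term.} For each fixed \(i\), apply Proposition~\ref{cor:jensen-gap-genbound} (Section~\ref{sec:jg-rademacher}) with \(\sigma=\sigma_i\) and confidence \(\delta_i:=\delta/(4i^2)\). In Theorem~\ref{bounded}, confidence \(\delta/2\) produced \(8\sigma^2B\sqrt{2\ln(8/\delta)/n}\), so I read the proposition at confidence \(\delta'\) as giving
\[
\sup_{\overline w\in\Theta}\Omega_{L,Q_{\overline w,\sigma_i},D,S}(n)\le 2c\tfrac{R\sigma_i}{\sqrt n}\sqrt{\beta^2J_{S,\sigma_i}+\ell^2H_{S,\sigma_i}}+8\sigma_i^2B\sqrt{2\ln(4/\delta')/n}.
\]
The hypothesis \(|\jen_{Q_{\overline w,\sigma_i}}[L(h_w(x),y)]|\le\sigma_i^2B\) is exactly what the proposition needs at scale \(\sigma_i\). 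With \(4/\delta_i=16i^2/\delta\), we get \(2\ln(16i^2/\delta)=4\ln i+2\ln(16/\delta)\), which matches the stated term. Summing the failure probabilities gives \(\sum_i\delta/(4i^2)=\delta\pi^2/24\le\delta/2\). A final union bound with the \(\klb\) event yields total probability at least \(1-\delta\), and substituting both bounds into the identity gives the claim.

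The main point to check is the bookkeeping of the peeling over the dyadic scales. Proposition~\ref{cor:jensen-gap-genbound} is stated for one fixed \(\sigma\), and its data-dependent quantities \(J_{S,\sigma_i},H_{S,\sigma_i}\) change with \(i\). Each application is therefore a separate event at its own scale, and the only thing to verify is that the constant inside the logarithm matches the one in Theorem~\ref{bounded}. If Proposition~\ref{prop:jensen-gap-genbound-dyadic} already packages this dyadic union bound, I would invoke it directly instead.
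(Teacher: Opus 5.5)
Your proposal is correct and matches the paper's proof: Theorem~5 of \citet{Maurer2004} is applied to the rescaled loss at confidence $\delta/2$, and Proposition~\ref{prop:jensen-gap-genbound-dyadic} is applied at confidence $\delta/2$. The two bounds are then combined through the identity $L_D(h_{\overline w})=\E_{Q}L_D(h_w)+\jen_{Q}[L_S(h_w)]+\Omega_{L,Q,D,S}(n)$. That proposition internally uses $\delta_i=(\delta/2)/(2i^2)=\delta/(4i^2)$, so your inline dyadic union bound is the same bookkeeping and gives the same $4\ln(i)+2\ln(16/\delta)$ term.
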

\noindent\textbf{Proof:}
Apply Theorem~5 of \citet{Maurer2004} to the rescaled loss
\((L-A')/(A-A')\) and then use Proposition~\ref{prop:jensen-gap-genbound-dyadic} with a union bound
after rearranging terms.
\Endproof

Our Rademacher complexity bounds for the Jensen gap class require a uniform
upper bound on the Jensen gap. In the next section, we provide such bounds for smooth neural networks, treating separately the cases of bounded and unbounded activation functions.
 
\section{Uniform upper bounds for the Jensen gap for smooth neural networks}\label{sec:jg-uniform-upper}
We start with a general result for bounding the Jensen gap of the composition of a function $f$ with a neural network $h_w(x)$ exploiting the Lipschitz constant and the $\beta$-smoothness of the function $f$.

\begin{lem}
\label{JensenLemmaNN}
Assume that the hypothesis class is parameterized by $w\in \mathbb{R}^m$ and \( Q \) is a probability distribution on \( \mathbb{R}^m \). Denote by $h_w$ a neural network mapping inputs \( x \in \mathbb{R}^d \) to  vectors in $\mathbb{R}^H$. Let \( f : \mathbb{R}^H \to \mathbb{R}^K \) (or \( f : \mathbb{R}^H \times Y \to \mathbb{R}^K \) if $f$ is a loss function) be differentiable. Furthermore, assume that \( f \) is \( \ell \)-Lipschitz and \( \beta \)-smooth (for all $y\in Y$ if there is a dependency on $y$). We will omit the variable $y$ in order to simplify notations in what follows. The property of $\beta$-smoothness means that the Jacobian \( \nabla f\in \mathbb{R}^{K\times H} \) is \( \beta \)-Lipschitz with respect to the spectral norm $\|\cdot\|_2$, that is,
    \[
    \| \nabla f(z) - \nabla f(z') \|_2 \leq \beta \| z - z' \|, \quad \text{for all } z, z' \in \mathbb{R}^H.
    \]
For any fixed input \( x \in \mathbb{R}^d \), $f(h_w(x))$ is a function of $w$. Then, we have that the Jensen gap of $f(h_w(x))$ with respect to $Q$ satisfies
\[
\|\jen_Q[f(h_w(x))]\|
\leq \ell  \|\jen_Q[h_w(x)]\| + \frac{\beta}{2}  \var_Q[h_w(x)],
\]
where $\var_Q[h_w(x)]$ denotes the total variance of the vector $h_w(x)$ under distribution $Q$ (trace of the covariance matrix).
\end{lem}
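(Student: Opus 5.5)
We plan to split the Jensen gap of the composition into a first-order part, controlled by the Lipschitz constant, and a second-order part, controlled by smoothness. Write \(z(w):=h_w(x)\in\R^H\), \(\bar z:=\E_{w\sim Q}[z(w)]\), and \(\overline w:=\E_{w\sim Q}[w]\). We assume implicitly that \(z(w)\) has finite second moments under \(Q\); otherwise the right-hand side is infinite and there is nothing to prove. The key intermediate point is \(\bar z\), the mean of the network output, which in general differs from \(z(\overline w)\). We add and subtract \(f(\bar z)\):
\[
\jen_Q[f(h_w(x))]
=
\big(f(z(\overline w))-f(\bar z)\big)
+
\big(f(\bar z)-\E_{w\sim Q}f(z(w))\big).
\]
We then bound the norm by the triangle inequality, one term at a time.

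\emph{First-order term.} Since \(f\) is \(\ell\)-Lipschitz,
\[
\|f(z(\overline w))-f(\bar z)\|\le \ell\,\|z(\overline w)-\bar z\| .
\]
By Definition~\ref{def:jensen-gap}, \(z(\overline w)-\bar z=\jen_Q[h_w(x)]\), so this is exactly the term \(\ell\|\jen_Q[h_w(x)]\|\).

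\emph{Second-order term.} We Taylor-expand \(f\) around \(\bar z\) in integral form. For each \(w\),
\[
f(z(w))=f(\bar z)+\nabla f(\bar z)(z(w)-\bar z)+r(w),
\]
where
\[
r(w)=\int_0^1\big(\nabla f(\bar z+t(z(w)-\bar z))-\nabla f(\bar z)\big)(z(w)-\bar z)\,dt .
\]
The \(\beta\)-Lipschitz property of \(\nabla f\) in spectral norm gives
\[
\|r(w)\|\le \int_0^1 \beta t\|z(w)-\bar z\|^2\,dt=\tfrac{\beta}{2}\|z(w)-\bar z\|^2 .
\]
Taking expectations over \(w\sim Q\), the linear term vanishes because \(\E_Q[z(w)-\bar z]=0\). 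Hence
\[
f(\bar z)-\E_Q f(z(w))=-\E_Q r(w).
\]
By Jensen's inequality for the norm, \(\|\E_Q r(w)\|\le \tfrac{\beta}{2}\E_Q\|z(w)-\bar z\|^2=\tfrac{\beta}{2}\var_Q[h_w(x)]\), which is the trace of the covariance.

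Adding the two bounds gives the claim. The main point requiring care is the vector-valued setting (\(K>1\)). Here we cannot use a scalar mean-value theorem. The integral form of the remainder avoids this, and it only needs the spectral-norm Lipschitz bound on the Jacobian, which is exactly the assumption stated. The other technical point is integrability: the expectations are well defined because \(\|f(z)\|\le\|f(\bar z)\|+\ell\|z-\bar z\|\) and \(z(w)\) has finite second moments under \(Q\). This also justifies exchanging expectation with the linear map \(\nabla f(\bar z)\).
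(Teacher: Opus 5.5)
Your proposal is correct and follows the paper's proof essentially step for step. It uses the same split through \(f(\bar z)\) with \(\bar z=\E_Q[h_w(x)]\), the Lipschitz bound on the first piece, and the integral-form Taylor remainder bounded by \(\tfrac{\beta}{2}\|z(w)-\bar z\|^2\) on the second; your integrability remarks are a small addition that the paper leaves implicit.
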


\noindent \textbf{Proof:} Let $\overline{w}=\mathbb{E}_{w \sim Q}[w]$ and $\overline{z}=\mathbb{E}_{w \sim Q}[h_w(x)]$. Then,
\begin{align*}
 \|\jen_Q[f(h_w(x))]\|&= \| f(h_{\overline{w}}(x)) - \mathbb{E}_{w \sim Q}[f(h_w(x))] \|\\
 &\leq \| f(h_{\overline{w}}(x)) - f(\overline{z}) \|+\|f(\overline{z})-  \mathbb{E}_{w \sim Q}[f(h_w(x))] \|. 
\end{align*}
For the first term, the \( \ell \)-Lipschitz property of \( f \) gives
\[
\| f(h_{\overline{w}}(x)) - f(\overline{z}) \| \leq \ell  \| h_{\overline{w}}(x) - \overline{z} \| = \ell  \|\jen_Q[h_w(x)]\|.
\]
For the second term, we apply the integral form of first-order Taylor's theorem to \( f(z) \) around \( \overline{z} \):
\[
f(z) = f(\overline{z}) +\nabla f(\overline{z}) (z- \overline{z})+ \int_0^1 \left( \nabla f(\overline{z} + t(z - \overline{z})) - \nabla f(\overline{z}) \right)(z - \overline{z}) \, dt.
\]
Taking expectation over $w$ on both sides with $z=h_w(x)$, we obtain
\[
\mathbb{E}_Q[f(h_w(x))] = f(\overline{z}) + \mathbb{E}_{w\sim Q}\left[ \int_0^1 \left( \nabla f(\overline{z} + t(h_w(x) - \overline{z})) - \nabla f(\overline{z}) \right)(h_w(x) - \overline{z}) \, dt \right],
\]
Using the \( \beta \)-smoothness of \( f \), we get
\[
\| \nabla f(\overline{z} + t(h_w(x) - \overline{z})) - \nabla f(\overline{z}) \|_2 \leq \beta t \| h_w(x) - \overline{z} \|.
\]
Thus,
\[
\| f(\overline{z}) - \mathbb{E}_Q[f(h_w(x))] \| \leq \mathbb{E}_{w\sim Q}\int_0^1 \beta t  \| h_w(x) - \overline{z} \|^2  \, dt = \frac{\beta}{2} \var_Q[h_w(x)].
\]
\Endproof

Our proof for bounding the variance term for neural networks will be in the form of an inductive argument on the number of layers and will exploit the following lemma.
\begin{lem}
\label{varLemma}
    Assume that $f:\mathbb{R}^m\xrightarrow{} \mathbb{R}^k$ is \( \ell \)-Lipschitz. Then, 
    \[\var[f(x)]\leq \ell^2 \var[x].\]
\end{lem}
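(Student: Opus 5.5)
The plan is to use the variational characterization of the total variance as the minimal expected squared distance to a constant, together with the Lipschitz property applied to an independent copy. Throughout, \(x\) is a random vector in \(\mathbb{R}^m\) with finite second moment (otherwise the right-hand side is infinite and there is nothing to prove), and \(\var[\cdot]\) denotes the trace of the covariance matrix, as in Lemma~\ref{JensenLemmaNN}.

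The first step is to write the total variance through an independent copy. If \(x'\) is an independent copy of \(x\), then for any square-integrable random vector \(u\) with independent copy \(u'\),
\[
\E\|u-u'\|^2 = 2\E\|u\|^2 - 2\|\E u\|^2 = 2\var[u].
\]
This follows by expanding the square and using independence, \(\E\langle u,u'\rangle=\|\E u\|^2\).

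The second step applies this identity to \(u=f(x)\). Note first that \(f(x)\) is square integrable, since \(\|f(x)\|\le\|f(0)\|+\ell\|x\|\). Then
\[
2\var[f(x)] = \E\|f(x)-f(x')\|^2 \le \ell^2\,\E\|x-x'\|^2 = 2\ell^2\var[x],
\]
using \(\|f(x)-f(x')\|\le \ell\|x-x'\|\) pointwise. Dividing by \(2\) gives the claim.

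No step is a genuine obstacle. The only care needed is the integrability remark above, and the fact that the Lipschitz constant is taken with respect to the Euclidean norms on both \(\mathbb{R}^m\) and \(\mathbb{R}^k\), which is exactly what matches the trace-of-covariance notion of variance. As an alternative route with the same content, one can write \(\var[f(x)]=\min_c\E\|f(x)-c\|^2\le\E\|f(x)-f(\E x)\|^2\le\ell^2\var[x]\).
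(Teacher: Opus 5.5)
Your proof is correct and follows the same route as the paper: both rewrite the total variance through an independent copy, $\var[u]=\tfrac12\E\|u-u'\|^2$, and then apply the Lipschitz bound pointwise to $\|f(x)-f(x')\|$. Your integrability remark and the alternative one-line argument via $\var[f(x)]\le\E\|f(x)-f(\E x)\|^2$ are also valid, and the paper does not spell either of them out.
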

\noindent \textbf{Proof:} On one hand, 
\[
\begin{aligned}
 \var[f(x)]=\mathbb{E}_x [\|f(x)-\mathbb{E}_xf(x)\|^2]&=\mathbb{E}_x [\|f(x)\|^2-2(f(x)\cdot \mathbb{E}_xf(x))+\|\mathbb{E}_xf(x)\|^2]\\
&=\mathbb{E}_x [\|f(x)\|^2]- \|\mathbb{E}_xf(x)\|^2.
\end{aligned}\]
On the other hand, if $x'$ follows the same distribution as $x$ but is independent from $x$,
\[
\begin{aligned}
 \mathbb{E}_x \mathbb{E}_{x'}[\|f(x)-f(x')\|^2]&=\mathbb{E}_{x'}\mathbb{E}_x [\|f(x)\|^2-2\mathbb{E}_{x'}\mathbb{E}_{x}(f(x)\cdot f(x'))+\mathbb{E}_{x'}\mathbb{E}_{x}\|f(x')\|^2]\\
&=2\mathbb{E}_x [\|f(x)\|^2]- 2\|\mathbb{E}_xf(x)\|^2.
\end{aligned}\]
Therefore, \[\var[f(x)]=\frac{1}{2}\mathbb{E}_x \mathbb{E}_{x'}[\|f(x)-f(x')\|^2].\]
We can now exploit the Lipschitz property to conclude the proof:
\[
\begin{aligned}
\var[f(x)]=\frac{1}{2}\mathbb{E}_x \mathbb{E}_{x'}[\|f(x)-f(x')\|^2]\leq \frac{\ell^2}{2}\mathbb{E}_x \mathbb{E}_{x'}[\|x-x'\|^2]=\ell^2 \var[x].
\end{aligned}
\]
\Endproof
We are now ready to upper bound the variance of the outputs of a neural network.
\begin{prop}[Bounded activation functions]
\label{varbound}
    Consider a feedforward fully connected neural network with $T$ layers  $h_{W_{1:T}}(x)=W_Tg_{T-1}(W_{T-1}g_{T-2}(\cdots g_1(W_1x)))$ where each non-linear function $g_t:\mathbb{R}^{d_t}\xrightarrow[]{}\mathbb{R}^{d_t}$ is \( \ell_t \)-Lipschitz. We further assume that these functions  are bounded: $\|g_t(z)\|\leq M_t$ for all $z\in \mathbb{R}^{d_t}$. We let $d_T:=C$ be the number of classes. Assume that $\|x\|\leq M_0$ and let $Q$ be an isotropic Gaussian distribution centered at $\overline{W}_1,\cdots,\overline{W}_T$ with covariance matrix given by $\sigma^2$ times the identity matrix. Then,
    \[\var_Q[h_{W_{1:T}}(x)]\leq \sigma^2\sum_{t=1}^Td_tM_{t-1}^2\prod_{t'=t}^{T-1}\ell_{t'}^2\|\overline{W}_{t'+1}\|_2^2,\]
    where the product is equal to $1$ if $t=T$.
\end{prop}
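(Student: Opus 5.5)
The plan is an induction on depth that tracks the total variance of the pre-activations layer by layer. Write $a_0:=x$, $z_t:=W_t a_{t-1}$ and $a_t:=g_t(z_t)$ for $t=1,\dots,T-1$, so that $h_{W_{1:T}}(x)=z_T$. Set $V_t:=\var_Q[z_t]$. Under $Q$ the blocks $W_1,\dots,W_T$ are independent. The vector $a_{t-1}$ depends only on $W_1,\dots,W_{t-1}$, hence is independent of $W_t$.

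The central step is a one-layer recursion. Write $W_t=\overline W_t+E_t$, where $E_t\in\R^{d_t\times d_{t-1}}$ has i.i.d.\ $\mathcal N(0,\sigma^2)$ entries and is independent of $a_{t-1}$. Then $\E_Q[z_t]=\overline W_t\,\E[a_{t-1}]$, so
\[
z_t-\E_Q[z_t]=\overline W_t\big(a_{t-1}-\E[a_{t-1}]\big)+E_t a_{t-1}.
\]
Conditioning on $W_{1:t-1}$ and using $\E[E_t]=0$, the cross term has zero expectation. This gives
\[
V_t=\E\big\|\overline W_t(a_{t-1}-\E a_{t-1})\big\|^2+\E\|E_t a_{t-1}\|^2
\le \|\overline W_t\|_2^2\,\var[a_{t-1}]+\sigma^2 d_t\,\E\|a_{t-1}\|^2 .
\]
Here I use $\E\|E_t a\|^2=\sigma^2 d_t\|a\|^2$ for fixed $a$, since each of the $d_t$ rows contributes $\sigma^2\|a\|^2$. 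For $t=1$ we have $a_0=x$ deterministic, so $\var[a_0]=0$ and $\|a_0\|\le M_0$. For $t\ge2$ we have $\|a_{t-1}\|\le M_{t-1}$ by boundedness of $g_{t-1}$. By Lemma~\ref{varLemma} applied to the $\ell_{t-1}$-Lipschitz map $g_{t-1}$, we get $\var[a_{t-1}]\le \ell_{t-1}^2 V_{t-1}$. Hence
\[
V_1\le \sigma^2 d_1 M_0^2,\qquad V_t\le \sigma^2 d_t M_{t-1}^2+\ell_{t-1}^2\|\overline W_t\|_2^2\,V_{t-1}\quad (t\ge 2).
\]

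Unrolling this linear recursion from $t=1$ to $T$ finishes the proof. The source term $\sigma^2 d_t M_{t-1}^2$ created at layer $t$ is multiplied, for each subsequent layer $s=t+1,\dots,T$, by $\ell_{s-1}^2\|\overline W_s\|_2^2$. Reindexing with $t'=s-1$, this product is exactly $\prod_{t'=t}^{T-1}\ell_{t'}^2\|\overline W_{t'+1}\|_2^2$, which is empty (equal to $1$) when $t=T$. This is the claimed bound, and a short induction on $T$ formalizes the unrolling.

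The step needing the most care is the variance decomposition. One must justify both that the cross term vanishes and that the noise term factorizes. Both rely on the independence of $E_t$ from $a_{t-1}$, which holds because $Q$ is a product Gaussian across layers. I would make this explicit by conditioning on $W_{1:t-1}$ before taking the expectation over $E_t$. The rest is bookkeeping of indices.
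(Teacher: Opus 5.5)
Your proposal is correct and follows essentially the same argument as the paper: the one-layer recursion $V_t\le \sigma^2 d_t M_{t-1}^2+\ell_{t-1}^2\|\overline W_t\|_2^2\,V_{t-1}$ is exactly the paper's inductive step. It comes from splitting off the noise in $W_t$ (the paper expands $\E\|W_T u-\overline W_T\overline u\|^2$ row by row instead of writing $W_t=\overline W_t+E_t$) and then applying Lemma~\ref{varLemma}. Unrolling this recursion is equivalent to the paper's induction on $T$.
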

\noindent \textbf{Proof:} The proof is done by induction on the number of layers $T$. For $T=1$, we have $h_{W_1}(x)=W_1x$. If $w_i$ denotes the $i^{\text{th}}$ row of $W_1$, then $w_i\sim\mathcal N(\overline w_i,\sigma^2 I)$ and therefore $\var(w_i^Tx)=\sigma^2\|x\|^2$. Summing over the $d_1$ output coordinates gives
\[
\var_Q[h_{W_1}(x)]=\sum_{i=1}^{d_1}\var(w_i^Tx)=d_1\sigma^2\|x\|^2\le d_1\sigma^2 M_0^2,
\]
which matches the claimed bound since the product in the statement equals $1$ when $T=1$. Assume that the result is true for $T-1$ layers.
We have
\[\var_Q[h_{W_{1:T}}(x)]= \mathbb{E}_{W_{1:T-1}}\mathbb{E}_{W_T}[\|W_T g_{T-1}(h_{W_{1:T-1}}(x))-\overline{W}_T \mathbb{E}_{W_{1:T-1}}[g_{T-1}(h_{W_{1:T-1}}(x))]\|^2].\]
To simplify notations, write $W:=W_T$, $u:=g_{T-1}(h_{W_{1:T-1}}(x))$, $\overline{W}:=\overline{W}_T$ and \\$\overline{u}:=\mathbb{E}_{W_{1:T-1}}[g_{T-1}(h_{W_{1:T-1}}(x))]$. Then, 
\[\mathbb{E}_{W}[\|W u-\overline{W} \overline{u}\|^2]=\sum_{i}\mathbb{E}_{w_i}[(w_i^T u-\overline{w}_i^T \overline{u})^2]=\sum_{i}\mathbb{E}_{w_i}[(w_i^T u)^2-2(w_i^T u)(\overline{w}_i^T \overline{u})+(\overline{w}_i^T \overline{u})^2],\] where $w_i$ is the vector corresponding to the $i^{th}$ row of $W$ and $\overline{w}_i$ is the vector corresponding to the $i^{th}$ row of $\overline{W}$. Since $\mathbb{E}_{w_i}[(w_i^T u)^2]=\sigma^2\|u\|^2+(\overline{w}_i^T u)^2$, we get
\[\begin{aligned}\mathbb{E}_{W}[\|W u-\overline{W} \overline{u}\|^2]&=\sum_{i}\big[\sigma^2\|u\|^2+(\overline{w}_i^T u)^2-2(\overline{w}_i^T u)(\overline{w}_i^T \overline{u})+(\overline{w}_i^T \overline{u})^2\big]\\
&= d_T\sigma^2\|u\|^2+\sum_i(\overline{w}_i^T u-\overline{w}_i^T \overline{u})^2\\
&= d_T\sigma^2\|u\|^2+\|\overline{W}u-\overline{W}\overline{u}\|^2\\
&\leq d_T\sigma^2M_{T-1}^2 +\|\overline{W}\|_2^2\|u-\overline{u}\|^2.\end{aligned} \]
Therefore,
\[\var_Q[h_{W_{1:T}}(x)]\leq d_T\sigma^2M_{T-1}^2 +\|\overline{W}_T\|_2^2\var_Q[u]\leq d_T\sigma^2M_{T-1}^2 +\|\overline{W}_T\|_2^2 \ell_{T-1}^2\var_Q[h_{W_{1:T-1}}(x)],\]
where we used Lemma \ref{varLemma} for the last inequality. Using the induction hypothesis, we get
\[\begin{aligned}\var_Q[h_{W_{1:T}}(x)]&\leq d_T\sigma^2M_{T-1}^2 +\sigma^2\|\overline{W}_T\|_2^2 \ell_{T-1}^2\sum_{t=1}^{T-1}d_tM_{t-1}^2\prod_{t'=t}^{T-2}\ell_{t'}^2\|\overline{W}_{t'+1}\|_2^2\\
&=d_T\sigma^2M_{T-1}^2 +\sigma^2\sum_{t=1}^{T-1}d_tM_{t-1}^2\prod_{t'=t}^{T-1}\ell_{t'}^2\|\overline{W}_{t'+1}\|_2^2\\
&=\sigma^2\sum_{t=1}^{T}d_tM_{t-1}^2\prod_{t'=t}^{T-1}\ell_{t'}^2\|\overline{W}_{t'+1}\|_2^2.\end{aligned}\]
\Endproof 

\begin{prop}[Unbounded activation functions]
\label{varbound-unbounded}
Consider a feedforward fully connected neural network with \(T\) layers
\[
h_{W_{1:T}}(x)=W_Tg_{T-1}(W_{T-1}g_{T-2}(\cdots g_1(W_1x))),
\]
where each non-linear function \(g_t:\mathbb{R}^{d_t}\to\mathbb{R}^{d_t}\) is \(\ell_t\)-Lipschitz and satisfies \(g_t(0)=0\). We let \(d_T:=C\) be the number of classes. Assume that \(\|x\|\leq M_0\), and let \(Q\) be an isotropic Gaussian distribution centered at \(\overline{W}_1,\cdots,\overline{W}_T\) with covariance matrix given by \(\sigma^2\) times the identity matrix. Then,
\[
\var_Q[h_{W_{1:T}}(x)]
\leq
\sigma^2 M_0^2
\sum_{t=1}^{T} d_t
\left(
\prod_{t'=1}^{t-1}\ell_{t'}^2\bigl(\sigma^2 d_{t'}+\|\overline{W}_{t'}\|_2^2\bigr)
\right)
\left(
\prod_{t'=t}^{T-1}\ell_{t'}^2\|\overline{W}_{t'+1}\|_2^2
\right),
\]
where the first product is equal to $1$ if $t=1$ and the second product is equal to $1$ if $t=T$.
\end{prop}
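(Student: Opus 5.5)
We will argue by induction on the number of layers \(T\), following the proof of Proposition~\ref{varbound}. The one new ingredient is a bound on the second moment \(\E_Q\|u\|^2\) of the representation entering the last layer, which replaces the uniform bound \(\|u\|\le M_{T-1}\) used in the bounded case.

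\textbf{Second-moment bound.} Set \(u_0:=x\) and \(u_t:=g_t(W_tu_{t-1})\) for \(t\ge1\). We first prove
\[
\E_Q\|u_{t}\|^2\le M_0^2\prod_{t'=1}^{t}\ell_{t'}^2\bigl(\sigma^2 d_{t'}+\|\overline W_{t'}\|_2^2\bigr).
\]
Since \(g_t(0)=0\) and \(g_t\) is \(\ell_t\)-Lipschitz, we have \(\|u_t\|\le \ell_t\|W_tu_{t-1}\|\). The layers are independent under \(Q\), so \(W_t\) is independent of \(u_{t-1}\). Conditioning on \(u_{t-1}\) and computing row by row as in the proof of Proposition~\ref{varbound} gives
\[
\E_{W_t}\|W_tu_{t-1}\|^2=d_t\sigma^2\|u_{t-1}\|^2+\|\overline W_tu_{t-1}\|^2\le(\sigma^2d_t+\|\overline W_t\|_2^2)\|u_{t-1}\|^2.
\]
Taking expectations and iterating from \(\|u_0\|\le M_0\) yields the claim.

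\textbf{Inductive step.} The base case \(T=1\) is exactly as before: \(\var_Q[W_1x]=d_1\sigma^2\|x\|^2\le d_1\sigma^2M_0^2\), and both products are empty. For \(T\ge2\), write \(u:=u_{T-1}\) and \(\overline u:=\E_Q u\). The computation in the proof of Proposition~\ref{varbound} goes through up to the step before the bound \(\|u\|\le M_{T-1}\) is used. Taking the expectation over the first \(T-1\) layers then gives
\[
\var_Q[h_{W_{1:T}}(x)]=d_T\sigma^2\E_Q\|u\|^2+\E_Q\|\overline W_T(u-\overline u)\|^2\le d_T\sigma^2\E_Q\|u\|^2+\|\overline W_T\|_2^2\var_Q[u].
\]
Lemma~\ref{varLemma} applied to \(g_{T-1}\) gives \(\var_Q[u]\le\ell_{T-1}^2\var_Q[h_{W_{1:T-1}}(x)]\), and the induction hypothesis bounds the latter variance. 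The first term, after inserting the second-moment bound with \(t=T-1\), is precisely the \(t=T\) summand of the claimed bound. The second term is the \((T-1)\)-layer sum multiplied by \(\ell_{T-1}^2\|\overline W_T\|_2^2\), which extends each second product to run up to \(T-1\) and so yields the summands \(t=1,\dots,T-1\).

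\textbf{Main obstacle.} The step needing care is the independence and conditioning argument in the second-moment estimate. We must make sure that \(W_t\) is independent of \(u_{t-1}\) under the product Gaussian \(Q\), so that the exact identity \(\E_{w}[(w^\top u)^2]=\sigma^2\|u\|^2+(\overline w^\top u)^2\) can be applied conditionally. We must also check that the empty-product conventions line up with the index ranges in the statement. With both of those verified, the rest of the argument is bookkeeping.
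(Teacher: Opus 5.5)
Your proof is correct and follows the paper's route: the paper also reuses the inductive argument of Proposition~\ref{varbound}. It replaces \(M_{t-1}^2\) by the bound on \(\E\|u_{t-1}\|^2\) obtained from \(g_t(0)=0\), Lipschitzness, and the row-wise identity \(\E_{W_t}\|W_tu\|^2=d_t\sigma^2\|u\|^2+\|\overline W_tu\|^2\). Your explicit bookkeeping of the inductive step, meaning the exact identity before bounding and how the two terms give the \(t=T\) and \(t<T\) summands, just fills in what the paper calls ``almost identical''.
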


\noindent \textbf{Proof:} Define recursively \[
u_0:=x,
\qquad
u_t:=g_t(W_tu_{t-1}).
\] 
The proof is almost identical to that of Proposition~\ref{varbound}, but \(M_{t-1}^2\) is replaced by
\[
\mathbb{E}\|u_{t-1}\|^2
\le
M_0^2
\prod_{t'=1}^{t-1}\ell_{t'}^2\bigl(\sigma^2 d_{t'}+\|\overline{W}_{t'}\|_2^2\bigr).
\]
This inequality is true since
\[
\begin{aligned}
\mathbb{E}\|u_{t-1}\|^2
&\le
\ell_{t-1}^2\,
\mathbb{E}\Big[\mathbb{E}_{W_{t-1}}\big[\|W_{t-1}u_{t-2}\|^2 \big]\Big] \\
&=
\ell_{t-1}^2\,
\mathbb{E}\Big[
d_{t-1}\sigma^2\|u_{t-2}\|^2+\|\overline{W}_{t-1}u_{t-2}\|^2
\Big] \\
&\le
\ell_{t-1}^2\,
\mathbb{E}\Big[
\bigl(\sigma^2 d_{t-1}+\|\overline{W}_{t-1}\|_2^2\bigr)\|u_{t-2}\|^2
\Big] \\
&=
\ell_{t-1}^2\bigl(\sigma^2 d_{t-1}+\|\overline{W}_{t-1}\|_2^2\bigr)\,
\mathbb{E}\|u_{t-2}\|^2 \\
&\le \cdots \\
&\le
M_0^2
\prod_{t'=1}^{t-1}\ell_{t'}^2
\bigl(\sigma^2 d_{t'}+\|\overline{W}_{t'}\|_2^2\bigr).
\end{aligned}
\]
\Endproof
Propositions~\ref{varbound} and \ref{varbound-unbounded} control the variance term appearing in Lemma~\ref{JensenLemmaNN} for deep networks. The next two theorems combine this with layerwise smoothness to yield an explicit $\sigma^2$-scaling upper bound on the Jensen gap of the loss.

\begin{thm}[Bounded activation functions]
\label{jenUpperBound}
Consider the same assumptions as in Proposition \ref{varbound} and also assume that each non-linear function $g_t:\mathbb{R}^{d_t}\xrightarrow[]{}\mathbb{R}^{d_t}$ is \( \beta_t \)-smooth. Then, we have
\[|\jen_Q[L(h_{W_{1:T}}(x),y)]|\leq \frac{\sigma^2}{2}\sum_{t=1}^{T}\sum_{u=1}^t\beta_td_uM_{u-1}^2\bigg(\prod_{t'=u}^{t-1}\ell_{t'}^2\|\overline{W}_{t'+1}\|_2^2\bigg)\bigg(\prod_{t'=t}^{T-1}\ell_{t'+1}\|\overline{W}_{t'+1}\|_2\bigg),\]  
where the first product is equal to $1$ if $u=t$ and the second product is equal to $1$ if $t=T$.
\end{thm}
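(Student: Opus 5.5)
The plan is to prove the bound by unrolling a layerwise recursion for the Jensen gap. At each layer we apply Lemma~\ref{JensenLemmaNN}, and we control the variance terms with Proposition~\ref{varbound}. Throughout, I read the loss as an extra ``activation'' at the top of the network, namely $g_T:=L(\cdot,y)$ with $\ell_T:=\ell$ and $\beta_T:=\beta$. This is the convention implicit in the statement, which uses $\beta_T$ and $\ell_{t'+1}$ for $t'$ up to $T-1$. As a sanity check, for $T=1$ it gives $\frac{\sigma^2}{2}\beta C M_0^2$, which is exactly the value used in the proof of Theorem~\ref{combinedThmLin}.

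First I set up notation. Let $u_0:=x$, $z_t:=W_tu_{t-1}$ and $u_t:=g_t(z_t)$ for $t=1,\dots,T$, so that $u_T=L(h_{W_{1:T}}(x),y)$. Define $J_t:=\|\jen_Q[u_t]\|$, with $J_0=0$ since $u_0=x$ does not depend on the weights. Applying Lemma~\ref{JensenLemmaNN} with $f=g_t$ and the inner map $w\mapsto z_t$ gives
\[
J_t\le \ell_t\,\|\jen_Q[z_t]\|+\frac{\beta_t}{2}\var_Q[z_t].
\]

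The key structural step is to compute $\jen_Q[z_t]$. Under the isotropic Gaussian $Q$, the block $W_t$ is independent of $W_{1:t-1}$, and $u_{t-1}$ depends only on $W_{1:t-1}$. Hence $\E[W_tu_{t-1}]=\overline W_t\,\E[u_{t-1}]$. Evaluating at the mean gives $z_t(\overline W)=\overline W_t\,u_{t-1}(\overline W)$. Subtracting,
\[
\jen_Q[z_t]=\overline W_t\,\jen_Q[u_{t-1}],
\qquad\text{so}\qquad
\|\jen_Q[z_t]\|\le\|\overline W_t\|_2\,J_{t-1}.
\]
This yields the linear recursion
\[
J_t\le \ell_t\|\overline W_t\|_2J_{t-1}+\frac{\beta_t}{2}\var_Q[z_t].
\]
Unrolling it from $J_0=0$ gives
\[
J_T\le\sum_{t=1}^T\frac{\beta_t}{2}\var_Q[z_t]\prod_{s=t+1}^{T}\ell_s\|\overline W_s\|_2 .
\]
Reindexing with $s=t'+1$ turns this product into $\prod_{t'=t}^{T-1}\ell_{t'+1}\|\overline W_{t'+1}\|_2$, which is the second product in the statement.

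Finally, $z_t$ is precisely the output of a $t$-layer network of the form treated in Proposition~\ref{varbound}, with output dimension $d_t$. That proposition therefore gives
\[
\var_Q[z_t]\le\sigma^2\sum_{u=1}^t d_uM_{u-1}^2\prod_{t'=u}^{t-1}\ell_{t'}^2\|\overline W_{t'+1}\|_2^2 .
\]
Substituting this into the unrolled bound yields the claimed double sum. The main point of care is the independence step computing $\jen_Q[z_t]$. Two things matter there. The Jensen gap of the pre-activation must be expressed through that of the previous post-activation, not through its variance. And Lemma~\ref{JensenLemmaNN} must be applied to $g_t\circ z_t$ as a function of all weights, with its own mean $\E z_t$, which that lemma already handles. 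Everything else is bookkeeping of indices.
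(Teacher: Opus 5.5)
Your proposal is correct and follows the paper's proof. The paper establishes by induction on $T$ the same intermediate bound
$|\jen_Q[L(h_{W_{1:T}}(x),y)]|\le\sum_{t=1}^T\frac{\beta_t}{2}\var_Q[h_{W_{1:t}}(x)]\prod_{t'=t}^{T-1}\ell_{t'+1}\|\overline{W}_{t'+1}\|_2$
using Lemma~\ref{JensenLemmaNN} and the step $\|\jen_Q[h_{W_{1:T}}(x)]\|\le\|\overline{W}_T\|_2\,\|\jen_Q[g_{T-1}(h_{W_{1:T-1}}(x))]\|$, and then substitutes Proposition~\ref{varbound}. Your unrolled recursion is the same argument; it also makes explicit the independence justification for that step and the fact that the per-layer Jensen gaps are vector-valued.
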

 \noindent \textbf{Proof:} Let $g_T(z):=L(z,y)$. The dependency on $y$ does not matter for the proof. The $\beta$-smoothness of the loss function $L(z,y)$ will be denoted by $\beta_T$ and the Lipschitz constant by $\ell_T$. We will show by induction on the number of layers that
 \[|\jen_Q[g_T(h_{W_T}(x))]|\leq \sum_{t=1}^T\frac{\beta_t}{2}\var_Q[h_{W_{1:t}}(x)]\prod_{t'=t}^{T-1}\ell_{t'+1}\|\overline{W}_{t'+1}\|_2,\]
 where the product is equal to $1$ if $t=T$. For $T=1$, the result follows immediately from Lemma \ref{JensenLemmaNN}. Assume that the result is true for $T-1$ layers. From Lemma \ref{JensenLemmaNN} and the induction hypothesis, we have
\[\begin{aligned}
|\jen_Q[g_T(h_{W_{1:T}}(x))]|&\leq \ell_{T}\|\jen_Q[h_{W_{1:T}}(x)]\|+\frac{\beta_T}{2}\var_Q[h_{W_{1:T}}(x)] \\
&\leq \ell_{T}\|\overline{W}_{T}\|_2\|\jen[g_{T-1}(h_{W_{1:T-1}}(x))]\|+\frac{\beta_T}{2}\var_Q[h_{W_{1:T}}(x)]\\
&\leq  \ell_{T}\|\overline{W}_{T}\|_2\sum_{t=1}^{T-1}\frac{\beta_t}{2}\var_Q[h_{W_{1:t}}(x)]\prod_{t'=t}^{T-2}\ell_{t'+1}\|\overline{W}_{t'+1}\|_2+\frac{\beta_T}{2}\var_Q[h_{W_{1:T}}(x)]\\
&=\sum_{t=1}^{T-1}\frac{\beta_t}{2}\var_Q[h_{W_{1:t}}(x)]\prod_{t'=t}^{T-1}\ell_{t'+1}\|\overline{W}_{t'+1}\|_2+\frac{\beta_T}{2}\var_Q[h_{W_{1:T}}(x)]\\
&=\sum_{t=1}^{T}\frac{\beta_t}{2}\var_Q[h_{W_{1:t}}(x)]\prod_{t'=t}^{T-1}\ell_{t'+1}\|\overline{W}_{t'+1}\|_2.
\end{aligned}
\]
The proof of the Theorem is now completed by using the upper bound on $\var_Q[h_{W_{1:t}}(x)]$ obtained in Proposition \ref{varbound}. \Endproof

\begin{thm}[Unbounded activation functions]
\label{jenUpperBound-unbounded}
Consider the same assumptions as in Proposition~\ref{varbound-unbounded} and also assume that each non-linear function
\(g_t:\mathbb{R}^{d_t}\to\mathbb{R}^{d_t}\) is \(\beta_t\)-smooth. Then, we have
\[
\begin{aligned}
|\jen_Q[L(h_{W_{1:T}}(x),y)]|
\le\;&
\frac{\sigma^2 M_0^2}{2}
\sum_{t=1}^{T}\sum_{u=1}^t
\beta_t d_u \\
&\times
\bigg(
\prod_{t'=1}^{u-1}\ell_{t'}^2
\bigl(\sigma^2 d_{t'}+\|\overline{W}_{t'}\|_2^2\bigr)
\bigg)
\bigg(
\prod_{t'=u}^{t-1}\ell_{t'}^2\|\overline{W}_{t'+1}\|_2^2
\bigg)
\bigg(
\prod_{t'=t}^{T-1}\ell_{t'+1}\|\overline{W}_{t'+1}\|_2
\bigg),
\end{aligned}
\]
where the first product is equal to \(1\) if \(u=1\), the second product is equal to \(1\) if \(u=t\), and the third product is equal to \(1\) if \(t=T\).
\end{thm}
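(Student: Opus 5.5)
The argument follows the proof of Theorem~\ref{jenUpperBound} almost verbatim. Only the final substitution changes: the variance bound of Proposition~\ref{varbound-unbounded} replaces that of Proposition~\ref{varbound}.

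First, set $g_T(z):=L(z,y)$, with Lipschitz constant $\ell_T:=\ell$ and smoothness $\beta_T:=\beta$. We then re-establish, by induction on the number of layers, the structural inequality
\[
|\jen_Q[g_T(h_{W_{1:T}}(x))]|\le \sum_{t=1}^T\frac{\beta_t}{2}\var_Q[h_{W_{1:t}}(x)]\prod_{t'=t}^{T-1}\ell_{t'+1}\|\overline{W}_{t'+1}\|_2 .
\]
The base case is Lemma~\ref{JensenLemmaNN}. For the inductive step we again apply Lemma~\ref{JensenLemmaNN} to $g_T$. We then use the identity
\[
\jen_Q[W_Tu]=\overline W_T\,\jen_Q[u],
\]
which holds because $W_T$ is independent of the earlier layers under the product Gaussian $Q$ and has mean $\overline W_T$. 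This gives
\[
\|\jen_Q[h_{W_{1:T}}(x)]\|\le\|\overline W_T\|_2\,\|\jen_Q[g_{T-1}(h_{W_{1:T-1}}(x))]\|,
\]
after which the induction hypothesis applies, this time with the vector-valued $g_{T-1}$ in place of the loss. Lemma~\ref{JensenLemmaNN} allows vector-valued $f$, so this causes no difficulty.

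Nothing in this step used boundedness of the activations. The only place boundedness entered the earlier proof was the variance estimate. We should check, however, that all expectations are finite. Under the unbounded assumptions with $g_t(0)=0$, each $\|u_t\|\le \ell_t\|W_t\|_2\|u_{t-1}\|$, so the network output is dominated by a polynomial in Gaussian entries. Its moments, and those of $\nabla g_t$ evaluated along the interpolation path, are therefore finite. The Taylor argument in Lemma~\ref{JensenLemmaNN} only needs the first moment of the integrand, which is controlled by $\beta_t\var_Q$, and this is finite by Proposition~\ref{varbound-unbounded}.

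Second, we substitute the bound of Proposition~\ref{varbound-unbounded}, applied to the truncated network of depth $t$, into each term:
\[
\var_Q[h_{W_{1:t}}(x)]\le \sigma^2M_0^2\sum_{u=1}^t d_u\Big(\prod_{t'=1}^{u-1}\ell_{t'}^2(\sigma^2d_{t'}+\|\overline W_{t'}\|_2^2)\Big)\Big(\prod_{t'=u}^{t-1}\ell_{t'}^2\|\overline W_{t'+1}\|_2^2\Big).
\]
The truncated network $h_{W_{1:t}}$ is itself a $t$-layer network satisfying the hypotheses, with output dimension $d_t$. Multiplying by $\frac{\beta_t}{2}\prod_{t'=t}^{T-1}\ell_{t'+1}\|\overline W_{t'+1}\|_2$ and summing over $t$ then gives exactly the stated double sum.

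The main obstacle is purely bookkeeping. We must make sure the Lipschitz indices match: the variance bound uses $\ell_{t'}$ for $t'\le t-1$, while the propagation product uses $\ell_{t'+1}$, which ends with $\ell_T=\ell$ for the loss. We must also check that the empty-product conventions agree at $u=1$, $u=t$ and $t=T$.
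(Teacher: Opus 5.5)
Your proposal is correct and matches the paper's proof. The paper also reruns the induction from the proof of Theorem~\ref{jenUpperBound}, which gives $|\jen_Q[L(h_{W_{1:T}}(x),y)]|\le\sum_{t=1}^T\frac{\beta_t}{2}\var_Q[h_{W_{1:t}}(x)]\prod_{t'=t}^{T-1}\ell_{t'+1}\|\overline{W}_{t'+1}\|_2$ without using bounded activations, and then applies Proposition~\ref{varbound-unbounded} to each truncated network in place of Proposition~\ref{varbound}; your justification of $\jen_Q[W_Tu]=\overline{W}_T\,\jen_Q[u]$ via independence under $Q$ and your integrability check only make explicit what the paper leaves implicit.
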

\noindent \textbf{Proof:} The proof is the same as that of Theorem~\ref{jenUpperBound}, but using Proposition~\ref{varbound-unbounded} instead of Proposition~\ref{varbound}. \Endproof
We now turn to generalization bounds for the Jensen gap itself, viewed as a function class indexed by $\overline w$.

\section{Generalization bound for the Jensen gap from Rademacher complexity}
\label{sec:jg-rademacher}
In the linear case, we can obtain a result from a simple contraction argument. We state a well known contraction lemma in the multi-class case next.

\begin{lem}[Contraction inequality in the multi-class case {\citep{maurer2016vector}}]
\label{lem:vector-contraction-maurer}
Let $x_1,\ldots,x_n$ be fixed and let $\mathcal F$ be a class of functions
$f:\mathcal X\to\R^C$ with components $f_k$.
For each $i\in\{1,\ldots,n\}$, let $\psi_i:\R^C\to\R$ satisfy
\[
|\psi_i(u)-\psi_i(v)| \le L_i \|u-v\|_2,
\qquad \forall u,v\in\R^C,
\]
for some constants $L_i> 0$.
Then,
\[
\E_{\varepsilon}
\Bigg[
\sup_{f\in\mathcal F}
\sum_{i=1}^n \varepsilon_i\,\psi_i(f(x_i))
\Bigg]
\;\le\;
\sqrt{2}\,
\E_{\varepsilon_{ik}}
\Bigg[
\sup_{f\in\mathcal F}
\sum_{i=1}^n\sum_{k=1}^C
\varepsilon_{ik}\,L_i\,f_k(x_i)
\Bigg],
\]
where $(\varepsilon_i)$ and $(\varepsilon_{ik})$ are independent Rademacher
variables.
\end{lem}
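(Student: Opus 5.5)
The plan is to follow Maurer's argument: peel off one index $i$ at a time, with the core being a single-coordinate comparison inequality. Fix $x_1,\ldots,x_n$ and write $\mathcal R$ for any functional $\mathcal F\to\R$ (it will carry the terms already processed). It suffices to prove that for any such $\mathcal R$, any $1\le i\le n$, and any $L$-Lipschitz $\psi:\R^C\to\R$,
\[
\E_{\varepsilon}\sup_{f\in\mathcal F}\big[\mathcal R(f)+\varepsilon\,\psi(f(x_i))\big]
\;\le\;
\E_{\varepsilon_{1},\ldots,\varepsilon_{C}}\sup_{f\in\mathcal F}\Big[\mathcal R(f)+\sqrt2\,L\sum_{k=1}^C\varepsilon_k f_k(x_i)\Big].
\]
Given this, apply it successively for $i=1,\ldots,n$. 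At step $i$, condition on all other Rademacher variables. Take $\mathcal R(f)$ to be the sum of the untreated terms $\varepsilon_j\psi_j(f(x_j))$ for $j>i$ plus the already replaced terms $\sqrt2 L_j\sum_k\varepsilon_{jk}f_k(x_j)$ for $j<i$. Fubini then yields the stated inequality. Measurability and finiteness issues can be handled by restricting to finite subclasses of $\mathcal F$ and passing to the supremum.

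For the one-step inequality, first average over $\varepsilon\in\{\pm1\}$. The left side equals
\[
\tfrac12\sup_{f,g}\big[\mathcal R(f)+\mathcal R(g)+\psi(f(x_i))-\psi(g(x_i))\big].
\]
By the Lipschitz property, $\psi(f(x_i))-\psi(g(x_i))\le L\|f(x_i)-g(x_i)\|_2$. The key ingredient is the Khintchine inequality with Szarek's optimal constant: for every $v\in\R^C$,
\[
\|v\|_2\le\sqrt2\,\E_\varepsilon\Big|\sum_k\varepsilon_k v_k\Big|.
\]
Applying it with $v=f(x_i)-g(x_i)$ and pulling the expectation outside the supremum gives the bound
\[
\tfrac12\,\E_\varepsilon\sup_{f,g}\Big[\mathcal R(f)+\mathcal R(g)+\sqrt2 L\Big|\sum_k\varepsilon_k(f_k(x_i)-g_k(x_i))\Big|\Big].
\]
The absolute value can be dropped because the expression is symmetric under swapping $f$ and $g$. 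The supremum then splits into
\[
\tfrac12\,\E\sup_f\Big[\mathcal R(f)+\sqrt2L\sum_k\varepsilon_kf_k(x_i)\Big]+\tfrac12\,\E\sup_g\Big[\mathcal R(g)-\sqrt2L\sum_k\varepsilon_kg_k(x_i)\Big].
\]
Since $-\varepsilon$ has the same law as $\varepsilon$, both halves equal the right-hand side, which proves the one-step inequality.

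The main obstacle is the Khintchine step with the sharp constant $\sqrt2$, i.e.\ $\E|\sum_k\varepsilon_kv_k|\ge\|v\|_2/\sqrt2$. This is Szarek's theorem, and I would cite it rather than reprove it. A self-contained route giving a worse constant would use Hölder: $\E X^2\le(\E|X|)^{2/3}(\E X^4)^{1/3}$ together with $\E X^4\le3\|v\|^4$. That yields $\E|X|\ge\|v\|/\sqrt3$, which would replace $\sqrt2$ by $\sqrt3$. Everything else in the argument is elementary bookkeeping.
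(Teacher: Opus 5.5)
Your proof is correct, but it takes a different route from the paper.

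\textbf{The paper's route.} It does not reprove the contraction inequality. It treats Maurer's corollary, stated with a common Lipschitz constant, as a black box and reduces to it by rescaling. It sets $h_i(z):=\psi_i(z/L_i)$, which is $1$-Lipschitz, and passes to the auxiliary class $\mathcal G$ of functions $g$ with $g(x_i)=L_i f(x_i)$. Then $\sum_i\varepsilon_i\psi_i(f(x_i))=\sum_i\varepsilon_i h_i(g(x_i))$ and $\sum_{i,k}\varepsilon_{ik}g_k(x_i)=\sum_{i,k}\varepsilon_{ik}L_if_k(x_i)$.

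\textbf{Your route.} You reproduce Maurer's own argument: peeling one index at a time, symmetrizing over pairs $(f,g)$, dropping the absolute value by the $f\leftrightarrow g$ symmetry, and invoking Szarek's sharp Khintchine constant. You build $L_i$ directly into each one-step comparison. Your reduction to finite subclasses is sound, since all Rademacher expectations are finite averages.

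\textbf{What each buys.} The paper's version is shorter. Yours is self-contained apart from Szarek's theorem. It shows exactly where the $\sqrt2$ comes from, and that an elementary H\"older bound would give $\sqrt3$. It also avoids a small technicality in the rescaling step. If two sample points coincide, $x_i=x_j$, with $L_i\neq L_j$ and $f(x_i)\neq 0$, then no function $g$ on $\mathcal X$ can satisfy both $g(x_i)=L_if(x_i)$ and $g(x_j)=L_jf(x_j)$. So the paper's class $\mathcal G$ must implicitly be indexed by $i$ rather than by points of $\mathcal X$; your argument never needs such a class.
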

\noindent \textbf{Proof:} This is a slight extension to Corollary $4$ of \cite{maurer2016vector} where we allow the Lipschitz constants to depend on $i$.
Assume $L_i>0$ for all $i$. Define $h_i:\R^C\to\R$ by
$h_i(z):=\psi_i(z/L_i)$. Then, each $h_i$ is $1$-Lipschitz.
Let
\[
\mathcal G:=\{g:\mathcal X\to\R^C:\ \exists f\in\mathcal F \text{ s.t. } g(x_i)=L_i f(x_i)\ \forall i\}.
\]
For any $f\in\mathcal F$, letting $g\in\mathcal G$ satisfy $g(x_i)=L_i f(x_i)$ gives
$h_i(g(x_i))=\psi_i(f(x_i))$, hence
\[
\sup_{f\in\mathcal F}\sum_{i=1}^n \varepsilon_i\,\psi_i(f(x_i))
=
\sup_{g\in\mathcal G}\sum_{i=1}^n \varepsilon_i\,h_i(g(x_i)).
\]
Applying Maurer’s contraction inequality \citep{maurer2016vector} to the $h_i$'s yields
\[
\E_{\varepsilon}\Big[\sup_{f\in\mathcal F}\sum_{i=1}^n \varepsilon_i\,\psi_i(f(x_i))\Big]
\le
\sqrt{2}\,
\E_{\varepsilon_{ik}}\Big[\sup_{g\in\mathcal G}\sum_{i=1}^n\sum_{k=1}^C \varepsilon_{ik}\,g_k(x_i)\Big].
\]
Finally, by the definition of $\mathcal G$,
\[
\sup_{g\in\mathcal G}\sum_{i=1}^n\sum_{k=1}^C \varepsilon_{ik}\,g_k(x_i)
=
\sup_{f\in\mathcal F}\sum_{i=1}^n\sum_{k=1}^C \varepsilon_{ik}\,L_i\,f_k(x_i).
\]
\Endproof
We can now bound the Rademacher complexity of the Jensen gap class in the linear case.
\begin{thm}
\label{thm:rad-jensen-gap-linear-multiclass}
Let $S=\{(x_i,y_i)\}_{i=1}^n$ with $x_i\in\R^{d}$ and $y_i\in\{1,\cdots,C\}$.
Consider the linear multiclass score function $h_W(x)=Wx$ with $W\in\R^{C\times d}$ and
$\Theta:=\{W\in\R^{C\times d}:\ \|W\|_F\le R\}$.
Assume that for any $y$, the map $z\mapsto L(z,y)$ is differentiable and $\beta$-smooth.
For $\sigma>0$, let \( Q \) be an isotropic Gaussian distribution with mean $\overline{W}$ and covariance matrix \( \sigma^2 I_{Cd \times Cd} \). Define
\[
F_{\sigma}(\overline{W};x,y)
:=L(h_{\overline{W}}(x),y)-\E_{W\sim Q}\big[L(h_{W}(x),y)\big].
\]
and let $\mathcal{F}_{\sigma}:=\{(x,y)\mapsto F_{\sigma}(\overline{W};x,y):\, \overline{W}\in\Theta\}$.
Then, the empirical Rademacher complexity of $\mathcal{F}_{\sigma}$ satisfies
\[
\mathfrak R_S(\mathcal F_{\sigma})
\;\le\;
\frac{\sqrt{2}\beta R \sigma C}{n}\sqrt{\sum_{i=1}^n \|x_i\|^4}.
\]
In particular, if $\|x_i\|\le M$ for all $i$, then
\[
\mathfrak R_S(\mathcal F_\sigma)
\le
\frac{\sqrt{2}\beta R\sigma C\,M^2}{\sqrt{n}}.
\]
\end{thm}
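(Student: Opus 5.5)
The key observation is that $F_\sigma(\overline W;x,y)$ depends on $\overline W$ only through the score $z=\overline W x$. This lets me apply the vector contraction inequality of Lemma~\ref{lem:vector-contraction-maurer} with sample-dependent Lipschitz constants, and then finish with the standard Rademacher bound for Frobenius-bounded linear classes.

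\emph{Step 1 (reduction to the score).} Write $W=\overline W+\sigma\Xi$, where $\Xi\in\R^{C\times d}$ has i.i.d.\ standard Gaussian entries. For fixed $x$, the vector $\Xi x\in\R^C$ has i.i.d.\ $\mathcal N(0,\|x\|^2)$ coordinates, so $\Xi x \overset{d}{=} \|x\|g$ with $g\sim\mathcal N(0,I_C)$. Hence
\[
F_\sigma(\overline W;x_i,y_i)=\psi_i(\overline W x_i),
\qquad
\psi_i(z):=L(z,y_i)-\E_{g}\,L\big(z+\sigma\|x_i\|g,\,y_i\big).
\]

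\emph{Step 2 (Lipschitz constant of $\psi_i$).} Differentiating under the expectation, which is justified since $\nabla_z L$ grows at most linearly by $\beta$-smoothness, gives
\[
\nabla\psi_i(z)=\nabla_zL(z,y_i)-\E_g\nabla_zL\big(z+\sigma\|x_i\|g,\,y_i\big).
\]
By $\beta$-smoothness,
\[
\|\nabla\psi_i(z)\|\le \beta\sigma\|x_i\|\,\E\|g\|\le \beta\sigma\|x_i\|\sqrt C,
\]
using Jensen's inequality for $\E\|g\|\le\sqrt{\E\|g\|^2}=\sqrt C$. So $\psi_i$ is $L_i$-Lipschitz with respect to $\|\cdot\|_2$, where $L_i:=\beta\sigma\sqrt C\,\|x_i\|$. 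Note that no Lipschitz assumption on $L$ itself is needed: the two $\ell$-dependent first-order terms cancel, and this cancellation is exactly why only $\beta$ appears in the statement. Verifying this carefully is the main point; the rest is routine.

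\emph{Step 3 (contraction and linear-class bound).} Lemma~\ref{lem:vector-contraction-maurer} applied with $f(x)=\overline W x$ gives
\[
n\,\mathfrak R_S(\mathcal F_\sigma)\le \sqrt2\,\E\sup_{\|\overline W\|_F\le R}\Big\langle \overline W,\ \sum_{i,k}\varepsilon_{ik}L_i\,e_kx_i^\top\Big\rangle_F.
\]
By Cauchy--Schwarz, the supremum is at most $R$ times the Frobenius norm of the random matrix. By Jensen's inequality and the orthogonality of the Rademacher variables, the expected norm is at most
\[
\sqrt{\sum_{i,k}L_i^2\|x_i\|^2}=\sqrt{C\sum_iL_i^2\|x_i\|^2}.
\]
Substituting $L_i^2=\beta^2\sigma^2C\|x_i\|^2$ yields
\[
\mathfrak R_S(\mathcal F_\sigma)\le \frac{\sqrt2\,\beta R\sigma C}{n}\sqrt{\sum_i\|x_i\|^4}.
\]
If $\|x_i\|\le M$, then $\sqrt{\sum_i\|x_i\|^4}\le M^2\sqrt n$, which gives the stated corollary.
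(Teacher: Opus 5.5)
Your proof is correct and yields exactly the paper's constant, but it performs the two main steps in the opposite order.

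The paper keeps the noise inside the argument. It writes $F_\sigma(\overline W;x_i,y_i)=-\E_U\int_0^1\langle\nabla_zL(z_i+tU_i,y_i),U_i\rangle\,dt$ and moves $\E_U$ and $\int_0^1$ outside the supremum. It then applies Lemma~\ref{lem:vector-contraction-maurer} for each fixed $(t,U)$ with random Lipschitz constants $\beta\|U_i\|$. Only at the end does it average, via Jensen and $\E\|U_i\|^2=C\sigma^2\|x_i\|^2$.

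You instead integrate the noise out first, using $\Xi x\overset{d}{=}\|x\|g$. Each $\psi_i$ is then a fixed function of the score with deterministic Lipschitz constant $\beta\sigma\|x_i\|\,\E\|g\|$, and you apply contraction once. This avoids the integral representation and the $\sup\E\le\E\sup$ exchange.

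Your ordering is also marginally sharper before the final relaxation. You may keep $(\E\|g\|)^2<C$ in place of $C$. The paper's ordering leaves $\E_U\bigl(\sum_i\|U_i\|^2\|x_i\|^2\bigr)^{1/2}$. By convexity of the Euclidean norm, this dominates the same expression with $\E\|U_i\|$ in place of $\|U_i\|$, which is your intermediate quantity.

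For the linear class, the paper's pathwise version offers no technical advantage. Both arguments rely on the score perturbation $Ux_i$ not depending on $\overline W$. Its merit is that it follows the same path-integral template later used in Lemma~\ref{metric} for nonlinear networks.
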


\noindent\textbf{Proof:}
Let $\overline W\in\Theta$ and let $W=\overline W+U$, where
$U\sim\mathcal N(0,\sigma^2 I_{Cd\times Cd})$.
For each $i$, set $z_i=\overline W x_i\in\R^{C}$ and $U_i=Ux_i\in\R^{C}$, so that
$h_W(x_i)=z_i+U_i$.
By the first-order integral theorem,
\[
L(z_i+U_i,y_i)-L(z_i,y_i)
=
\int_0^1 \langle \nabla_z L(z_i+tU_i,y_i),\,U_i\rangle\,dt,
\]
and therefore
\[
F_\sigma(\overline W;x_i,y_i)
=
-\E_U\int_0^1 \langle \nabla_z L(z_i+tU_i,y_i),\,U_i\rangle\,dt.
\]
Using $\sup\E\le\E\sup$ and $\sup\int\le\int\sup$, we obtain
\[
\mathfrak R_S(\mathcal F_\sigma)
\le
\int_0^1
\E_{\varepsilon,U}
\Bigg[
\sup_{\overline W\in\Theta}
\frac1n\sum_{i=1}^n
\varepsilon_i
\Big(
-\langle \nabla_z L(z_i+tU_i,y_i),\,U_i\rangle
\Big)
\Bigg]dt,
\]
where $\varepsilon=(\varepsilon_1,\cdots,\varepsilon_n)\in\{\pm 1\}^n$ are Rademacher random variables.
For fixed $t$ and $U_1,\dots,U_n$, define
$\psi_i(z):=-\langle \nabla_z L(z+tU_i,y_i),U_i\rangle$.
By $\beta$-smoothness of $L(\cdot,y_i)$,
\[
|\psi_i(z)-\psi_i(z')|
\le
\beta\,\|U_i\|\,\|z-z'\|,
\]
so $\psi_i$ is $(\beta\|U_i\|)$-Lipschitz. Applying Maurer’s vector contraction inequality
(Lemma~\ref{lem:vector-contraction-maurer}) to the class
$f_{\overline W}(x)=\overline W x$ with Lipschitz constants
$L_i=\beta\|U_i\|$, we obtain
\[
\mathfrak R_S(\mathcal F_\sigma)
\le
\frac{\sqrt{2}\beta}{n}\,
\E_{\varepsilon_{ik},U}
\Bigg[
\sup_{\|\overline W\|_F\le R}
\sum_{i=1}^n\sum_{k=1}^C
\varepsilon_{ik}\,\|U_i\|\,(\overline W x_i)_k
\Bigg].
\]
The supremum over $\overline W$ can be obtained explicitly:
\[
\sup_{\|\overline W\|_F\le R}
\sum_{i=1}^n\sum_{k=1}^C
\varepsilon_{ik}\,\|U_i\|\,(\overline W x_i)_k
=
R\Big\|
\sum_{i=1}^n
\|U_i\|\,\varepsilon_i^{(C)} x_i^\top
\Big\|_F,
\]
where $\varepsilon_i^{(C)}=(\varepsilon_{i1},\ldots,\varepsilon_{iC})^\top\in\R^C$.
This is true since 
\begin{align*}
\sum_{i=1}^n\sum_{k=1}^C \varepsilon_{ik}\,\|U_i\|\,(\overline W x_i)_k
&=\sum_{i=1}^n \|U_i\|\,(\varepsilon_i^{(C)})^\top \overline W x_i \\
&=\sum_{i=1}^n\Big\langle \overline W, \|U_i\|\,\varepsilon_i^{(C)} x_i^\top\Big\rangle_F \\
&= \Big\langle \overline W,\sum_{i=1}^n \|U_i\|\,\varepsilon_i^{(C)} x_i^\top\Big\rangle_F
,
\end{align*}
where $\Big\langle \cdot,\cdot\Big\rangle_F$ denotes the Frobenius inner product.
Therefore, the empirical Rademacher complexity is upper bounded as follows:
\[
\mathfrak R_S(\mathcal F_\sigma)
\le
\frac{\sqrt{2}\beta R}{n}
\E_{\varepsilon_{ik},U}
\Big\|
\sum_{i=1}^n
\|U_i\|\,\varepsilon_i^{(C)} x_i^\top
\Big\|_F.
\]

By Jensen’s inequality and independence of Rademacher random variables,
\[
\E_{\varepsilon_{ik},U}
\Big\|
\sum_{i=1}^n
\|U_i\|\,\varepsilon_i^{(C)} x_i^\top
\Big\|_F
\le
\sqrt{
C\sum_{i=1}^n
\E_U\!\left[\|U_i\|^2\right]\|x_i\|^2
}.
\]
This is true since
\begin{align*}
\E_{\varepsilon}\Big\|
\sum_{i=1}^n \|U_i\|\,\varepsilon_i^{(C)} x_i^\top
\Big\|_F^2
&=
\sum_{k=1}^C
\E_{\varepsilon}
\sum_{i,j}
\|U_i\|\|U_j\|
\varepsilon_{ik}\varepsilon_{jk}
\langle x_i,x_j\rangle \\
&=
\sum_{k=1}^C
\sum_{i=1}^n
\|U_i\|^2 \|x_i\|^2
=
C\sum_{i=1}^n \|U_i\|^2 \|x_i\|^2 .
\end{align*}
Since $U_i=Ux_i\sim\mathcal N(0,\sigma^2\|x_i\|^2 I_C)$, we have
\[
\E_U\|U_i\|^2 = C\sigma^2\|x_i\|^2,
\]
and therefore
\[
\E_{\varepsilon_{ik},U}
\Big\|
\sum_{i=1}^n
\|U_i\|\,\varepsilon_i^{(C)} x_i^\top
\Big\|_F
\le
\sigma C\sqrt{\sum_{i=1}^n\|x_i\|^4}.
\]

Substituting into the previous bound gives
\[
\mathfrak R_S(\mathcal F_\sigma)
\le
\frac{\sqrt{2}\beta R\sigma C}{n}
\sqrt{\sum_{i=1}^n\|x_i\|^4}.
\]

If $\|x_i\|\le M$ for all $i$, then $\sum_{i=1}^n\|x_i\|^4\le nM^4$, and thus
\[
\mathfrak R_S(\mathcal F_\sigma)
\le
\frac{\sqrt{2}\beta R\sigma C\,M^2}{\sqrt{n}}.
\]
\Endproof

For bounding the Rademacher complexity of the Jensen gap class for non-linear neural networks, we will need more advanced techniques than a contraction argument. The next lemma is the key technical result. It bounds the pseudo-metric induced from the Jensen gap (defined in the lemma) with a pseudo-metric involving the properties of the loss function and measures of sensitivity to noise (first order and second order) of the neural networks in the class.

\begin{lem}[Bounding the metric induced by the Jensen gap]
\label{metric}
Let $S=\{(x_i,y_i)\}_{i=1}^n$ and let $h_w:\mathcal X\to\R^C$ be a neural network
parameterized by $w\in \Theta\subset \R^m$. Assume that \(\Theta\) is convex. Fix $\sigma>0$ and let $u\sim\mathcal N(0,I_m)$.
For each $i\in\{1,\dots,n\}$ define
\[
F_i(w)
:=
L(h_w(x_i),y_i)
-
\E_u\!\left[L(h_{w+\sigma u}(x_i),y_i)\right],
\]
and the sample-dependent pseudo-metric
\[
d_S(w_1,w_2)
:=
\Bigg(\sum_{i=1}^n (F_i(w_1)-F_i(w_2))^2\Bigg)^{1/2}.
\]

Assume that for each $y$, the map $z\mapsto L(z,y)$ is differentiable,
$\ell$-Lipschitz and $\beta$-smooth. Assume moreover that, for each $i$, the map
$w\mapsto h_w(x_i)$ is twice continuously differentiable on \(\R^m\), and that the Gaussian expectations below are finite and may be differentiated under the expectation sign.
Write
\[
J_i(w):=\nabla_w h_w(x_i)\in\R^{C\times m},
\qquad
H_{i,k}(w):=\nabla_w^2\big(h_w(x_i)\big)_k\in\R^{m\times m}.
\]
Assume that there exist sample-dependent positive semi-definite matrices $G_i\succeq 0$ such that,
for all $w\in\Theta$,
\[
J_i(w)^\top J_i(w)\preceq G_i.
\]
Define
\[
A_{i,\sigma}
:=
\sup_{w\in\Theta}
\int_0^1
\E_u\!\left[
 u^\top J_i(w+t\sigma u)^\top J_i(w+t\sigma u)u
\right]dt,
\]
and assume that $A_{i,\sigma}<\infty$ for all $i$.
For each $w\in\Theta$, define the Gaussian averaged Hessian matrix
\[
B_{i,\sigma}(w)
:=
\int_0^1
\E_u\!\left[
\sum_{k=1}^C
H_{i,k}(w+t\sigma u)^\top u u^\top H_{i,k}(w+t\sigma u)
\right]dt.
\]
Assume that there exist sample-dependent positive semi-definite matrices
$B_{i,\sigma}\succeq0$ such that, for all $w\in\Theta$,
\[
B_{i,\sigma}(w)\preceq B_{i,\sigma}.
\]
Define the sample-dependent matrix
\[
M_{S,\sigma}
:=
2\sigma^2\sum_{i=1}^n
\Big(\beta^2 A_{i,\sigma}G_i+\ell^2 B_{i,\sigma}\Big).
\]
Then, for all $w_1,w_2\in\Theta$,
\[
d_S(w_1,w_2)^2
\le
(w_1-w_2)^\top M_{S,\sigma}(w_1-w_2).
\]
\end{lem}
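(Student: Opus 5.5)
The plan is to write each $F_i$ as an averaged directional derivative, then compare $F_i(w_1)$ and $F_i(w_2)$ pathwise in $(t,u)$, splitting the difference into a ``loss-curvature'' part and a ``network-curvature'' part. Throughout, set $\Delta:=w_1-w_2$ and $w_s:=w_2+s\Delta$ for $s\in[0,1]$; these points lie in $\Theta$ by convexity. By the first-order integral theorem applied to $t\mapsto L(h_{w+t\sigma u}(x_i),y_i)$, as in the proof of Theorem~\ref{thm:rad-jensen-gap-linear-multiclass},
\[
F_i(w)=-\E_u\int_0^1 \phi_i(w;t,u)\,dt,
\qquad
\phi_i(w;t,u):=\sigma\,\nabla_z L\big(h_{w+t\sigma u}(x_i),y_i\big)\,J_i(w+t\sigma u)\,u .
\]
By Jensen's inequality (the square is convex, applied to $\E_u\int_0^1$),
\[
(F_i(w_1)-F_i(w_2))^2\le \E_u\int_0^1\big(\phi_i(w_1;t,u)-\phi_i(w_2;t,u)\big)^2dt .
\]
So it suffices to bound the pathwise difference of $\phi_i$.

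\textbf{Splitting the pathwise difference.} Abbreviate $h^{(j)}:=h_{w_j+t\sigma u}(x_i)$ and $J^{(j)}:=J_i(w_j+t\sigma u)$, and write
\[
\phi_i(w_1)-\phi_i(w_2)=\sigma\big(\nabla L(h^{(1)})-\nabla L(h^{(2)})\big)J^{(1)}u+\sigma\,\nabla L(h^{(2)})\big(J^{(1)}-J^{(2)}\big)u=:a+b .
\]
Then $(a+b)^2\le 2a^2+2b^2$; this is the source of the factor $2$ in $M_{S,\sigma}$.

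\textbf{The term $a$.} By $\beta$-smoothness, $|a|\le\sigma\beta\|h^{(1)}-h^{(2)}\|\,\|J^{(1)}u\|$. Next,
\[
h^{(1)}-h^{(2)}=\int_0^1J_i(w_s+t\sigma u)\Delta\,ds,
\]
so by Jensen and $J_i^\top J_i\preceq G_i$ we get $\|h^{(1)}-h^{(2)}\|^2\le\Delta^\top G_i\Delta$. This bound is deterministic, so it factors out of the expectation. The remaining factor gives
\[
\E_u\int_0^1\|J^{(1)}u\|^2dt\le A_{i,\sigma},
\]
by definition of $A_{i,\sigma}$ since $w_1\in\Theta$. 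Hence $\E\int a^2\le\sigma^2\beta^2A_{i,\sigma}\,\Delta^\top G_i\Delta$.

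\textbf{The term $b$.} Write $b=\sigma\sum_k\partial_kL(h^{(2)})\,\big(J^{(1)}_k-J^{(2)}_k\big)u$. Componentwise,
\[
\big(J^{(1)}_k-J^{(2)}_k\big)u=\int_0^1\Delta^\top H_{i,k}(w_s+t\sigma u)\,u\,ds .
\]
By Cauchy--Schwarz over $k$, and $\|\nabla_z L\|\le\ell$,
\[
b^2\le\sigma^2\ell^2\sum_k\Big(\int_0^1\Delta^\top H_{i,k}u\,ds\Big)^2\le\sigma^2\ell^2\int_0^1\Delta^\top\Big(\sum_k H_{i,k}^\top uu^\top H_{i,k}\Big)\Delta\,ds .
\]
Taking $\E_u\int_0^1dt$ and exchanging it with the $s$-integral by Fubini gives
\[
\E\int b^2\le\sigma^2\ell^2\int_0^1\Delta^\top B_{i,\sigma}(w_s)\Delta\,ds\le\sigma^2\ell^2\,\Delta^\top B_{i,\sigma}\Delta .
\]
Summing $2\E\int a^2+2\E\int b^2$ over $i$ yields exactly $\Delta^\top M_{S,\sigma}\Delta$.

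\textbf{Expected obstacle.} The delicate step is the bound on $\|h^{(1)}-h^{(2)}\|^2$ in the term $a$. It uses $J_i^\top J_i\preceq G_i$ at the perturbed points $w_s+t\sigma u$, which in general lie outside $\Theta$. I would first try to read the $G_i$ hypothesis as holding on all of $\R^m$, which is natural since $J_i$ is defined on $\R^m$ and is the form needed for Corollary~\ref{cor:rad-jensen-gap}. Failing that, I would try to avoid the pointwise bound by keeping the full Gaussian average, but then the product structure with $\|J^{(1)}u\|^2$ no longer factors cleanly. The $B_{i,\sigma}$ and $A_{i,\sigma}$ hypotheses pose no such issue, since they are stated with the perturbation already built in.
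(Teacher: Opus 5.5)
There is a genuine gap, and it is the step you flagged yourself. In the term $a$ you use $\|h_{w_1+t\sigma u}(x_i)-h_{w_2+t\sigma u}(x_i)\|^2\le \Delta^\top G_i\Delta$. That bound needs $J_i^\top J_i\preceq G_i$ at the points $w_s+t\sigma u$. Since $u$ is Gaussian, these points range over all of $\R^m$, whereas the lemma only grants the bound for $w\in\Theta$.

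Reading the hypothesis as global is not a harmless reinterpretation. For nonlinear networks there is typically no finite $G_i$ on $\R^m$. For the SHEL network of Section~\ref{sec:shel}, the Jacobian with respect to $U$ at $U=0$ is proportional to $V$, which is unbounded on $\R^m$. So the lemma would become vacuous exactly where it is meant to be used.

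The problem comes from the order of your two differences. You first expand $F_i$ along the perturbation and then compare $w_1$ and $w_2$ pathwise, so the Jacobian acting on $\Delta$ is always evaluated at a perturbed point. Transporting it back to $w_s$ with Hessians would create extra terms coupling the Hessian and $\|J_i u\|$ under a single expectation, and these are not of the form appearing in $M_{S,\sigma}$.

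The paper orders the differences the other way round. It differentiates $F_i$ at an unperturbed point $w=w_s\in\Theta$, obtaining $\nabla F_i(w)=J_i(w)^\top g_i(w)-\E_u[J_i(w+\sigma u)^\top g_i(w+\sigma u)]$ with $g_i(w)=\nabla_z L(h_w(x_i),y_i)$. It then adds and subtracts $\E_u[J_i(w)^\top g_i(w+\sigma u)]$.

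In the resulting term $T_1=\langle g_i(w)-\E_u g_i(w+\sigma u),J_i(w)v\rangle$, with $v=\Delta$:
\begin{itemize}
\item the Jacobian acting on $v$ sits at $w\in\Theta$, so $\|J_i(w)v\|^2\le v^\top G_iv$ is legitimate;
\item the loss-gradient difference is now in the perturbation direction, bounded by $\beta\|h_w(x_i)-h_{w+\sigma u}(x_i)\|$, whose second moment is at most $\sigma^2A_{i,\sigma}$.
\end{itemize}
The remaining term $T_2=\E_u\langle g_i(w+\sigma u),(J_i(w)-J_i(w+\sigma u))v\rangle$ is precisely the cost of moving $J_iv$ off $\Theta$. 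It is controlled by $\ell^2\sigma^2v^\top B_{i,\sigma}(w)v$, the same Hessian quantity your term $b$ produces. Integrating $\langle\nabla F_i(w_s),\Delta\rangle^2$ over $s\in[0,1]$ then gives the claim.

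The rest of your argument is sound: the Jensen step, the factor $2$, your treatment of $b$ via $B_{i,\sigma}(w_s)$, and the use of $A_{i,\sigma}$ at $w_1\in\Theta$. What must change is which point carries the $G_i$ bound.
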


\noindent\textbf{Proof:}
Fix $i$ and abbreviate $z_i(w):=h_w(x_i)$ and
$g_i(w):=\nabla_z L(z_i(w),y_i)\in\R^C$.
By the chain rule,
\[
\nabla_w L(z_i(w),y_i)=J_i(w)^\top g_i(w).
\]
Differentiating under the expectation gives
\[
\nabla F_i(w)
=
J_i(w)^\top g_i(w)
-
\E_u\!\left[J_i(w+\sigma u)^\top g_i(w+\sigma u)\right].
\]
Add and subtract $\E_u[J_i(w)^\top g_i(w+\sigma u)]$ to obtain
\[
\nabla F_i(w)
=
J_i(w)^\top\Big(g_i(w)-\E_u g_i(w+\sigma u)\Big)
+
\E_u\!\left[(J_i(w)-J_i(w+\sigma u))^\top g_i(w+\sigma u)\right].
\]
Fix $v\in\R^m$ and write $\langle \nabla F_i(w),v\rangle = T_{1,i}(w,v)+T_{2,i}(w,v)$ with
\[
\begin{aligned}
T_{1,i}(w,v)
&:=
\langle g_i(w)-\E_u g_i(w+\sigma u),\; J_i(w)v\rangle,\\
T_{2,i}(w,v)
&:=
\E_u\langle g_i(w+\sigma u),\; (J_i(w)-J_i(w+\sigma u))v\rangle.
\end{aligned}
\]
Then, $(a+b)^2\le 2a^2+2b^2$ gives
\[
\langle \nabla F_i(w),v\rangle^2\le 2T_{1,i}(w,v)^2+2T_{2,i}(w,v)^2.
\]
We first bound $T_{1,i}$. By Cauchy--Schwarz and $\beta$-smoothness of $L(z,y)$,
\[
|T_{1,i}(w,v)|
\le
\|J_i(w)v\|\,\E_u\|g_i(w)-g_i(w+\sigma u)\|
\le
\beta\,\|J_i(w)v\|\,\E_u\|z_i(w)-z_i(w+\sigma u)\|.
\]
Using the fundamental theorem of calculus,
\[
z_i(w+\sigma u)-z_i(w)=\sigma\int_0^1 J_i(w+t\sigma u)\,u\,dt,
\]
so Jensen's inequality gives
\[
\E_u\|z_i(w+\sigma u)-z_i(w)\|^2
\le
\sigma^2\int_0^1
\E_u\!\left[
 u^\top J_i(w+t\sigma u)^\top J_i(w+t\sigma u)u
\right]dt
\le
\sigma^2 A_{i,\sigma}.
\]
Thus,
\[
\E_u\|z_i(w+\sigma u)-z_i(w)\|
\le
\sigma\sqrt{A_{i,\sigma}}.
\]
Since $\|J_i(w)v\|^2=v^\top J_i(w)^\top J_i(w)v\le v^\top G_i v$, we obtain
\[
2T_{1,i}(w,v)^2
\le
2\beta^2\sigma^2 A_{i,\sigma}\, v^\top G_i v.
\]
We now bound $T_{2,i}$. Since $L(z,y_i)$ is $\ell$-Lipschitz and differentiable, $\|g_i(w)\|\le \ell$,
so by Cauchy--Schwarz and Jensen,
\[
|T_{2,i}(w,v)|
\le
\E_u\big[\|g_i(w+\sigma u)\|\,\|(J_i(w+\sigma u)-J_i(w))v\|\big]
\le
\ell\,\sqrt{\E_u\|(J_i(w+\sigma u)-J_i(w))v\|^2}.
\]
Applying the fundamental theorem of calculus yields
\[
J_i(w+\sigma u)-J_i(w)=\sigma\int_0^1 \nabla_w^2 h_{w+t\sigma u}(x_i)[u]\,dt,
\]
where
\[
\nabla_w^2 h_{w'}(x_i)[u]
:=
\begin{pmatrix}
(H_{i,1}(w')u)^\top\\
\vdots\\
(H_{i,C}(w')u)^\top
\end{pmatrix}.
\]
By Jensen's inequality,
\[
\|(J_i(w+\sigma u)-J_i(w))v\|^2
\le
\sigma^2\int_0^1 \big\|\nabla_w^2 h_{w+t\sigma u}(x_i)[u]v\big\|^2\,dt.
\]
For any $w'$, the vector $\nabla_w^2 h_{w'}(x_i)[u]v\in\R^C$ has $k$-th coordinate
$u^\top H_{i,k}(w')v$. Therefore,
\[
\big\|\nabla_w^2 h_{w'}(x_i)[u]v\big\|^2
=
\sum_{k=1}^C (u^\top H_{i,k}(w')v)^2
=
 v^\top\left(\sum_{k=1}^C H_{i,k}(w')^\top u u^\top H_{i,k}(w')\right)v.
\]
Consequently,
\[
\E_u\|(J_i(w+\sigma u)-J_i(w))v\|^2
\le
\sigma^2 v^\top B_{i,\sigma}(w)v
\le
\sigma^2 v^\top B_{i,\sigma}v.
\]
Therefore,
\[
2T_{2,i}(w,v)^2
\le
2\ell^2\,\E_u\|(J_i(w+\sigma u)-J_i(w))v\|^2
\le
2\ell^2\sigma^2\, v^\top B_{i,\sigma} v.
\]
Combining the bounds gives, for all $w\in\Theta$ and all $v\in\R^m$,
\[
\langle \nabla F_i(w),v\rangle^2
\le
v^\top\Big(2\beta^2\sigma^2 A_{i,\sigma}G_i+2\ell^2\sigma^2 B_{i,\sigma}\Big)v.
\]
Now fix $w_1,w_2\in\Theta$, set $v=w_1-w_2$, and define $w_t=w_2+t v$.
By the fundamental theorem of calculus,
\[
F_i(w_1)-F_i(w_2)=\int_0^1 \langle \nabla F_i(w_t),v\rangle\,dt,
\]
and Jensen's inequality implies
\[
(F_i(w_1)-F_i(w_2))^2\le \int_0^1 \langle \nabla F_i(w_t),v\rangle^2\,dt.
\]
Using the bound above yields
\[
(F_i(w_1)-F_i(w_2))^2
\le
v^\top\Big(2\beta^2\sigma^2 A_{i,\sigma}G_i+2\ell^2\sigma^2 B_{i,\sigma}\Big)v.
\]
Summing over $i$ gives
\[
d_S(w_1,w_2)^2
\le
v^\top\Bigg(
2\sigma^2\sum_{i=1}^n
\Big(\beta^2 A_{i,\sigma}G_i+\ell^2 B_{i,\sigma}\Big)
\Bigg)v,
\]
which is the claimed bound.
\Endproof

It follows from Talagrand-type comparison inequalities \citep{talagrand2005generic} that we can bound the Rademacher complexity of the Jensen gap class from the trace of the matrix $M_S$. The next two results are what we need to prove this fact.

\begin{prop}[Talagrand comparison inequality {\citep[Cor.~8.5.6]{Vershynin2018HDP}}]
\label{cor:talagrand-comparison}
Let $(X_t)_{t\in T}$ be a mean-zero random process and $(Y_t)_{t\in T}$ a mean-zero Gaussian
process. Assume that for all $t,s\in T$,
\[
\|X_t-X_s\|_{\psi_2}\le K\|Y_t-Y_s\|_{L_2}.
\]
Then,
\[
\mathbb{E}\sup_{t\in T} X_t \le c K\,\mathbb{E}\sup_{t\in T} Y_t,
\]
where $c>0$ is an absolute constant. For a real-valued random variable $X$, the $L_2$ norm is defined as
\[
\|X\|_{L_2}
:=
\big(\E[X^2]\big)^{1/2}.
\]
The $\psi_2$ norm of $X$ is defined by
\[
\|X\|_{\psi_2}
:=
\inf\Big\{
t>0 \;:\;
\E\!\left[\exp\!\left(\frac{X^2}{t^2}\right)\right]\le 2
\Big\}.
\]
\end{prop}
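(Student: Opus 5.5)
This is a known result, Corollary 8.5.6 of Vershynin's \emph{High-Dimensional Probability}, which the paper quotes without proof. A self-contained argument would route the comparison through Talagrand's majorizing measure theorem, with generic chaining linking the two processes.

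We first reduce to the case $K>0$. If $K=0$, then $X_t=X_s$ almost surely for all $t,s$. Since the process has mean zero, $\E\sup_t X_t=0$, so the claim holds trivially.

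Next, define the canonical pseudo-metric $d(t,s):=\|Y_t-Y_s\|_{L_2}$ on $T$. The hypothesis says exactly that $(X_t)$ has sub-Gaussian increments with respect to the metric $K d$:
\[
\|X_t-X_s\|_{\psi_2}\le K\,d(t,s).
\]

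We then apply the generic chaining bound (Talagrand's upper bound for sub-Gaussian processes). Fix an admissible sequence of partitions $(T_k)$ with $|T_k|\le 2^{2^k}$, and choose points $\pi_k(t)\in T_k$. Write each increment as a telescoping sum
\[
X_t-X_{t_0}=\sum_{k\ge1}\big(X_{\pi_k(t)}-X_{\pi_{k-1}(t)}\big).
\]
Each link in this sum is sub-Gaussian with parameter $K\,d(\pi_k(t),\pi_{k-1}(t))$. At level $k$ there are at most $2^{2^{k+1}}$ distinct links. A union bound at each level, with deviation threshold of order $\sqrt{2^k}+u$, followed by integration of the tail, gives
\[
\E\sup_t X_t=\E\sup_t (X_t-X_{t_0})\le CK\,\gamma_2(T,d).
\]
Here $\gamma_2(T,d)=\inf_{(T_k)}\sup_t\sum_k 2^{k/2}\operatorname{diam}(T_k(t))$. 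Mean zero is used only in the first equality.

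The final step invokes the lower half of the majorizing measure theorem for the Gaussian process $Y$:
\[
\gamma_2(T,d)\le C'\,\E\sup_t Y_t.
\]
Combining the two displays gives the claim with $c=CC'$. For infinite $T$, we would argue on finite subsets and pass to the supremum by monotone convergence, interpreting the expected supremum over finite subsets as usual.

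The main obstacle is this lower bound, Talagrand's majorizing measure theorem, which is deep. Within the paper we would not reprove it but cite \citet{talagrand2005generic} and \citet[Thm.~8.6.1, Cor.~8.5.6]{Vershynin2018HDP}. The chaining upper bound is routine.
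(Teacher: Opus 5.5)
Your proposal is correct and takes essentially the same route as the source: the paper gives no proof of its own and only cites Vershynin, who obtains this comparison inequality by exactly your two steps. First, the generic chaining bound $\E\sup_t X_t\le CK\,\gamma_2(T,d)$ in the canonical metric $d(t,s)=\|Y_t-Y_s\|_{L_2}$; second, Talagrand's majorizing measure theorem $\gamma_2(T,d)\le C'\,\E\sup_t Y_t$.
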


\begin{prop}
\label{prop:rad-talagrand-MS}
Let $S=\{(x_i,y_i)\}_{i=1}^n$ be a sample and let
\[
\Theta := \{\, w\in\mathbb R^m : \|w\|_2 \le R \,\}.
\]
Let $f:\Theta\times(\mathcal X\times\mathcal Y)\to\mathbb R$ and define
\[
\mathcal F
:=
\{\, (x,y)\mapsto f(w,(x,y)) : w\in\Theta \,\}.
\]
Define the sample-dependent pseudo-metric
\[
d_S(w_1,w_2)
:=
\Bigg(
\sum_{i=1}^n
\big(
f(w_1,(x_i,y_i)) - f(w_2,(x_i,y_i))
\big)^2
\Bigg)^{1/2}.
\]
Assume there exists a sample-dependent positive semi-definite matrix
$M_S\in\mathbb R^{m\times m}$ such that
\[
d_S(w_1,w_2)
\;\le\;
\sqrt{(w_1-w_2)^\top M_S (w_1-w_2)}
\qquad\forall w_1,w_2\in\Theta.
\]
Then, the empirical Rademacher complexity of $\mathcal F$ satisfies
\[
\mathfrak R_S(\mathcal F)
\;\le\;
c\,\frac{R}{n}\,\sqrt{\tr(M_S)},
\]
where $c>0$ is a constant.
\end{prop}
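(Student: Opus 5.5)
We will compare the Rademacher process indexed by $\Theta$ with a linear Gaussian process built from $M_S$, using Proposition~\ref{cor:talagrand-comparison}. For $w\in\Theta$ set
\[
X_w:=\sum_{i=1}^n \varepsilon_i f(w,(x_i,y_i)),
\]
with $\varepsilon_i$ i.i.d.\ Rademacher. Then $n\,\mathfrak R_S(\mathcal F)=\E\sup_{w\in\Theta}X_w$, and $X_w$ is mean zero.

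\emph{Increments of $X$.} For $w,w'\in\Theta$, the increment $X_w-X_{w'}=\sum_i\varepsilon_i a_i$ with $a_i=f(w,(x_i,y_i))-f(w',(x_i,y_i))$ is a Rademacher sum. By Hoeffding's inequality its $\psi_2$ norm is at most $c_0\|a\|_2=c_0\,d_S(w,w')$. By hypothesis this is at most $c_0\sqrt{(w-w')^\top M_S(w-w')}$.

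\emph{The Gaussian process.} Let $g\sim\mathcal N(0,I_m)$ and define $Y_w:=\langle M_S^{1/2}g,w\rangle$. This is a mean-zero Gaussian process with
\[
\|Y_w-Y_{w'}\|_{L_2}^2=(w-w')^\top M_S(w-w').
\]
Hence the hypothesis of Proposition~\ref{cor:talagrand-comparison} holds with $K=c_0$, and therefore $\E\sup_w X_w\le c\,c_0\,\E\sup_w Y_w$.

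\emph{Computing the Gaussian supremum.} Since $\Theta$ is the Euclidean ball of radius $R$,
\[
\sup_{\|w\|\le R}\langle M_S^{1/2}g,w\rangle=R\|M_S^{1/2}g\|.
\]
Jensen's inequality then gives
\[
\E\,R\|M_S^{1/2}g\|\le R\sqrt{\E\, g^\top M_S g}=R\sqrt{\tr(M_S)}.
\]
Dividing by $n$ and absorbing $c_0$ into $c$ yields the claim.

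\emph{Expected obstacle.} The only delicate step is the Talagrand comparison itself, which requires a genuine Gaussian process whose increments match $M_S$. This is exactly what forces the hypothesis on $M_S$ to be a quadratic-form bound valid for all pairs in $\Theta$. One also has to check the technical condition that the supremum is measurable; it suffices to restrict to a countable dense subset of $\Theta$, assuming $f$ is continuous in $w$, which is implied by the Lipschitz bound via $M_S$.
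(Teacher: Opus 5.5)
Your proposal is correct and follows the paper's proof essentially step for step. The steps are the same: the Rademacher process $X_w$, the Hoeffding-based bound $c_0\,d_S(w,w')$ on the $\psi_2$ norm of its increments, the linear Gaussian process $Y_w=\langle g, M_S^{1/2}w\rangle$ with matching $L_2$ increments, Talagrand's comparison inequality, and the Jensen bound $\E\sup_{w\in\Theta} Y_w = R\,\E\|M_S^{1/2}g\|\le R\sqrt{\tr(M_S)}$. Your extra measurability remark is a harmless refinement that the paper leaves implicit: the hypothesis makes each $f(\cdot,(x_i,y_i))$ Lipschitz on $\Theta$, so a countable dense subset suffices.
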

\noindent{\textbf{Proof:}}
The proof is a direct application of Proposition \ref{cor:talagrand-comparison}
together with a standard computation of the Gaussian width of an ellipsoid
(see e.g.\ \citep[Chapter~7]{Vershynin2018HDP}).
Let $\varepsilon_1,\ldots,\varepsilon_n$ be independent Rademacher random variables and define
\[
X_w:=\sum_{i=1}^n \varepsilon_i\, f\big(w,(x_i,y_i)\big).
\]
Then, $\E_\varepsilon[X_w]=0$ for every $w$ and
\[
\mathfrak R_S(\mathcal F)=\frac{1}{n}\E_\varepsilon\Big[\sup_{w\in\Theta} X_w\Big].
\]
For $w_1,w_2\in\Theta$, set $\Delta_i:=f(w_1,(x_i,y_i))-f(w_2,(x_i,y_i))$ and \(
Z:=\sum_{i=1}^n \varepsilon_i \Delta_i.
\) The random variable $Z$ is subgaussian with variance proxy $\sum_{i=1}^n \Delta_i^2=d_S(w_1,w_2)^2$. This is true since
the moment generating function of $Z$ satisfies 
\[
\E_\varepsilon e^{\lambda Z}
=
\prod_{i=1}^n \E_\varepsilon e^{\lambda \varepsilon_i \Delta_i}
=
\prod_{i=1}^n \cosh(\lambda \Delta_i)
\le
\prod_{i=1}^n \exp\!\Big(\frac{\lambda^2\Delta_i^2}{2}\Big)
=
\exp\!\Big(\frac{\lambda^2}{2}\sum_{i=1}^n \Delta_i^2\Big),
\]
where we used $\cosh(u)\le e^{u^2/2}$ for all $u\in\R$. It follows that there exists a constant $c_0>0$ such that $\|Z\|_{\psi_2}\leq c_0 \sqrt{\sum_{i=1}^n \Delta_i^2}=c_0d_S(w_1,w_2)$ (see for example exercice $2.40$ from \cite{Vershynin2018HDP}). From our assumption on $d_S(w_1,w_2)$, we therefore have \[\|X_{w_1}-X_{w_2}\|_{\psi_2}\leq c_0\sqrt{(w_1-w_2)^\top M_S (w_1-w_2)}.\]
In order to apply the Talagrand comparison inequality, we need to specify a mean-zero Gaussian process $(Y_w)_{w\in\Theta}$. Let $g\sim\mathcal N(0,I_m)$ and define the Gaussian process
\[
Y_w:=\langle g, M_S^{1/2}w\rangle,
\qquad w\in\Theta.
\]
For $\Delta:=w_1-w_2$ and using $\E[gg^\top]=I_m$, we get
\[
\|Y_{w_1}-Y_{w_2}\|_{L_2}^2
=
\E\big(g^\top M_S^{1/2}\Delta\big)^2
=
\E\big[\Delta^\top M_S^{1/2} g g^\top M_S^{1/2}\Delta\big]
=
\Delta^\top M_S \Delta.
\]
Hence, $\|Y_{w_1}-Y_{w_2}\|_{L_2}=\sqrt{(w_1-w_2)^\top M_S (w_1-w_2)}$. We can therefore apply the Talagrand comparison inequality with $K=c_0$. Thus, there exists a constant $c_1>0$ such that
\[
\E_\varepsilon\sup_{w\in\Theta} X_w
\le
c_1 c_0\, \E_g\sup_{w\in\Theta} Y_w.
\]
Furthermore, \[
\E_g\sup_{w\in\Theta} Y_w
=
R\,\E\|M_S^{1/2}g\|_2
\le
R\,\sqrt{\E\|M_S^{1/2}g\|_2^2}
=
R\,\sqrt{\tr(M_S)},
\] 
Combining the above inequalities and dividing by $n$ yields
\[
\mathfrak R_S(\mathcal F)
=
\frac{1}{n}\E_\varepsilon\sup_{w\in\Theta} X_w
\le
\frac{c_1 c_0}{n}\,R\,\sqrt{\tr(M_S)}
=
c\,\frac{R}{n}\,\sqrt{\tr(M_S)},
\]
with $c:=c_1c_0>0$ a constant.
\Endproof
We can now provide an upper bound on the Rademacher complexity of the Jensen gap class for neural networks.

\begin{coro}[Rademacher complexity of the Jensen-gap class]
\label{cor:rad-jensen-gap}
Let $S=\{(x_i,y_i)\}_{i=1}^n$ and let
\[
\Theta := \{\, w\in\mathbb R^m : \|w\|_2 \le R \,\}.
\]
Fix $\sigma>0$ and let $\mathcal F_\sigma$ be the Jensen gap class for the loss function
$L(h_w(x),y)$. Assume that for each $y$, the map $z\mapsto L(z,y)$ is differentiable,
$\ell$-Lipschitz and $\beta$-smooth. Assume moreover that the hypotheses of
Lemma~\ref{metric} hold. Define
\[
J_{S,\sigma}
:=
\frac{1}{n}\sum_{i=1}^n A_{i,\sigma}\tr(G_i),
\qquad
H_{S,\sigma}
:=
\frac{1}{n}\sum_{i=1}^n \tr(B_{i,\sigma}).
\]
Then, the empirical Rademacher complexity of $\mathcal F_\sigma$ satisfies
\[
\mathfrak R_S(\mathcal F_\sigma)
\le
c\,\frac{R\sigma}{\sqrt{n}}\,
\sqrt{
\beta^2 J_{S,\sigma}
+
\ell^2H_{S,\sigma}
},
\]
where $c>0$ is a universal constant.
\end{coro}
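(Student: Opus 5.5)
The plan is to chain Lemma~\ref{metric} with Proposition~\ref{prop:rad-talagrand-MS}, taking $f(w,(x,y)) := L(h_w(x),y)-\E_u[L(h_{w+\sigma u}(x),y)]$ with $u\sim\mathcal N(0,I_m)$, i.e.\ $f(w,(x_i,y_i))=F_i(w)$. This is exactly the Jensen gap of the loss with respect to $Q=\mathcal N(w,\sigma^2 I_m)$, so $\mathcal F$ in Proposition~\ref{prop:rad-talagrand-MS} coincides with $\mathcal F_\sigma$.

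The first step is to verify the hypotheses of Lemma~\ref{metric}. The ball $\Theta=\{\|w\|_2\le R\}$ is convex, and the remaining regularity and domination conditions are assumed in the corollary. The lemma then gives
\[
d_S(w_1,w_2)^2\le (w_1-w_2)^\top M_{S,\sigma}(w_1-w_2),
\qquad
M_{S,\sigma}=2\sigma^2\sum_{i=1}^n\big(\beta^2A_{i,\sigma}G_i+\ell^2B_{i,\sigma}\big).
\]
The matrix $M_{S,\sigma}$ is positive semi-definite because it is a nonnegative combination of PSD matrices: $A_{i,\sigma}\ge 0$ since it is an integral of expectations of quadratic forms $u^\top J^\top J u\ge 0$, and $G_i,B_{i,\sigma}\succeq 0$ by assumption. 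Its pseudo-metric $d_S$ is exactly the one appearing in Proposition~\ref{prop:rad-talagrand-MS}.

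Applying Proposition~\ref{prop:rad-talagrand-MS} with $M_S=M_{S,\sigma}$ gives $\mathfrak R_S(\mathcal F_\sigma)\le c'\frac{R}{n}\sqrt{\tr(M_{S,\sigma})}$. Since the trace is linear and $A_{i,\sigma}$ is a scalar,
\[
\tr(M_{S,\sigma})=2\sigma^2\sum_{i=1}^n\big(\beta^2A_{i,\sigma}\tr(G_i)+\ell^2\tr(B_{i,\sigma})\big)=2\sigma^2 n\big(\beta^2J_{S,\sigma}+\ell^2H_{S,\sigma}\big).
\]
Substituting, $\frac{R}{n}\sqrt{2\sigma^2 n(\cdots)}=\sqrt2\,\frac{R\sigma}{\sqrt n}\sqrt{\beta^2J_{S,\sigma}+\ell^2H_{S,\sigma}}$, and setting $c=\sqrt2\,c'$ yields the claimed bound.

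There is no real obstacle here. The only points needing care are that the Rademacher process in Proposition~\ref{prop:rad-talagrand-MS} is indexed by the same $\Theta$ on which Lemma~\ref{metric} holds, and that the constant $c$ is universal: it comes only from the sub-Gaussian norm comparison and Talagrand's constant in Proposition~\ref{cor:talagrand-comparison}, so it does not depend on $S$, $\sigma$, $m$, or $n$.
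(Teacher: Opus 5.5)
Your proposal is correct and follows the paper's proof: apply Lemma~\ref{metric} to get the quadratic-form domination by $M_{S,\sigma}$, invoke Proposition~\ref{prop:rad-talagrand-MS}, compute $\tr(M_{S,\sigma})=2\sigma^2 n(\beta^2J_{S,\sigma}+\ell^2H_{S,\sigma})$, and absorb the resulting $\sqrt{2}$ into the universal constant. Your extra checks (convexity of the ball $\Theta$ and positive semi-definiteness of $M_{S,\sigma}$) are details the paper leaves implicit.
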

\noindent \textbf{Proof:}
By Lemma~\ref{metric}, the pseudo-metric induced by the Jensen gap class is dominated by the
quadratic form associated with
\[
M_{S,\sigma}
=
2\sigma^2\sum_{i=1}^n
\Big(\beta^2 A_{i,\sigma}G_i+\ell^2 B_{i,\sigma}\Big).
\]
Applying Proposition~\ref{prop:rad-talagrand-MS} gives
\[
\mathfrak R_S(\mathcal F_\sigma)
\le
c\frac{R}{n}\sqrt{\tr(M_{S,\sigma})}.
\]
Since
\[
\tr(M_{S,\sigma})
=
2\sigma^2 n
\left(
\beta^2 J_{S,\sigma}+\ell^2 H_{S,\sigma}
\right),
\]
the result follows after absorbing the factor $\sqrt2$ into the universal constant $c$.
\Endproof

\begin{rem}
\label{remlin}
    As a sanity check of the result, consider the case of linear hypothesis classes ($h_W(x)=Wx$). Then, the Hessian term $H_{S,\sigma}$ is zero and the Jacobian term $J_{S,\sigma}$ is easily seen to be bounded by $C^2M^4$ if $||x||\leq M$. Therefore, Corollary \ref{cor:rad-jensen-gap} implies
    \[\mathfrak R_S(\mathcal F_\sigma)
\;\le\;
c\,\frac{R\sigma\beta C M^2}{\sqrt{n}}.
\] This is exactly the same result as the one obtained in Theorem \ref{thm:rad-jensen-gap-linear-multiclass} up to a constant.  
\end{rem}

\begin{rem}[Explicit constant]
\label{rem:explicit-constant}
Corollary \ref{cor:rad-jensen-gap} leaves us with an unspecified universal constant $c$. In the present setting, it is possible to alternatively obtain the explicit constant
\(c=\sqrt{\pi}\), which is useful for numerical evaluation.
First, we have the following comparison between the Rademacher and Gaussian complexity:
\[
\E_\varepsilon \sup_{w\in\Theta} X_w
\le
\sqrt{\frac{\pi}{2}}\,
\E_g \sup_{w\in\Theta} \sum_{i=1}^n g_i f(w,(x_i,y_i)).
\]
This follows by writing \(g_i=\varepsilon_i|g_i|\) and applying Jensen's inequality together with \(\E|g_i|=\sqrt{2/\pi}\).
Second, for every \(w_1,w_2\in\Theta\), the \(L_2\) norm satisfies
\[
\Big\|\sum_{i=1}^n g_i\big(f(w_1,(x_i,y_i))-f(w_2,(x_i,y_i))\big)\Big\|_{L_2}
=
d_S(w_1,w_2).
\]
Third, by assumption, this quantity is upper bounded by the \(L_2\) norm of
$
Y_{w_1}-Y_{w_2}
$.
Finally, by the Gaussian comparison inequality (Sudakov--Fernique; see \cite[Theorem~7.2.8]{Vershynin2018HDP}),
\[
\E_g \sup_{w\in\Theta} \sum_{i=1}^n g_i f(w,(x_i,y_i))
\le
\E_g \sup_{w\in\Theta} Y_w,
\]
which yields the result after considering the factor $\sqrt2$ that was absorbed into the universal constant $c$.
\end{rem}


We are now ready to state and prove the main result of this section.
\begin{prop}[Uniform generalization bound for the Jensen-gap class]
\label{cor:jensen-gap-genbound}
Let $S=\{(x_i,y_i)\}_{i=1}^n\sim D^n$ and let
\[
\Theta := \{\, \overline w\in\mathbb R^m : \|\overline w\|_2 \le R \,\}.
\]
Fix $\sigma>0$ and let $Q$ denote an isotropic Gaussian distribution
\[
Q := \mathcal N(\overline w,\sigma^2 I_m),
\qquad \overline w\in\Theta .
\]
Consider the same assumptions as in Corollary \ref{cor:rad-jensen-gap}. Assume moreover that there exists $B>0$ such that for all $(x,y)$ and all $\overline w\in\Theta$,
\[
 |\jen_{Q}\!\big[L(h_w(x),y)\big]| \le \sigma^2 B.
\]
Then, for any $\delta\in(0,1)$, with probability at least $1-\delta$ over $S\sim D^n$, we have
uniformly for all $\overline w\in\Theta$,
\[
\begin{aligned}
\mathbb E_{(x,y)\sim D}\!\Big[\jen_{Q}\!(L(h_w(x),y))\Big]
&\le
\frac1n\sum_{i=1}^n \jen_{Q}\!(L(h_w(x_i),y_i)) \\
&\quad
+\;
2c\,\frac{R\sigma}{\sqrt{n}}\,
\sqrt{\beta^2 J_{S,\sigma}+\ell^2 H_{S,\sigma}}
\;+\;
8\sigma^2 B\,\sqrt{\frac{2\ln(4/\delta)}{n}},
\end{aligned}
\]
where $c>0$ is an absolute constant. 
\end{prop}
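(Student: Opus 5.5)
This is a standard uniform deviation bound via Rademacher complexity, applied to the Jensen gap class $\mathcal F_\sigma=\{(x,y)\mapsto F_\sigma(\overline w;x,y):\overline w\in\Theta\}$, where $F_\sigma(\overline w;x,y)=\jen_Q[L(h_w(x),y)]$ with $Q=\mathcal N(\overline w,\sigma^2 I_m)$. By assumption every function in $\mathcal F_\sigma$ takes values in the interval $[-\sigma^2B,\sigma^2B]$, whose length is $2\sigma^2B$.

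The first step applies the classical two-sided argument: McDiarmid, symmetrization, then McDiarmid again to pass from the expected to the empirical Rademacher complexity. For a class with values in an interval of length $b$, it gives, with probability at least $1-\delta$, uniformly over the class,
\[
\E_D f \le \frac1n\sum_{i=1}^n f(x_i,y_i) + 2\mathfrak R_S(\mathcal F_\sigma) + 3b\sqrt{\frac{\ln(2/\delta)}{2n}} .
\]
With $b=2\sigma^2B$ the last term is $6\sigma^2B\sqrt{\ln(2/\delta)/(2n)}$. This is dominated by the stated $8\sigma^2B\sqrt{2\ln(4/\delta)/n}$, since $6/\sqrt2<8\sqrt2$ and $\ln(2/\delta)\le\ln(4/\delta)$. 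The weaker stated constant presumably comes from splitting the confidence as $\delta/2+\delta/2$ between the two McDiarmid steps and bounding crudely. I would follow exactly that route: one McDiarmid application to $\sup_{\overline w}(\E_D F-\widehat{\E}_S F)$, with bounded differences $2\sigma^2B/n$, at level $\delta/2$. Then symmetrization bounds its expectation by $2\E_S\mathfrak R_S$. A second McDiarmid application, to $\mathfrak R_S$ with bounded differences $2\sigma^2B/n$, at level $\delta/2$, relates $\E_S\mathfrak R_S$ to $\mathfrak R_S$. Each deviation is at most $2\sigma^2B\sqrt{2\ln(4/\delta)/n}$-type, and collecting them with generous constants yields the $8\sigma^2B$ term.

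The second step substitutes Corollary~\ref{cor:rad-jensen-gap}, which gives
\[
\mathfrak R_S(\mathcal F_\sigma)\le c\frac{R\sigma}{\sqrt n}\sqrt{\beta^2J_{S,\sigma}+\ell^2H_{S,\sigma}} .
\]
Multiplying by two produces the middle term of the statement.

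The only point needing care is measurability and the bounded-difference constants. Replacing one sample changes each empirical average by at most $2\sigma^2B/n$, and the empirical Rademacher complexity by at most $2\sigma^2B/n$ as well. Both follow directly from the uniform bound $|\jen_Q|\le\sigma^2B$, which holds for all $(x,y)$ and all $\overline w\in\Theta$. I expect no real obstacle; the data-dependence of $J_{S,\sigma}$ and $H_{S,\sigma}$ causes no trouble because the bound is stated with the empirical complexity.
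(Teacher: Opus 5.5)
Your proposal is correct and is essentially the paper's proof. The paper invokes Theorem~26.5 (item~2) of Shalev-Shwartz and Ben-David, which is exactly the McDiarmid--symmetrization--McDiarmid argument you spell out for a class bounded by $\sigma^2B$ in absolute value, and then substitutes Corollary~\ref{cor:rad-jensen-gap} for $\mathfrak R_S(\mathcal F_\sigma)$. Your constant bookkeeping also checks out: the sharper term $6\sigma^2B\sqrt{\ln(2/\delta)/(2n)}$ is dominated by the stated $8\sigma^2B\sqrt{2\ln(4/\delta)/n}$.
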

\noindent \textbf{Proof:} Using Theorem $26.5$ from \cite{ShalevShwartz2014} and Corollary \ref{cor:rad-jensen-gap}, for any $\delta\in(0,1)$, with probability at least $1-\delta$ over $S\sim D^n$, we have uniformly for all $\overline w\in\Theta$,
\[
\begin{aligned}
\mathbb E_{(x,y)\sim D}\!\Big[\jen_{Q}\!(L(h_w(x),y))\Big]
&\le
\frac1n\sum_{i=1}^n \jen_{Q}\!(L(h_w(x_i),y_i)) \\
&\quad
+\;
2\,\mathfrak R_S(\mathcal F_\sigma)
\;+\;
8\sigma^2 B\,\sqrt{\frac{2\ln(4/\delta)}{n}} \\
&\le
\frac1n\sum_{i=1}^n \jen_{Q}\!\big[L(h_w(x_i),y_i)\big] \\
&\quad
+\;
2c\,\frac{R\sigma}{\sqrt{n}}\,
\sqrt{\beta^2 J_{S,\sigma}+\ell^2 H_{S,\sigma}}
\;+\;
8\sigma^2 B\,\sqrt{\frac{2\ln(4/\delta)}{n}},
\end{aligned}
\]
concluding the proof. \Endproof
If needed, a standard argument can be used to obtain a result holding uniformly over a set of standard deviations $\{\sigma_i\}$.
\begin{prop}[Uniform generalization bound on $\{\sigma_i\}_{i\ge 1}$]
\label{prop:jensen-gap-genbound-dyadic}
Let $S=\{(x_j,y_j)\}_{j=1}^n\sim D^n$ and let
\[
\Theta := \{\, \overline w\in\mathbb R^m : \|\overline w\|_2 \le R \,\}.
\]
Fix $\sigma_{\max}>0$ and for $i\ge 1$ define
\[
\sigma_i := 2^{\,1-i}\sigma_{\max}.
\]
For $\overline w\in\Theta$ and $i\ge 1$, let
\[
Q_{\overline w,\sigma_i} := \mathcal N(\overline w,\sigma_i^2 I_m).
\]
Assume the same conditions as in Corollary~\ref{cor:rad-jensen-gap}. Assume moreover that there exists $B>0$ such that
for all $(x,y)$, all $\overline w\in\Theta$ and all $i\ge 1$,
\[
 |\jen_{Q_{\overline w,\sigma_i}}\!\big[L(h_w(x),y)\big]| \le \sigma_i^2 B.
\]
Then, for any $\delta\in(0,1)$, with probability at least $1-\delta$ over $S\sim D^n$, we have uniformly for all
$i\ge 1$ and all $\overline w\in\Theta$,
\[
\begin{aligned}
\mathbb E_{(x,y)\sim D}\!\Big[\jen_{Q_{\overline w,\sigma_i}}\!\big(L(h_w(x),y)\big)\Big]
&\le
\frac1n\sum_{j=1}^n \jen_{Q_{\overline w,\sigma_i}}\!\big(L(h_w(x_j),y_j)\big)
+
2c\,\frac{R\sigma_i}{\sqrt{n}}\,
\sqrt{\beta^2 J_{S,\sigma_i}+\ell^2 H_{S,\sigma_i}} \\
&\quad
+
8\sigma_i^2 B\,
\sqrt{
\frac{
4\ln(i)
+2\ln(8/\delta)
}{n}
}.
\end{aligned}
\]
\end{prop}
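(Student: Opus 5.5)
The plan is to apply Proposition~\ref{cor:jensen-gap-genbound} separately for each fixed $\sigma_i$, with a confidence level $\delta_i$ chosen so that $\sum_{i\ge1}\delta_i\le\delta$, and then take a union bound over $i$. For each fixed $i\ge1$, the hypotheses of Proposition~\ref{cor:jensen-gap-genbound} hold with $\sigma=\sigma_i$. Indeed, Corollary~\ref{cor:rad-jensen-gap} applies at every $\sigma>0$, and the uniform Jensen-gap bound $|\jen_{Q_{\overline w,\sigma_i}}[L(h_w(x),y)]|\le \sigma_i^2B$ is assumed for all $i$. Hence, for any $\delta_i\in(0,1)$, with probability at least $1-\delta_i$, uniformly over $\overline w\in\Theta$, the stated inequality holds with the last term replaced by $8\sigma_i^2B\sqrt{2\ln(4/\delta_i)/n}$.

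Next we choose $\delta_i:=\delta/(2i^2)$. Then $\sum_{i\ge1}\delta_i=\delta\pi^2/12<\delta$, so the union bound gives an event of probability at least $1-\delta$ on which all these inequalities hold simultaneously for every $i\ge1$ and every $\overline w\in\Theta$. On this event we have
\[
2\ln(4/\delta_i)=2\ln(8i^2/\delta)=4\ln(i)+2\ln(8/\delta),
\]
which is exactly the term appearing in the statement. The Rademacher term $2c\,R\sigma_i\sqrt{\beta^2J_{S,\sigma_i}+\ell^2H_{S,\sigma_i}}/\sqrt n$ is carried over unchanged, since it is data-dependent but not confidence-dependent.

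There is essentially no obstacle here. The only points to check are the following. First, $\delta_i\in(0,1)$, which holds since $\delta<1$. Second, the absolute constant $c$ coming from Corollary~\ref{cor:rad-jensen-gap} does not depend on $\sigma$, so the same $c$ works for all $i$. Third, the event produced by Proposition~\ref{cor:jensen-gap-genbound} for a given $i$ is a single event that is uniform over $\overline w$, so countable additivity justifies the union bound over $i$.
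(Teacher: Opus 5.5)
Your proposal is correct and is the same argument as the paper's proof. You apply Proposition~\ref{cor:jensen-gap-genbound} at each $\sigma_i$ with confidence $\delta_i=\delta/(2i^2)$, take a union bound over $i$ using $\sum_i\delta_i\le\delta$, and rewrite $2\ln(4/\delta_i)=4\ln(i)+2\ln(8/\delta)$.
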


\noindent\textbf{Proof:} This is a standard union bound argument.
For fixed $\sigma>0$ and $\eta\in(0,1)$, define
\[
\mathcal B(\sigma,\eta;\overline w)
:=
\frac1n\sum_{j=1}^n \jen_{Q_{\overline w,\sigma}}\!\big(L(h_w(x_j),y_j)\big)
+
2c\,\frac{R\sigma}{\sqrt{n}}\,
\sqrt{\beta^2 J_{S,\sigma}+\ell^2 H_{S,\sigma}}
+
8\sigma^2 B\sqrt{\frac{2\ln(4/\eta)}{n}}.
\]
By Proposition~\ref{cor:jensen-gap-genbound}, for every fixed $\sigma$ and $\eta$,
\[
\mathbb P_S\!\left(
\forall \overline w\in\Theta:\;
\mathbb E\!\Big[\jen_{Q_{\overline w,\sigma}}\!\big(L(h_w(x),y)\big)\Big]
\le
\mathcal B(\sigma,\eta;\overline w)
\right)
\ge 1-\eta .
\]
For $i\ge 1$, choosing $\eta=\delta_i:=\delta/(2i^2)$ and $\sigma=\sigma_i$, we then have
\[
\mathbb P_S\!\left(
\forall \overline w\in\Theta:\;
\mathbb E\!\Big[\jen_{Q_{\overline w,\sigma_i}}\!\big(L(h_w(x),y)\big)\Big]
\le
\mathcal B(\sigma_i,\delta_i;\overline w)
\right)
\ge 1-\delta_i .
\]
Taking a union bound over $i\ge 1$ and using $\sum_{i\ge 1}\delta_i \le \delta$ gives that with probability at least
$1-\delta$, the inequality holds simultaneously for all $i\ge 1$ and all $\overline w\in\Theta$.
It remains to simplify the confidence term:
\[
\ln\!\Big(\frac{4}{\delta_i}\Big)
=
\ln\!\Big(\frac{8i^2}{\delta}\Big)
=
\ln\!\Big(\frac{8}{\delta}\Big) + 2\ln (i).
\]
\Endproof

\begin{rem}[From Gaussian averaged controls to simpler global controls]
\label{rem:global-controls-from-gaussian-averages}
The quantities \(A_{i,\sigma}\) and \(B_{i,\sigma}\) appearing above are
averaged along the perturbation path \(w+t\sigma u\). This is the appropriate
form for ordinary Gaussian posteriors, since the perturbation is not confined
to the original parameter set \(\Theta\). However, in settings where the posterior
perturbation has bounded support, we can obtain a simpler expression. Suppose that the perturbed parameter is supported in a larger parameter
set \(\widetilde\Theta\), in the sense that
\[
w+t\sigma u\in\widetilde\Theta
\qquad
\text{for all }w\in\Theta,\ t\in[0,1],
\]
almost surely. Assume also that \(\mathbb E[uu^\top]\preceq I_m\),
and that, for each \(i\), there exists a positive semidefinite matrix
\(G_i\succeq0\) such that
\[
J_i(v)^\top J_i(v)\preceq G_i
\qquad
\text{for all }v\in\widetilde\Theta .
\]
Then, the Gaussian averaged Jacobian quantity satisfies
\[
\begin{aligned}
A_{i,\sigma}
&=
\sup_{w\in\Theta}
\int_0^1
\mathbb E_u\!\left[
u^\top J_i(w+t\sigma u)^\top J_i(w+t\sigma u)u
\right]dt  \\
&\le
\mathbb E_u[u^\top G_i u]
=
\operatorname{Tr}\!\big(G_i\,\mathbb E[uu^\top]\big)
\le
\operatorname{Tr}(G_i).
\end{aligned}
\]
Consequently,
\[
J_{S,\sigma}
=
\frac1n\sum_{i=1}^n A_{i,\sigma}\operatorname{Tr}(G_i)
\le
\frac1n\sum_{i=1}^n \operatorname{Tr}(G_i)^2 .
\]
Similarly, suppose that there exist positive semidefinite matrices
\(M_i\succeq0\) such that
\[
\sum_{k=1}^C H_{i,k}(v)^\top H_{i,k}(v)\preceq M_i
\qquad
\text{for all }v\in\widetilde\Theta.
\]
Then, the proof of Lemma~\ref{metric} can be modified by using this global
Hessian control directly in the estimate of the second order term. In that setting, the Hessian contribution in the final Rademacher bound becomes
\[
H_S
:=
\frac1n\sum_{i=1}^n\operatorname{Tr}(M_i).
\]
Therefore, under bounded support perturbations, Corollary~\ref{cor:rad-jensen-gap}
implies the simpler bound
\[
\mathfrak R_S(\mathcal F_\sigma)
\le
c\,\frac{R\sigma}{\sqrt n}
\sqrt{
\beta^2 J_S+\ell^2 H_S
},
\]
where
\[
J_S
:=
\frac1n\sum_{i=1}^n \operatorname{Tr}(G_i)^2,
\qquad
H_S
:=
\frac1n\sum_{i=1}^n \operatorname{Tr}(M_i).
\]

For ordinary Gaussian posteriors, the perturbation has unbounded support, so
the bounded support simplification does not apply directly. A possible
extension would be to use a localization argument: control the Jacobian and
Hessian on a larger set \(\widetilde\Theta\) that contains the perturbation
path with high probability and then bound the contribution of the
complementary event separately. This would lead to a global 
bound as above together with a residual term.

The practical regularizer introduced in Section~\ref{sec:jh-bn-regularizer}
takes a much simpler route. Instead of keeping the supremum over parameters and the Gaussian average along the perturbation path, it evaluates empirical
Jacobian and Hessian quantities at the current effective parameters. Thus, it
should be viewed as a convenient proxy inspired by the structure of the
bound, rather than as a direct evaluation of \(J_{S,\sigma}\) and
\(H_{S,\sigma}\). In the SHEL analysis of Section~\ref{sec:shel}, we instead retain the
quantities averaged over Gaussian perturbations, while replacing each
supremum by the largest realization observed during training.
\end{rem}

\section{Bound comparisons for linear classifiers}
\label{sec:linear-bounded-experiments}
When the loss function is bounded, both the bounded and unbounded variants of the derandomized PAC-Bayes bounds can be applied. We first introduce a convenient bounded smooth multi-class loss obtained by extending the sigmoid loss.

\begin{lem}[Surrogate multiclass sigmoid loss function]
\label{lem:surrogate-multiclass-sigmoid-loss}
Fix $R>0$ and $C>2$, and let $\sigma(z):=(1+e^{-z})^{-1}$.
For $z\in\R^C$ and $y\in\{1,\dots,C\}$, define
\[
L(z,y):=\sum_{k\neq y}\sigma(z_k-z_y).
\]
Let
\[
m_R:=(C-1)\sigma\!\left(-R\sqrt{\frac{C}{C-1}}\right),
\qquad
q_R:=\frac12+(C-2)\sigma\!\left(-R\sqrt{\frac{C}{2(C-2)}}\right),
\]
and
\[
a(R):=\frac{1}{q_R-m_R},
\qquad
b(R):=-a(R)\,m_R.
\]
Define the surrogate loss
\[
\Phi_R(z,y):=a(R)L(z,y)+b(R).
\]
Then the following hold: 
\begin{enumerate}
\item[(i)] On the domain $\|z\|\le R$:
\[
L^{0\text{-}1}(z,y)\le \Phi_R(z,y).
\]
\item[(ii)] The infimum of $\Phi_R$ on the domain $\|z\|\le R$ equals $0$, whereas on the domain $\mathbb{R}^C$ it equals $b(R)$.
\item[(iii)] The supremum of $\Phi_R$ on the domain $\|z\|\le R$ equals \[a(R)(C-1)\bigl(2\sigma\!\left(R\sqrt{\frac{C}{C-1}}\right)-1\bigr),\]
whereas on the domain $\mathbb{R}^C$ it equals \[a(R)(C-1)\sigma\!\left(R\sqrt{\frac{C}{C-1}}\right).\]
\item[(iv)] $\Phi_R$ is $\ell_R$-Lipschitz on the domain $\mathbb{R}^C$ with
\[
\ell_R=\frac{a(R)}{4}\sqrt{C(C-1)}.
\]
\item[(v)] 
$\Phi_R$ is $\beta_R$-smooth on the domain $\mathbb{R}^C$ with
\[
\beta_R\leq\frac{a(R)\,C}{6\sqrt3}.
\]
\end{enumerate}
\end{lem}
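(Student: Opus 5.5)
Since $\Phi_R=a(R)L+b(R)$ with $a(R)>0$ (to be checked: $q_R>m_R$), every item reduces to a statement about $L(z,y)=\sum_{k\neq y}\sigma(z_k-z_y)$. Items (i)--(iii) are constrained optimizations of $L$ over the ball $\|z\|\le R$ or over $\mathbb{R}^C$. Items (iv)--(v) come from derivative bounds on $\sigma$. We will use repeatedly that $L$ depends on $z$ only through the differences $z_k-z_y$. Hence we may restrict to $\sum_k z_k=0$ without increasing $\|z\|$, exactly as in the proof of Lemma~\ref{unhSurrogate}.

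For (ii) and (iii) on the ball, we minimize (resp.\ maximize) $L$ over $\|z\|\le R$. By permutation symmetry in the coordinates $k\neq y$, we expect the extremizer to satisfy $z_k=t$ for all $k\neq y$ and $z_y=s$. Symmetrization alone is not justified, since $L$ is neither convex nor concave. We will therefore use the following argument instead. Fix the value of $z_y$. Each $\sigma(z_k-z_y)$ is monotone in $z_k$, and the map $v\mapsto\sigma(v)$ is convex for $v\le0$ and concave for $v\ge0$. So in the minimizing regime, where all $z_k-z_y\le 0$, the sum of convex functions under a norm constraint is minimized at equal coordinates by Jensen's inequality. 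Once reduced to two variables with $z_y-t=R\sqrt{C/(C-1)}$ (the largest gap achievable on the ball, computed as in Lemma~\ref{unhSurrogate}), we get $\inf L=m_R$, hence $\inf\Phi_R=0$. The maximizing regime is symmetric and gives $(C-1)\sigma(R\sqrt{C/(C-1)})$. The stated value $a(R)(C-1)(2\sigma(\cdot)-1)$ equals $a(R)\bigl[(C-1)\sigma(R\sqrt{C/(C-1)})-m_R\bigr]$, using $\sigma(-v)=1-\sigma(v)$, which confirms the bookkeeping. On $\mathbb{R}^C$, sending $z_y\to\pm\infty$ gives the infimum $0$ and the supremum $C-1$ for $L$. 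We will double-check this against the stated supremum formula, since there may be a typo. In particular, $\inf_{\mathbb{R}^C}\Phi_R=b(R)$ follows.

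For (i), it suffices to show $L\ge q_R$ whenever some $k\neq y$ has $z_k\ge z_y$ and $\|z\|\le R$. Then $\Phi_R\ge a(q_R-m_R)=1$. This step is the main obstacle. The first try is the reduction used in Lemma~\ref{unhSurrogate}: a perturbation argument shows that at the minimizer $z_k=z_y=s$, contributing $\sigma(0)=\tfrac12$. The remaining $C-2$ coordinates are then pushed below $s$, where $\sigma$ is convex, so Jensen's inequality again lets us equalize them to a common value $t$. This reduces the problem to minimizing $(C-2)\sigma(t-s)$ subject to $2s^2+(C-2)t^2\le R^2$. Lagrange multipliers, as in Lemma~\ref{unhSurrogate}, give $s-t=R\sqrt{C/(2(C-2))}$, which yields exactly $q_R$. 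The delicate point is justifying that the remaining gaps may be assumed nonpositive and equal. For this we would argue that raising any $z_j$ above $s$ only increases $L$, and that moving it toward the others' mean does not violate the norm constraint.

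For (iv), we compute $\partial_{z_k}L=\sigma'(z_k-z_y)$ for $k\neq y$ and $\partial_{z_y}L=-\sum_{k\neq y}\sigma'(z_k-z_y)$. With $\sigma'\le1/4$, we get $\|\nabla L\|^2\le (C-1)/16+(C-1)^2/16=C(C-1)/16$. Multiplying by $a(R)$ gives $\ell_R$. For (v), we note that the Hessian is $\sum_{k\neq y}\sigma''(z_k-z_y)(e_k-e_y)(e_k-e_y)^\top$ and that $|\sigma''|\le 1/(6\sqrt3)$. Its operator norm is then at most $\tfrac{1}{6\sqrt3}\|\sum_{k\neq y}(e_k-e_y)(e_k-e_y)^\top\|_2$. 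That matrix is the Laplacian of a star graph with largest eigenvalue $C$, which gives $\beta_R\le a(R)C/(6\sqrt3)$.
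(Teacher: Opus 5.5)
Your proposal is correct and follows the paper's proof essentially step by step. Both arguments restrict to the region where $\sigma$ is convex (resp.\ concave) by a swap/perturbation argument, symmetrize the coordinates $k\neq y$ by Jensen, and solve the two-variable Lagrange problems giving $m_R$ and $q_R$ (with $z_k=z_y$ in (i)). Items (iv)--(v) then use the same gradient bound and star-Laplacian Hessian bound.

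There are two small points to fix. First, the supremum on $\mathbb{R}^C$ is not a typo: $a(R)(C-1-m_R)=a(R)(C-1)\sigma\bigl(R\sqrt{C/(C-1)}\bigr)$. Second, in your delicate step for (i), moving $z_j$ toward the others' mean need not preserve the norm constraint. Instead, note that at the constrained minimizer $s\ge 0$: if $s<0$, raising the tied pair $z_y=z_k=s$ toward $0$ shrinks $\|z\|$ and strictly lowers the other $C-2$ terms. After that, any $z_j>s$ can be clipped to $s$ before applying Jensen.
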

\noindent \textbf{Proof:} Fix $y$. We start by proving that the infimum of $\Phi_R$ on the domain $\|z\|\le R$ equals $0$. At a minimizer of $L(z,y)$, we must have
$
z_y \ge z_k$ for $k\neq y$
(this can be seen by a simple perturbation argument). We may therefore restrict
the optimization to the set
\[
B(0,R)\cap \{z:\, z_y \ge z_k \ \text{for all } k\neq y\}.
\]
This set is convex. Moreover, on this domain the arguments of $\sigma$ satisfy
$z_k-z_y\le 0$, and since $\sigma$ is convex on $(-\infty,0]$, the function
\[
L(z,y)=\sum_{k\neq y}\sigma(z_k-z_y)
\]
is convex on this domain. Both the objective and the constraint are invariant under permutations of the coordinates $\{z_k\}_{k\neq y}$. Therefore, if $z$ is a
minimizer, any permutation of the incorrect-class coordinates is also a
minimizer. Averaging over all such permutations gives another feasible point satisfying
\[
z_k=t \qquad (\forall k\neq y)
\]
for some $t\in\mathbb{R}$. By convexity of $L(\cdot,y)$ on the feasible set,
this averaged point achieves a value of the objective no larger than the
original one. Therefore, there exists a minimizer with all incorrect-class
coordinates equal. The optimization problem thus reduces to \[ \min_{s,t\in\R}\ (C-1)\sigma(t-s) \qquad \text{subject to} \qquad s^2+(C-1)t^2\le R^2, \quad s\ge t. \] Since $\sigma$ is increasing, this is equivalent to \[ \max_{s,t\in\R}\ (s-t) \qquad \text{subject to} \qquad s^2+(C-1)t^2\le R^2, \quad s\ge t. \]
The solution of this problem can be obtained by using Lagrange multipliers. The infimum of $L(z,y)$ is then found to be given by 
\[(C-1)\sigma\bigg(-R\sqrt{\frac{C}{C-1}}\bigg)=m_R\]
and $(ii)$ for the bounded domain follows. For the unbounded domain, the result is obtained by observing that the infimum of $L(z,y)$ is $0$ on $\mathbb{R}^C$. Applying the same argument to $-L(z,y)$ allows to find the supremum of the function $L(z,y)$ on the domain $\|z\|\leq R$. It is given by
\[(C-1)\sigma\bigg(R\sqrt{\frac{C}{C-1}}\bigg).\]
Substituting into the expression for $\Phi_R(z,y)$ and simplifying proves $(iii)$ for the bounded domain. For the unbounded domain $\mathbb{R}^C$, the result follows from observing that the supremum of $L(z,y)$ is then $C-1$.
We now prove $(i)$.
If $L^{0\text{-}1}(z,y)=1$, then $z_k\ge z_y$ for some $k\neq y$, so
\[
\sigma(z_k-z_y)\ge \tfrac12 .
\]
Minimizing the remaining terms under $\|z\|\le R$ gives
\[
L(z,y)\ge \frac12+(C-2)\sigma\!\left(-R\sqrt{\frac{C}{2(C-2)}}\right)=q_R 
\]
(this can be obtained by the same argument as in the proof of $(ii)$). 
Therefore,
\[
\Phi_R(z,y)=a(R)(L(z,y)-m_R)\ge 1, 
\]
concluding the proof of $(i)$.
In order to prove $(iv)$, we compute the gradient of $L(z,y)$:
\[
\nabla_z L(z,y)=\sum_{k\neq y}\sigma'(z_k-z_y)(e_k-e_y).
\]
On one hand, the norm of the gradient evaluated at $0$ is equal to $\frac14\sqrt{C(C-1)}$. On the other hand, 
\[
\|\nabla_z L(z,y)\|^2
=
\sum_{k\neq y}\sigma'(z_k-z_y)^2
+
\left(\sum_{k\neq y}\sigma'(z_k-z_y)\right)^2
\le
(C-1)\frac{1}{16}
+
\left((C-1)\frac{1}{4}\right)^2
=
\frac{C(C-1)}{16},
\]
where we used that $0\leq\sigma'(x)\leq 1/4.$
We conclude that the Lipschitz constant of $L(z,y)$ is equal to $\frac14\sqrt{C(C-1)}$. Multiplying by $a(R)$ then gives the Lipschitz constant of $\Phi_R$. 

Finally, in order to prove $(v)$, we compute the Hessian of $L(z,y)$:
\[
\nabla_z^2 L(z,y)
=
\sum_{k\neq y}\sigma''(z_k-z_y)(e_k-e_y)(e_k-e_y)^\top .
\]
This Hessian has a convenient simple form. We get the following upper bound on its spectral norm:
\[
\|\nabla_z^2 L(z,y)\|_2
\leq 
\max_{k\neq y}|\sigma''(z_k-z_y)|\cdot\| \sum_{k\neq y}(e_k-e_y)(e_k-e_y)^\top\|_2 .
\]
 The eigenvalues and eigenvectors of $\sum_{k\neq y}(e_k-e_y)(e_k-e_y)^\top$ are straightforward to obtain. We have the eigenvalue $0$ with eigenvector $\textbf{1}$, an eigenvalue $C$ with eigenvector $v = (1,\ldots,1,\underbrace{-(C-1)}_{y\text{-th coordinate}},1,\ldots,1)$. Also,
 the subspace defined by
\[
v_y = 0,
\qquad
\sum_{k\neq y} v_k = 0
\]
is an eigenspace of dimension \(C-2\) associated with an eigenvalue of \(1\). We conclude that the spectral norm of $\sum_{k\neq y}(e_k-e_y)(e_k-e_y)^\top$ is equal to $C$. Multiplying by $a(R)$ and using $\sup_{x\in \mathbb{R}}|\sigma''(x)|=\frac{1}{6\sqrt{3}}$ concludes the proof. \hfill$\blacksquare$

In Figure~\ref{fig:bounded-vs-unbounded-sigmoid}, we compare the
generalization bound terms obtained from the unbounded derandomized
PAC-Bayes bound of Theorem~\ref{combinedThmLin} with those obtained from
the bounded derandomized PAC-Bayes bound of Theorem~\ref{boundedLin}, both
applied to the multi-class sigmoid surrogate of
Lemma~\ref{lem:surrogate-multiclass-sigmoid-loss}. We set
\(n=100000\), \(d=50\) and \(\delta=0.05\), and vary \(R\) over the
interval \([0.1,1]\). The bounded version gives a looser bound for very
small values of \(R\), but becomes substantially tighter than the unbounded
version as \(R\) increases. This behavior reflects, in part, the less
favorable dimensional dependence of the unbounded bound. The unbounded bound is tighter only in a restricted regime of
very small parameter norms and low dimension. Outside this regime, when
both results are applicable, the bounded version should be preferred.

We next compare our bounded derandomized PAC-Bayes bounds with a
Rademacher complexity bound for the same class of linear predictors. The
Rademacher complexity bound is given by the following proposition.
\begin{prop}
\label{lem:mc-linear-bounded}
Let $S=\{(x_i,y_i)\}_{i=1}^n$. Assume that $\|x\|\le M$ and consider the hypothesis class
\[
\mathcal F
:=
\big\{\,x\mapsto Wx \in \R^C:\ \|W\|_F\le R\,\big\}.
\]
Let $L(z,y)$ take values in $[0,A]$ and assume that, for every $y$, the map
$z\mapsto L(z,y)$ is $\ell$-Lipschitz.
Then, with probability at least $1-\delta$ over $S\sim D^n$, we have uniformly for all
$W$ with $\|W\|_F\le R$,
\[
L_D(h_W)
\;\le\;L_S(h_W)+
2\sqrt{2}\,\ell\,\frac{RM\sqrt{C}}{\sqrt{n}}
\;+\;
4A\,\sqrt{\frac{2\ln(4/\delta)}{n}}.
\]
\end{prop}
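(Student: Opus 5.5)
The plan follows the standard Rademacher generalization route, mirroring the proof of Proposition~\ref{cor:jensen-gap-genbound}. Let $\mathcal L:=\{(x,y)\mapsto L(Wx,y):\ \|W\|_F\le R\}$. Since $L$ takes values in $[0,A]$, Theorem~26.5 of \cite{ShalevShwartz2014} applies with range parameter $A$. It gives, with probability at least $1-\delta$ and uniformly over $\|W\|_F\le R$,
\[
L_D(h_W)\le L_S(h_W)+2\,\mathfrak R_S(\mathcal L)+4A\sqrt{\frac{2\ln(4/\delta)}{n}}.
\]
This matches the last term of the claim exactly. It remains to show $\mathfrak R_S(\mathcal L)\le \sqrt2\,\ell RM\sqrt C/\sqrt n$.

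For the Rademacher term, I would apply Lemma~\ref{lem:vector-contraction-maurer} with $\psi_i(z):=L(z,y_i)$. Each $\psi_i$ is $\ell$-Lipschitz in the Euclidean norm, so $L_i=\ell$. This yields
\[
\mathfrak R_S(\mathcal L)\le \frac{\sqrt2\,\ell}{n}\,\E_{\varepsilon_{ik}}\sup_{\|W\|_F\le R}\sum_{i=1}^n\sum_{k=1}^C\varepsilon_{ik}(Wx_i)_k .
\]
Next I would rewrite the inner sum as the Frobenius inner product $\langle W,\sum_i\varepsilon_i^{(C)}x_i^\top\rangle_F$, exactly as in the proof of Theorem~\ref{thm:rad-jensen-gap-linear-multiclass}. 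Taking the supremum over the Frobenius ball gives $R\|\sum_i\varepsilon_i^{(C)}x_i^\top\|_F$.

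Jensen's inequality and the independence of the $\varepsilon_{ik}$ then bound the expectation by
\[
R\sqrt{C\sum_i\|x_i\|^2}\le R M\sqrt{Cn}.
\]
This is the same computation as in Theorem~\ref{thm:rad-jensen-gap-linear-multiclass}, with $\|U_i\|$ replaced by $1$. Dividing by $n$ gives $\mathfrak R_S(\mathcal L)\le\sqrt2\,\ell RM\sqrt C/\sqrt n$. Doubling this produces the stated $2\sqrt2\,\ell RM\sqrt C/\sqrt n$ term.

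No step is a genuine obstacle, since everything reuses computations already in the paper. The only point needing care is the confidence-term constant. It depends on which form of the Shalev-Shwartz--Ben-David theorem is invoked, namely the one with the empirical Rademacher complexity and the constant $4c\sqrt{2\ln(4/\delta)/n}$ with $c=A$. I would check that the loss range $[0,A]$ matches their assumption $|\ell|\le c$, which holds because $L\ge0$.
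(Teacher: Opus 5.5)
Your proposal is correct and follows the paper's own proof: item~2 of Theorem~26.5 of \citet{ShalevShwartz2014} with $c=A$, Maurer's vector-contraction inequality with $L_i=\ell$, and the bound $\mathfrak R_S(\mathcal F)\le RM\sqrt{C/n}$ for the Frobenius-ball linear class, which you obtain through the same Frobenius inner product and Jensen computation as in Theorem~\ref{thm:rad-jensen-gap-linear-multiclass}. The constant you flagged is indeed $4A\sqrt{2\ln(4/\delta)/n}$, because item~2 is the empirical-Rademacher form of that theorem, and $|L|\le A$ holds since $L$ takes values in $[0,A]$.
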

\noindent\textbf{Proof:}
Immediate from Maurer’s vector-contraction inequality applied to $L(\cdot,y)$,
Theorem~26.5 (item~2) of \citet{ShalevShwartz2014}, and the bound
$\mathfrak R_S(\mathcal F)\le RM\sqrt{C/n}$ for the linear class with $\|W\|_F\le R$.
\hfill$\blacksquare$

The relative improvement of our bound in Theorem~\ref{boundedLin} over the
Rademacher complexity bound is shown in Figure~\ref{fig:relative-improvement}.
Our bound is tighter for moderate values of \(R\) and becomes comparable to
the Rademacher bound for larger values of \(R\). Figure~\ref{fig:linear-ce-bounds}
reports a complementary experiment on MNIST. In this case, the
\(\klb^{-1}\) version of Theorem~\ref{thm:bounded-kl-regrouped-rademacher-sigmai} remains tighter over the displayed range.

\begin{figure}[t]
\centering

\begin{subfigure}{0.48\linewidth}
\centering
\includegraphics[width=\linewidth]{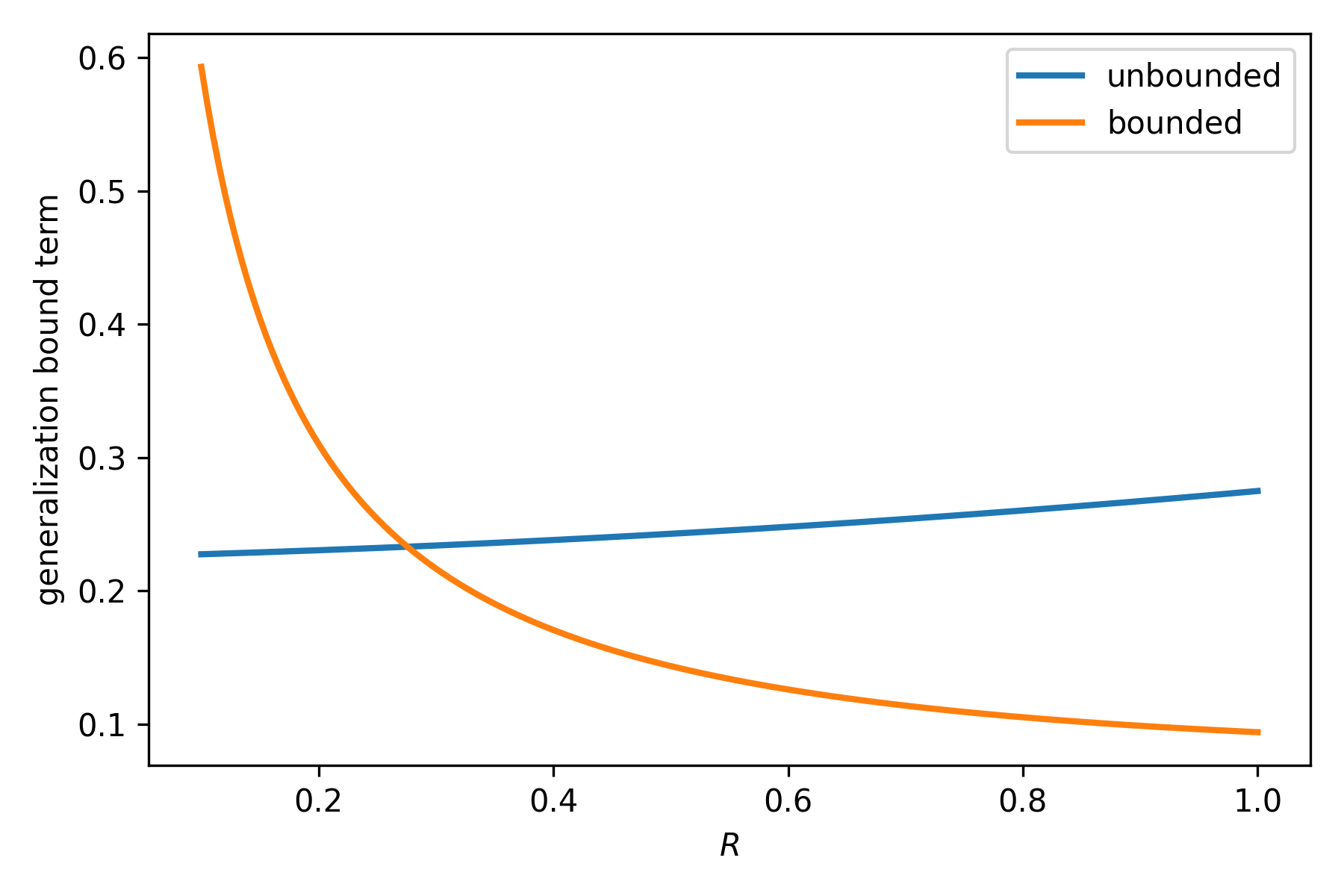}
\caption{$C=3$}
\end{subfigure}
\hspace{0.02\linewidth}
\begin{subfigure}{0.48\linewidth}
\centering
\includegraphics[width=\linewidth]{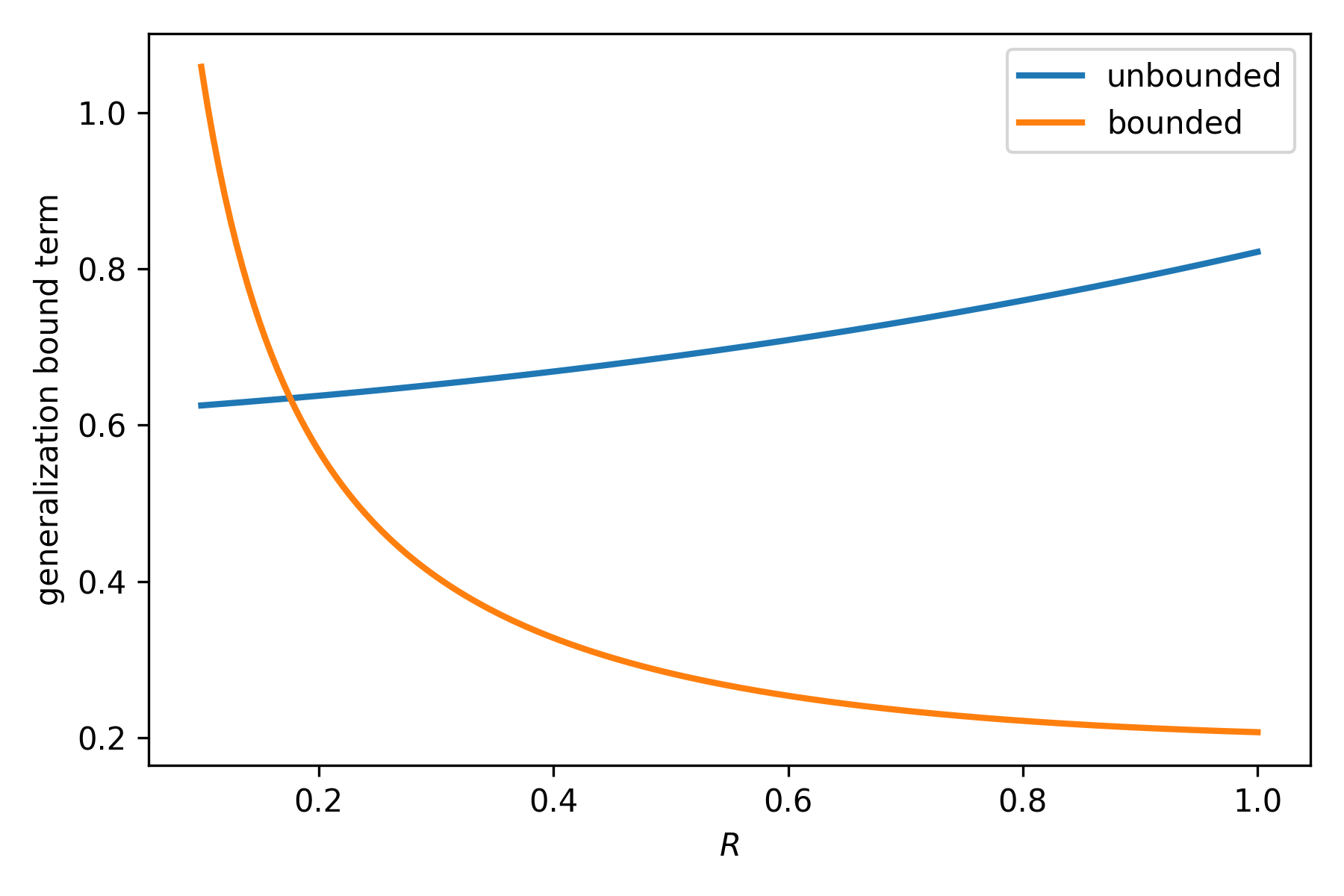}
\caption{$C=10$}
\end{subfigure}

\caption{Comparison between the bounded and unbounded derandomized PAC-Bayes generalization bound terms for the multi-class sigmoid loss.}
\label{fig:bounded-vs-unbounded-sigmoid}
\end{figure}

\begin{figure}[t]
\centering

\begin{subfigure}{0.48\linewidth}
\centering
\includegraphics[width=\linewidth]{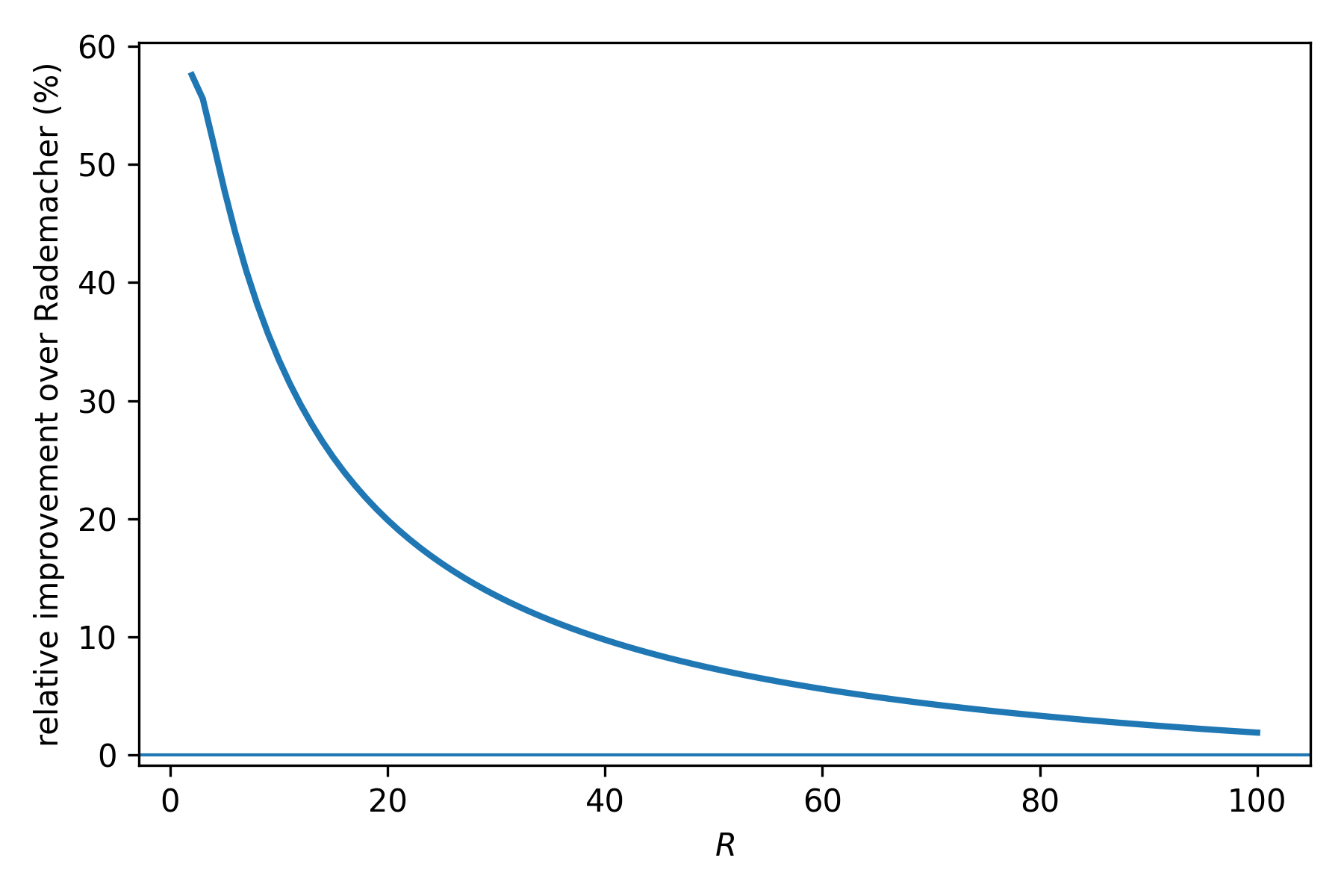}
\caption{$C=3$}
\end{subfigure}
\hspace{0.02\linewidth}
\begin{subfigure}{0.48\linewidth}
\centering
\includegraphics[width=\linewidth]{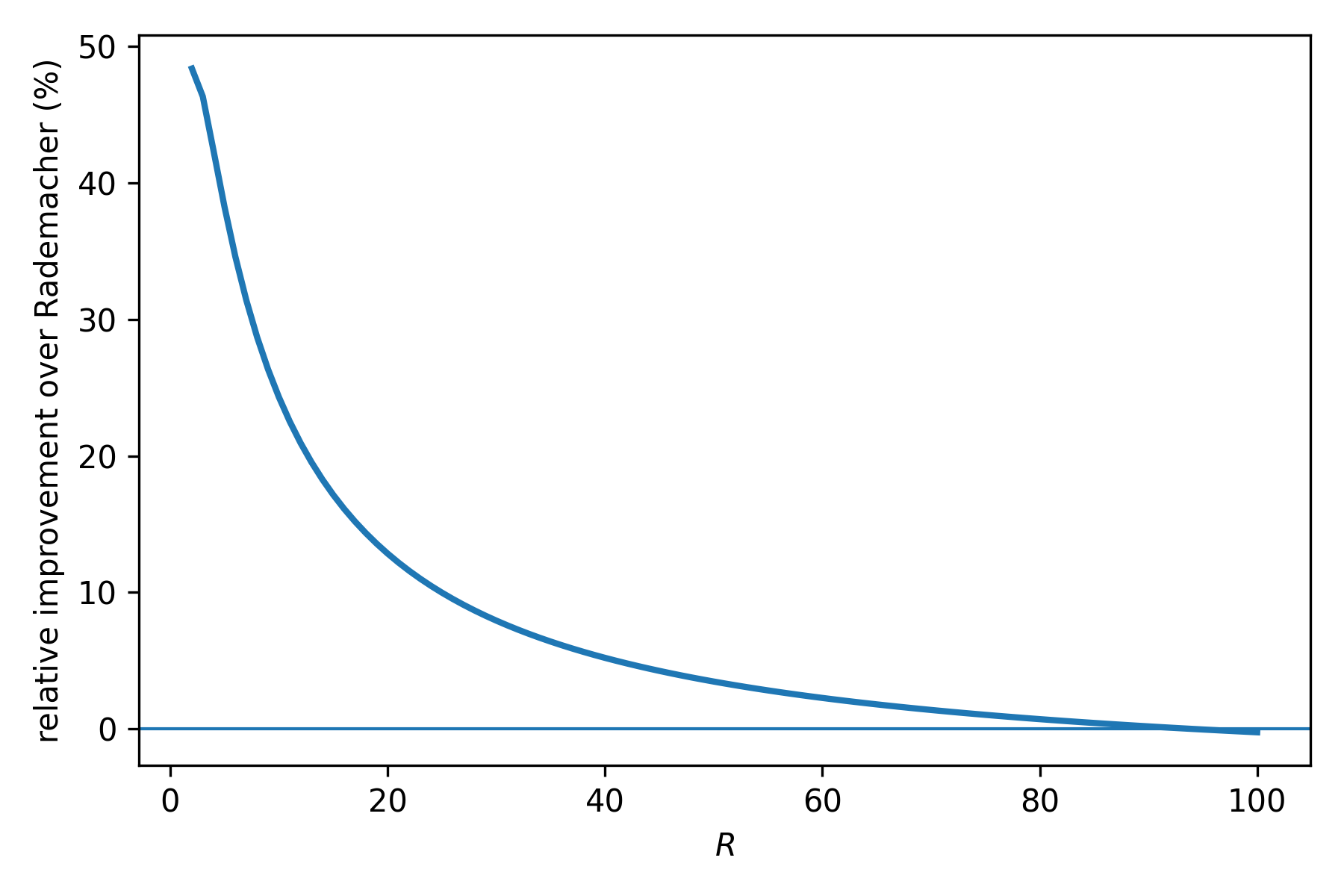}
\caption{$C=10$}
\end{subfigure}

\caption{Relative improvement of our derandomized PAC-Bayes bound over the Rademacher complexity bound as a function of $R\in [2,100]$ for the multi-class sigmoid loss.}
\label{fig:relative-improvement}
\end{figure}

\begin{figure}[t]
    \centering
    \includegraphics[width=0.48\linewidth]
    {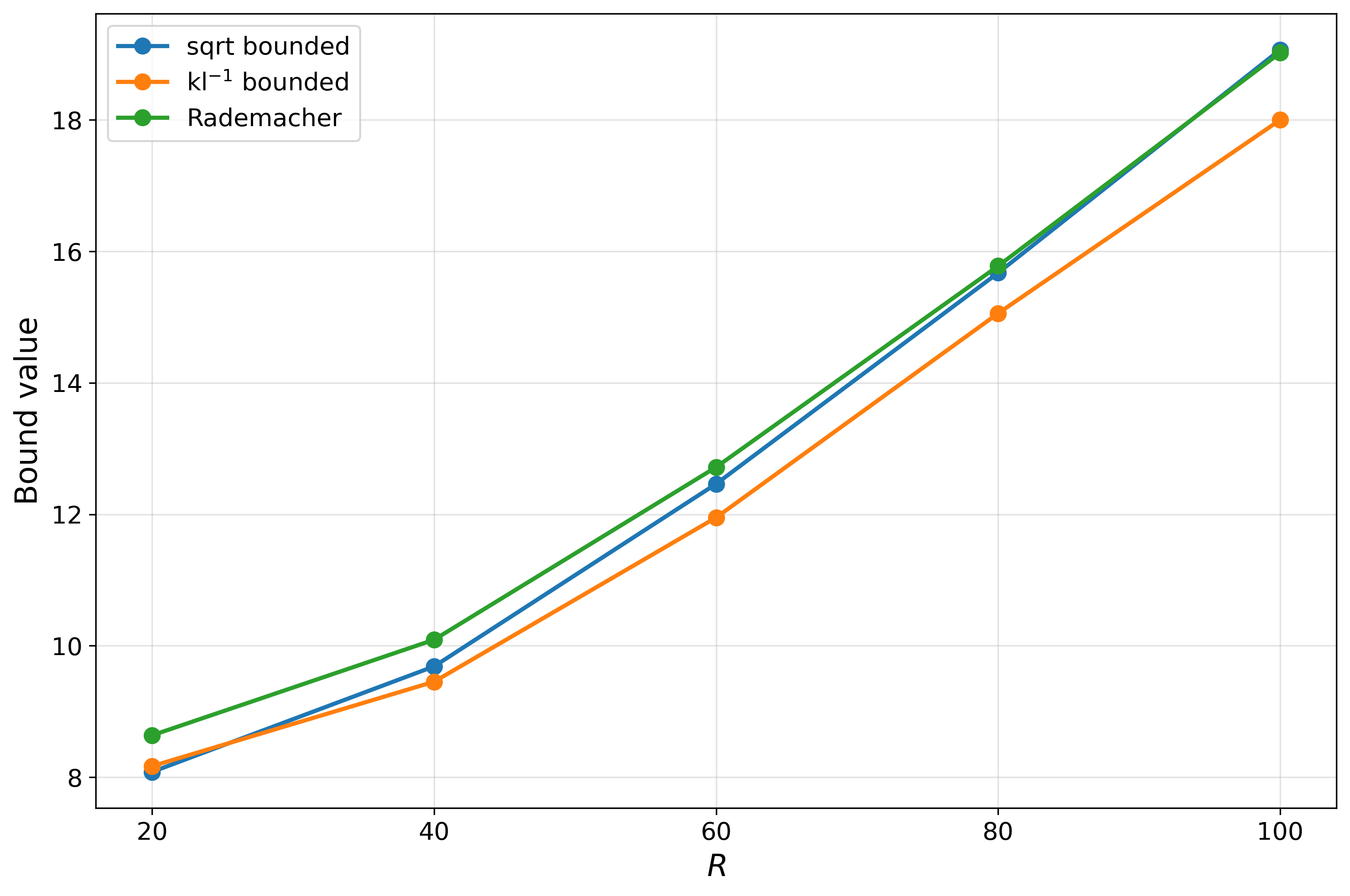}
    \caption{
    Comparison of the square-root and \(\klb^{-1}\) derandomized PAC-Bayes
    bounds with the Rademacher complexity bound for linear classifiers
    trained with the cross-entropy loss on MNIST, as functions of the
    parameter radius \(R\). For each \(R\), the classifier is trained for \(40\) epochs using projected
SGD with momentum, with a linear warm-up followed by cosine annealing.
    }
    \label{fig:linear-ce-bounds}
\end{figure}

\FloatBarrier

\section{Single hidden layer erf (SHEL) networks}
\label{sec:shel}

We now turn to the specialization of our results to a single hidden layer neural network with smooth activation functions. For the activation functions, we consider the error function, which may be seen as a Gaussian smoothing of the sign function \citep{germain2009pacbayesian,letarte2019dichotomize}. Our goal is to compare the resulting bounds with Theorem~5 of \citet{biggs2022margins}. We begin by collecting in the next lemma some quantities needed to evaluate our bounds for the SHEL network.

\begin{lem}[Specialization to the SHEL network]
\label{lem:shel-specialization}
For $x\neq 0$, consider
\[
F_{U,V}(x)
=
V\,\operatorname{erf}\!\left(\frac{Ux}{\sqrt{2}\,\|x\|}\right),
\qquad
U\in\mathbb{R}^{K\times d},
\quad
V\in\mathbb{R}^{C\times K},
\]
where \(\operatorname{erf}\) is applied element-wise and is defined by
\[
\operatorname{erf}(x):=\frac{2}{\sqrt{\pi}}\int_0^x e^{-t^2}\,dt.
\]
The Lipschitz constant and $\beta$-smoothness of the activation function $\operatorname{erf}$ are given respectively by 
\[
\ell_{\operatorname{erf}}:=\frac{2}{\sqrt{\pi}},
\qquad
\beta_{\operatorname{erf}}:=2\sqrt{\frac{2}{\pi e}}.
\]
Assume that
$\|U\|_F\le R_U$,
$\|V\|_F\le R_V$ and that, for every \(y\), the map \(z\mapsto L(z,y)\) is
\(\ell_L\)-Lipschitz and \(\beta_L\)-smooth. Then,

\begin{enumerate}

\item[(i)] (Jensen-gap upper bound for bounded activation functions) From Theorem~\ref{jenUpperBound} with
\[
\begin{aligned}
&T=2,\quad M_0^2=\frac12,\quad M_1^2=K,\quad d_1=K,\quad d_2=C,\\
&\ell_1=\ell_{\operatorname{erf}},\quad \beta_1=\beta_{\operatorname{erf}},\quad
\ell_2=\ell_L,\quad \beta_2=\beta_L.
\end{aligned}
\]
we obtain
\[
|\jen_Q[L(F_{U,V}(x),y)]|
\le
\sigma^2 K\left(
\frac{\ell_L}{\sqrt{2\pi e}}\,R_V
+\frac{\beta_L}{\pi}R_V^2
+\frac{\beta_L C}{2}
\right).
\]
\item[(ii)] (Jensen-gap upper bound for unbounded activation functions) From Theorem~\ref{jenUpperBound-unbounded} with
\[
T=2,\quad
M_0^2=\frac12,\quad
d_1=K,\quad
d_2=C,\quad
\ell_1=\ell_{\operatorname{erf}},\quad
\beta_1=\beta_{\operatorname{erf}},\quad
\ell_2=\ell_L,\quad
\beta_2=\beta_L,
\]
we obtain
\[
|\jen_Q[L(F_{U,V}(x),y)]|
\le
\sigma^2\left[
K\left(
\frac{\ell_L}{\sqrt{2\pi e}}\,R_V
+\frac{\beta_L}{\pi}R_V^2
\right)
+
\frac{\beta_L C}{\pi}R_U^2
+
\frac{\beta_L C}{\pi}K\sigma^2
\right].
\]

\item[(iii)] (KL upper bound for isotropic Gaussian prior and posterior) If
\[
P=\mathcal N(0,\sigma^2 I_m),
\qquad
Q=\mathcal N\!\bigl((\operatorname{vec}(U),\operatorname{vec}(V)),\sigma^2 I_m\bigr),
\qquad
m=K(d+C),
\]
then
\[
\KL(Q\|P)
=
\frac{\|U\|_F^2+\|V\|_F^2}{2\sigma^2}
\le
\frac{R_U^2+R_V^2}{2\sigma^2}.
\]

\end{enumerate}
\end{lem}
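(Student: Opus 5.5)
The plan is to first justify the scalar constants of \(\operatorname{erf}\), then read \(F_{U,V}\) as a two-layer network of the form in Propositions~\ref{varbound} and \ref{varbound-unbounded}, and finally substitute into Theorems~\ref{jenUpperBound} and \ref{jenUpperBound-unbounded}.

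\textbf{Constants of \(\operatorname{erf}\).} Differentiating gives \(\operatorname{erf}'(s)=\frac{2}{\sqrt\pi}e^{-s^2}\), whose maximum is attained at \(s=0\); hence \(\ell_{\operatorname{erf}}=2/\sqrt\pi\). Next, \(\operatorname{erf}''(s)=-\frac{4s}{\sqrt\pi}e^{-s^2}\), and \(|\operatorname{erf}''|\) is maximized at \(s=1/\sqrt2\), with value \(2\sqrt{2/(\pi e)}\). This gives \(\beta_{\operatorname{erf}}\).

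Since \(\operatorname{erf}\) acts coordinatewise, its Jacobian on \(\mathbb{R}^K\) is diagonal. The spectral norm of a diagonal matrix is its largest diagonal entry in absolute value, so the vector-valued activation \(g_1\) is also \(\ell_{\operatorname{erf}}\)-Lipschitz and \(\beta_{\operatorname{erf}}\)-smooth in the sense of Lemma~\ref{JensenLemmaNN}.

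\textbf{Casting \(F_{U,V}\) as a network.} Take the effective input \(\tilde x:=x/(\sqrt2\|x\|)\). Then \(\|\tilde x\|^2=1/2\), so \(M_0^2=1/2\), and \(F_{U,V}(x)=V g_1(U\tilde x)\) with \(W_1=U\), \(W_2=V\), \(d_1=K\) and \(d_2=C\). Because \(|\operatorname{erf}|\le1\), we have \(\|g_1(z)\|^2\le K\), so \(M_1^2=K\). We also have \(\operatorname{erf}(0)=0\), so the hypotheses of Proposition~\ref{varbound-unbounded} hold as well. All spectral norms will be bounded by Frobenius norms: \(\|\overline W_1\|_2\le R_U\) and \(\|\overline W_2\|_2\le R_V\).

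\textbf{Item (i).} With \(T=2\), the double sum in Theorem~\ref{jenUpperBound} has three terms \((t,u)\in\{(1,1),(2,1),(2,2)\}\).
\begin{itemize}
\item The \((1,1)\) term is \(\beta_{\operatorname{erf}}\,K\,M_0^2\,\ell_L R_V\). After multiplying by \(\sigma^2/2\), it becomes \(\sigma^2 K\ell_L R_V/\sqrt{2\pi e}\).
\item The \((2,1)\) term is \(\beta_L K M_0^2\,\ell_{\operatorname{erf}}^2R_V^2=\beta_L K R_V^2\cdot 2/\pi\). After the factor \(1/2\), it gives \(\sigma^2\beta_L K R_V^2/\pi\).
\item The \((2,2)\) term is \(\beta_L C M_1^2=\beta_L CK\). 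After the factor \(1/2\), it gives \(\sigma^2\beta_L CK/2\).
\end{itemize}
Summing the three terms gives the claimed bound.

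\textbf{Item (ii).} Apply Theorem~\ref{jenUpperBound-unbounded} with the same three index pairs. The \((1,1)\) and \((2,1)\) terms are unchanged, since the first product there is empty. The \((2,2)\) term becomes
\[
\beta_L C\, M_0^2\,\ell_{\operatorname{erf}}^2\bigl(\sigma^2K+\|\overline W_1\|_2^2\bigr)\le \beta_L C\,\tfrac{2}{\pi}\bigl(\sigma^2K+R_U^2\bigr).
\]
After the factor \(\sigma^2/2\), this yields \(\sigma^2\frac{\beta_L C}{\pi}(R_U^2+K\sigma^2)\), as stated.

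\textbf{Item (iii).} This is the standard formula for the KL divergence between two Gaussians with the same covariance \(\sigma^2 I_m\): it equals \(\|\mu_Q-\mu_P\|^2/(2\sigma^2)\). Here \(\|\mu_Q-\mu_P\|^2=\|U\|_F^2+\|V\|_F^2\), and the norm constraints give the upper bound.

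\textbf{Main obstacle.} The only delicate point is bookkeeping: the index conventions in the products (\(\ell_{t'+1}\) versus \(\ell_{t'}\)), and remembering that \(\tilde x\), not \(x\), is the network input, which is what gives \(M_0^2=1/2\). I would check each of the three index pairs explicitly against the empty-product conventions stated in the theorems.
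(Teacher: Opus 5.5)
Your proposal is correct and follows the paper's own route: the lemma is obtained by substituting $T=2$, $M_0^2=1/2$, $M_1^2=K$, $d_1=K$, $d_2=C$ and the erf constants into Theorems~\ref{jenUpperBound} and~\ref{jenUpperBound-unbounded}, bounding spectral norms by Frobenius norms, and using the equal-covariance Gaussian KL formula. Your term-by-term evaluation of the index pairs $(1,1)$, $(2,1)$, $(2,2)$ checks out, as does your justification that coordinatewise erf inherits the scalar constants $\ell_{\operatorname{erf}}$ and $\beta_{\operatorname{erf}}$ (the paper only asserts these constants).
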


Since both parts~(i) and~(ii) apply to the SHEL network, the Jensen gap may be bounded by the minimum of the two corresponding upper bounds. We will compare our results to the following margin based bound.

\begin{thm}[Explicit version of Theorem~5 of \citet{biggs2022margins}]
\label{thm:biggs-guedj-thm5-klinv}
Consider the SHEL network
\[
F_{U,V}(x)=V\,\erf\!\left(\frac{Ux}{\sqrt{2}\,\|x\|}\right),
\qquad
U\in\mathbb{R}^{K\times d},
\quad
V\in\mathbb{R}^{C\times K},
\]
and let
\[
V_\infty:=\max_{i,j}|V_{ij}|.
\]
Then, with probability at least \(1-\delta\) over \(S=\{(x_j,y_j)\}_{j=1}^n\sim D^n\), we have uniformly for all
\(\gamma\in(0,V_\infty K)\),
\[
L_D^{0\text{-}1}(F_{U,V})
\le
\frac1n
+
\klb^{-1}\!\left(
 L_{S,\gamma}(F_{U,V})+\frac1n
\;\middle|\;
\varepsilon_{U,V,\gamma,\delta,n}
\right),
\]
where
\[
\begin{aligned}
\varepsilon_{U,V,\gamma,\delta,n}
:=\;&
\frac{17\ln n}{n}
\left(\frac{2V_\infty K}{\gamma}\right)^2
\left(
\frac{\|U-U^0\|_F^2}{2K}
+
\frac{\ln 2}{V_\infty^2K}\,\|V\|_F^2
\right)
\\
&\;+\;
\frac{1}{n}\ln\frac{4\sqrt n}{\delta}
\;+\;
\frac{2}{n}\ln\!\left(
\frac{\ln(4V_\infty K/\gamma)}{\ln 2}
\right).
\end{aligned}
\]
\end{thm}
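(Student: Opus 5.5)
This statement is an explicit instantiation of Theorem~5 of \citet{biggs2022margins}, so the plan is to follow their derandomization argument for the SHEL network and track every constant, rather than to invoke the Jensen-gap machinery of this paper. The argument proceeds in five steps.

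\textbf{Step 1 (posterior construction).} First I fix a margin $\gamma$ and define a posterior that places a Gaussian $\mathcal N(U,\sigma_U^2 I)$ on the first-layer weights. On the second layer it uses the discrete ``sign-sampling'' posterior of \citet{biggs2022margins}: each entry $V'_{ij}$ is drawn independently in $\{\pm V_\infty\}$ with mean $V_{ij}$. The prior centers the first layer at the initialization $U^0$ and uses the symmetric uniform distribution on $\{\pm V_\infty\}$ for the second layer. The KL divergence then splits into two parts:
\[
\frac{\|U-U^0\|_F^2}{2\sigma_U^2}
\quad\text{and}\quad
\sum_{ij}\Bigl(\ln 2-H\bigl(\tfrac{1+V_{ij}/V_\infty}{2}\bigr)\Bigr)\le \frac{\ln 2}{V_\infty^2}\|V\|_F^2 ,
\]
where the second bound uses the elementary inequality $\ln2-H(\tfrac{1+t}{2})\le t^2\ln 2$ for $|t|\le 1$.

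\textbf{Step 2 (derandomization).} Because the erf activation is a Gaussian smoothing of the sign function, averaging over $U'\sim\mathcal N(U,\sigma_U^2I)$ maps the erf network to an expected sign network, provided the scale is matched to the normalization $Ux/(\sqrt2\|x\|)$. Averaging over $V'$ then reproduces $F_{U,V}$ exactly.

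To pass from the margin loss to the $0$-$1$ loss, I would aggregate $N$ independent draws from this posterior. By Hoeffding's inequality, each output coordinate, which is a sum of $K$ terms bounded by $V_\infty$, deviates from $F_{U,V}(x)$ by more than $\gamma/2$ with probability at most $2C\exp(-\gamma^2 N/(8 V_\infty^2 K^2))$. This yields two relations: $L^{0\text{-}1}_D(F)\le \E_Q[\text{aggregate }0\text{-}1]+\text{small}$, and $\E_Q[\text{empirical }\gamma/2\text{-margin}]\le L_{S,\gamma}(F)+\text{small}$. Choosing $N\asymp (2V_\infty K/\gamma)^2\ln n$ makes both ``small'' terms at most $1/n$. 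The KL divergence of the $N$-fold product posterior is $N$ times the single-draw KL, which produces the factor $(2V_\infty K/\gamma)^2\ln n$ together with the constant $17$ once $\sigma_U^2$ is set to $K$ (giving the $1/(2K)$ in front of $\|U-U^0\|_F^2$).

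\textbf{Step 3 (PAC-Bayes).} I then apply Maurer's inverse-kl bound (Theorem~5 of \citet{Maurer2004}) to the bounded aggregate loss. This contributes the term $\ln(4\sqrt n/\delta)/n$.

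\textbf{Step 4 (uniformity in $\gamma$).} Finally, I take a union bound over the dyadic grid $\gamma_k=V_\infty K 2^{1-k}$, with confidence weights proportional to $1/k^2$. For any $\gamma\in(0,V_\infty K)$, I use the nearest grid point below $\gamma$, which costs a factor $2$ inside $(\cdot/\gamma)^2$ and explains the ``$2V_\infty K$''. The weight $k^2$ with $k\approx \log_2(4V_\infty K/\gamma)$ produces the last term $\frac2n\ln\frac{\ln(4V_\infty K/\gamma)}{\ln2}$.

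\textbf{Main obstacle.} The hard part is pinning the exact constant $17$, together with the $+1/n$ inside and outside the $\klb^{-1}$. This requires choosing $N$ as an integer, tracking $C$ and the factor $2$ in the Hoeffding tail, and absorbing $\ln C$ and $\ln(4n)$ into $\ln n$ using $n$ large enough. I would carry out this bookkeeping by following the proof in \citet{biggs2022margins} line by line, with the explicit choices above, and check that the resulting constant is at most $17$.
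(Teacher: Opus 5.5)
Your overall route matches the paper's. The paper's proof just reruns the argument behind Theorem~5 of \citet{biggs2022margins} and keeps the $\klb^{-1}$ form instead of applying Pinsker at the end, which is what your Steps~1, 3 and~4 describe.

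The problem is in Step~2: the bookkeeping there is internally inconsistent and does not produce the stated $\varepsilon_{U,V,\gamma,\delta,n}$. The derandomization rests on the identity $\E_{u'\sim\mathcal N(u,\sigma_U^2 I_d)}\operatorname{sign}(u'\cdot x)=\erf\bigl(u\cdot x/(\sqrt2\,\sigma_U\|x\|)\bigr)$. Matching the network's normalization $Ux/(\sqrt2\|x\|)$ therefore forces $\sigma_U=1$. With your choice $\sigma_U^2=K$, the posterior mean is $V\erf(Ux/(\sqrt{2K}\|x\|))\neq F_{U,V}(x)$. Both derandomization inequalities then fail, because the aggregate concentrates around the wrong function.

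Separately, your Hoeffding bound treats each output coordinate as a single variable of range $2KV_\infty$, which gives $N\asymp V_\infty^2K^2\ln n/\gamma^2$. Multiplying by the second-layer KL $\frac{\ln 2}{V_\infty^2}\|V\|_F^2$ yields $(2V_\infty K/\gamma)^2\frac{\ln 2}{V_\infty^2}\|V\|_F^2$. That is a factor $K$ larger than the statement's $\frac{\ln 2}{V_\infty^2 K}\|V\|_F^2$, and no choice of $\sigma_U$ fixes it.

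The factor $1/K$ must come from the concentration step, not from the posterior variance. With $\sigma_U=1$, each margin difference $\bar g_y-\bar g_k$ of the aggregate is an average over $N$ draws of a sum of $K$ \emph{independent} bounded terms $(V'_{yj}-V'_{kj})\operatorname{sign}(u'_j\cdot x)$. They are independent across $j$ because the rows of $U'$ and the entries of $V'$ are drawn independently. Hoeffding applied to all $NK$ summands then gives $N\asymp V_\infty^2 K\ln n/\gamma^2$. Multiplying by the single-draw KL $\frac12\|U-U^0\|_F^2+\frac{\ln 2}{V_\infty^2}\|V\|_F^2$ gives exactly a constant times $(2V_\infty K/\gamma)^2\bigl(\frac{\|U-U^0\|_F^2}{2K}+\frac{\ln 2}{V_\infty^2 K}\|V\|_F^2\bigr)$, as in the statement.

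Two smaller points. First, a prior uniform on $\{\pm V_\infty\}$ depends on the learned $V$, so it is not a valid PAC-Bayes prior as written. Use the invariance of the $0$-$1$ loss, the margin loss and $\varepsilon$ under $(V,\gamma)\mapsto(cV,c\gamma)$ to reduce to $V_\infty=1$; this also makes your $\gamma$-grid, defined relative to $V_\infty K$, data-independent. Second, the last step needs $\klb^{-1}(q\mid\varepsilon)$ to be nondecreasing in $q$, so that you can replace the Gibbs empirical margin loss by the larger $L_{S,\gamma}(F_{U,V})+\frac1n$.
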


\noindent\textbf{Proof:}
This follows directly from the proof of Theorem~5 in \citet{biggs2022margins}, but applying \(\klb^{-1}\) instead of Pinsker's inequality at the end of the proof in order to keep a tighter result. \Endproof

In our experiments, the SHEL network is trained for $100$ epochs on MNIST with Adam and with a cosine annealing learning rate schedule preceded by a linear warm-up phase. The training objective is the cross-entropy loss. We use the following temperature-scaled multi-class sigmoid loss for evaluating our bounds:
\[
L_{\tau}(z,y)
=
2\sum_{j\neq y}
s\!\left(\tau(z_j-z_y)\right),
\qquad
s(t):=\frac{1}{1+e^{-t}},
\]
where \(\tau>0\).
 Our reported bounds use the bounded
derandomized PAC-Bayes bound in \(\klb^{-1}\) form from
Theorem~\ref{thm:bounded-kl-regrouped-rademacher-sigmai}. The temperature parameter is optimized over
a dyadic grid of the form
\[
\tau_i = 2^{1-i}\tau_{\max},
\qquad
\tau_{\max}=2,
\]
and a union bound is used over the \(\tau\)-grid. The posterior scale
parameter \(\sigma\) is similarly optimized over a dyadic grid with
\(\sigma_{\max}=1\).
 Finally, in order to compare the relative behavior of the bounds outside the vacuous regime, we evaluate the bounds with an effective sample size \(n^\ast\) much larger than the true sample size \(n\). This allows the optimization over the margin parameter \(\gamma\) in the Biggs bound and over \(\tau\) in our bounds to avoid the trivial value \(1\). Indeed, at the true sample size \(n\), the optimized bounds are equal to \(1\) and therefore do not allow a meaningful comparison between the different methods. The evaluations based on \(n^\ast\) should thus be interpreted only as comparisons of the relative behavior of the bounds, and not as PAC guarantees for the true sample size.

Figures~\ref{fig:ce-training-batch-size}
--\ref{fig:ce-training-width} report the resulting total bounds and test
error as functions of either the hidden width \(K\) or the batch size, for
learning rates \(0.01\) and \(0.05\). The reported results are averaged over
three random seeds. Whenever the quantities \(J\) and \(H\) involve a
supremum over the parameter class, we replace it by the largest value
observed during training. The Gaussian averaged quantities are estimated by
Monte Carlo sampling. These empirical approximations are not certified upper
bounds on the corresponding theoretical quantities and therefore lead to an
optimistic evaluation. Consequently, our main objective is not to compare
the absolute tightness of the bounds, but rather to study their relative
behavior as functions of the width \(K\) and the batch size.

As can be seen in Figure~\ref{fig:ce-training-batch-size}, the test error
appears essentially constant as a function of the batch size in these
experiments. To quantify how close each quantity is to being constant with
respect to the batch size \(B\), we fit
\[
\log q = a+\alpha\log B,
\]
where \(q\) denotes the quantity under consideration. The fitted
exponents for the test error are \(-0.026\) and \(-0.024\) for learning rates
\(0.01\) and \(0.05\), respectively. The
bounds exhibit a more noticeable dependence on the batch size. For learning
rate \(0.01\), the Biggs bound decreases with exponent \(-0.136\), whereas
the smoothness-based derandomized bound increases mildly with exponent
\(0.078\). This opposite behavior may reflect the greater sensitivity of our bound to the
empirical flatness quantities, which can increase with the batch size. For learning rate \(0.05\), both bounds
decrease, with exponents \(-0.237\) for the Biggs bound and \(-0.152\) for
our bound. The decrease of our bound is, however, mainly concentrated
between batch sizes \(200\) and \(500\). Indeed, over the range from \(500\) to \(2000\), its fitted exponent is \(0.009\), indicating essentially constant behavior.

As shown in Figure~\ref{fig:ce-training-width}, the test error also appears
essentially constant as the width \(K\) increases. This indicates that the larger
number of parameters does not lead to increased overfitting over the range
considered. This observation is confirmed by the slightly negative fitted
exponents reported in Table~\ref{tab:width-global-slopes}. By contrast, both
bounds increase with \(K\), although the smoothness-based derandomized bound
grows more slowly than the Biggs bound for both learning rates. Its estimated
exponents are \(0.274\) and \(0.378\), compared with \(0.759\) and \(0.457\)
for the Biggs bound.

It is difficult to decompose the optimized bound cleanly and isolate the
contribution of each term, since the infimum over the bound parameters ties
the different terms together. To obtain some insight into the source of the
observed width dependence, we therefore fix \(\sigma\) and \(\tau\) across
the different values of \(K\). We then report in Table~\ref{tab:fixed-sigma-tau-jh-slopes} the global
exponents of different terms in our bound. The Jacobian--Hessian term
grows more slowly than the remainder of the bound. Moreover, removing the
factor \(\|w\|\) reduces its exponent from \(0.232\) to \(0.110\). These
results suggest that the empirical Jacobian and Hessian quantities contribute to the weaker dependence of our bound on \(K\).

The results of this section suggest that, in the derandomization process, we
should seek to retain as many data-dependent quantities as possible beyond
the empirical surrogate used to upper-bound the \(0\)-\(1\) loss. This appears
to be beneficial even for arguably small neural networks such as the SHEL
networks. In the next section, we move from this correlational
evidence toward a more causal investigation of the role played by these quantities.

\begin{figure}[t]
    \centering

    \begin{subfigure}[t]{0.48\textwidth}
        \centering
        \includegraphics[width=\textwidth]{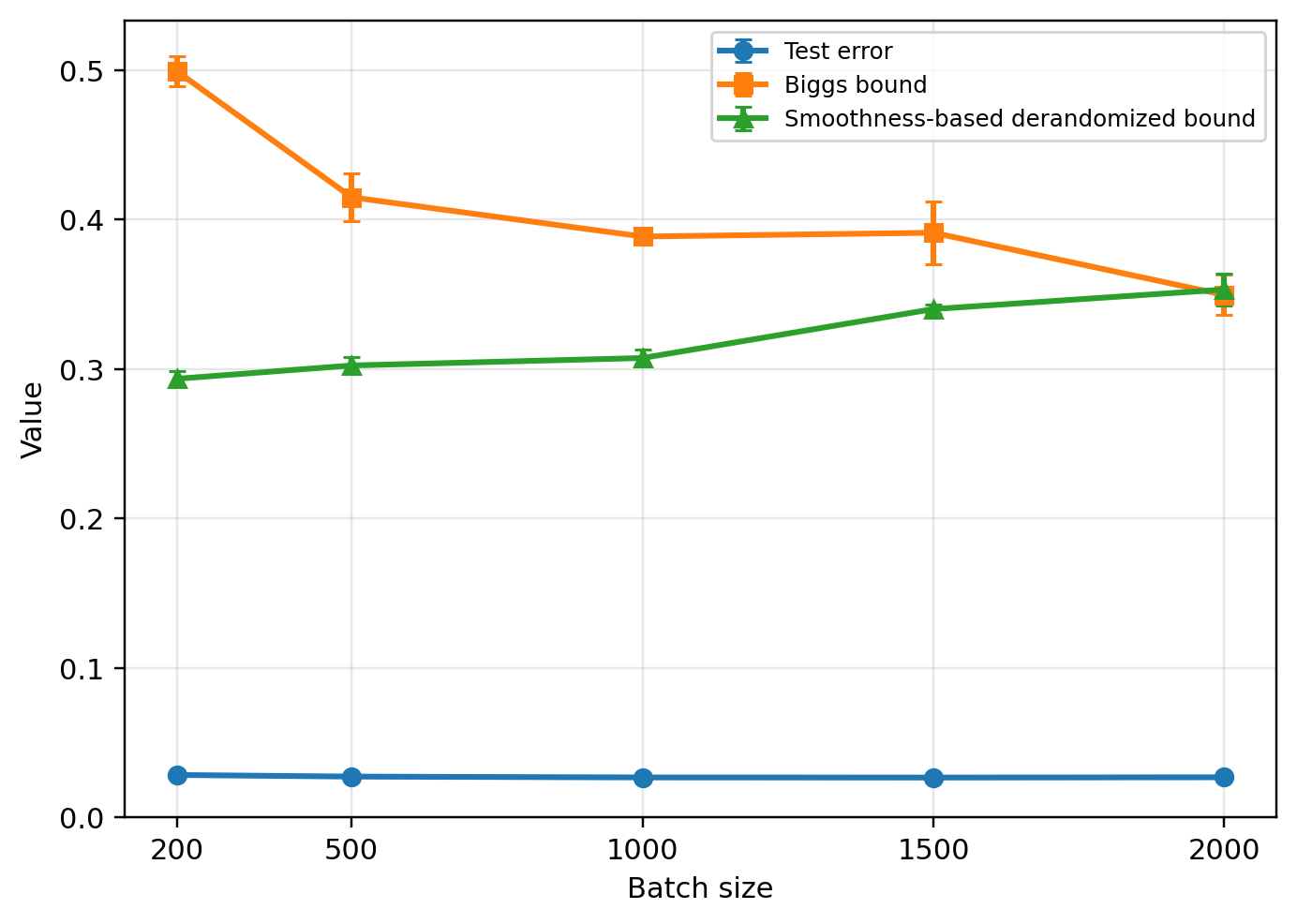}
        \caption{Learning rate \(0.01\).}
        \label{fig:ce-batch-lr001}
    \end{subfigure}
    \hfill
    \begin{subfigure}[t]{0.48\textwidth}
        \centering
        \includegraphics[width=\textwidth]{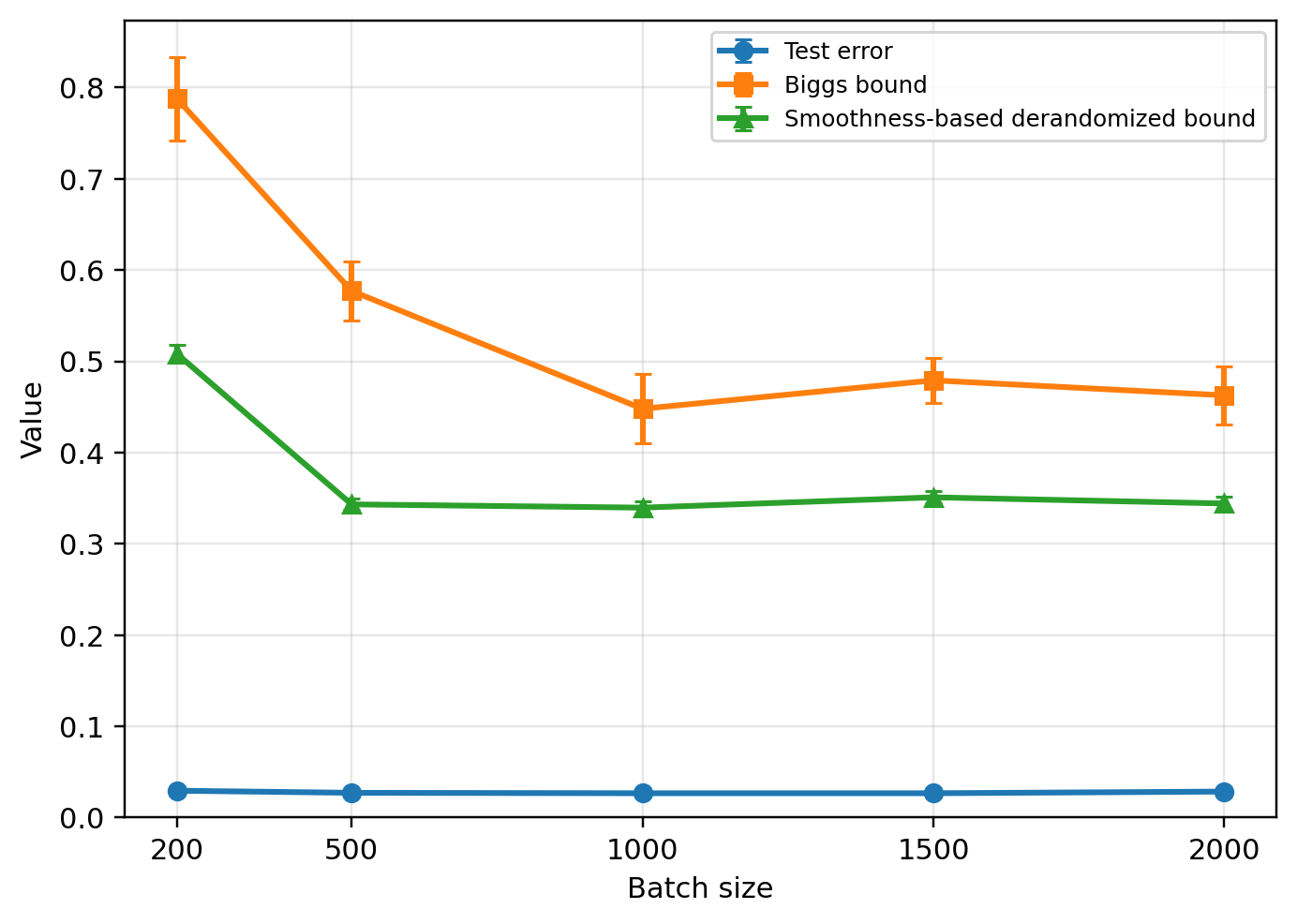}
        \caption{Learning rate \(0.05\).}
        \label{fig:ce-batch-lr005}
    \end{subfigure}

    \caption{
    Generalization bounds after training on the cross-entropy loss as a function
    of the batch size (with \(n^\ast=6\times 10^9\)).
    }
    \label{fig:ce-training-batch-size}
\end{figure}

\begin{figure}[t]
    \centering

    \begin{subfigure}[t]{0.48\textwidth}
        \centering
        \includegraphics[width=\textwidth]{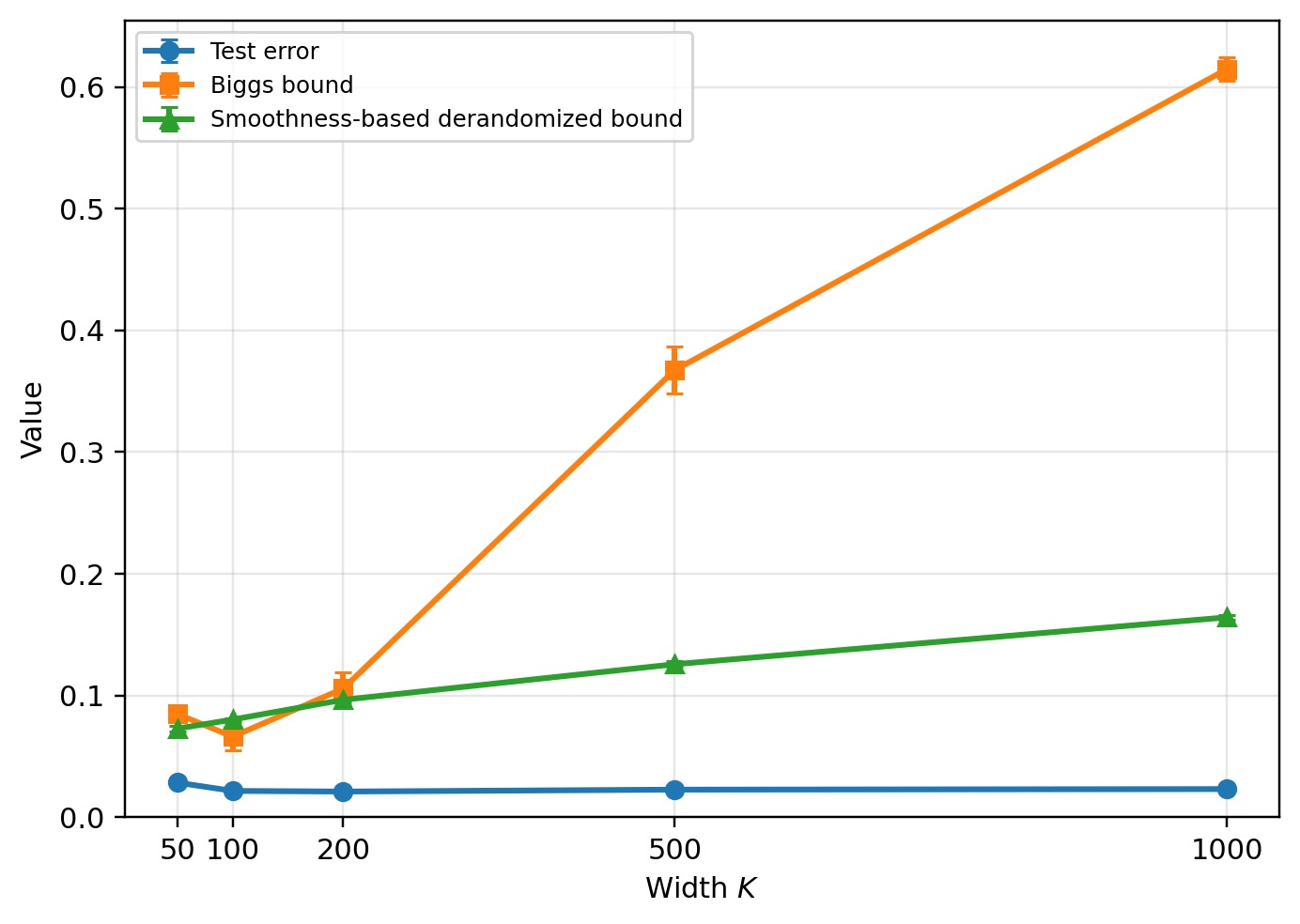}
        \caption{Learning rate \(0.01\).}
        \label{fig:ce-width-lr001}
    \end{subfigure}
    \hfill
    \begin{subfigure}[t]{0.48\textwidth}
        \centering
        \includegraphics[width=\textwidth]{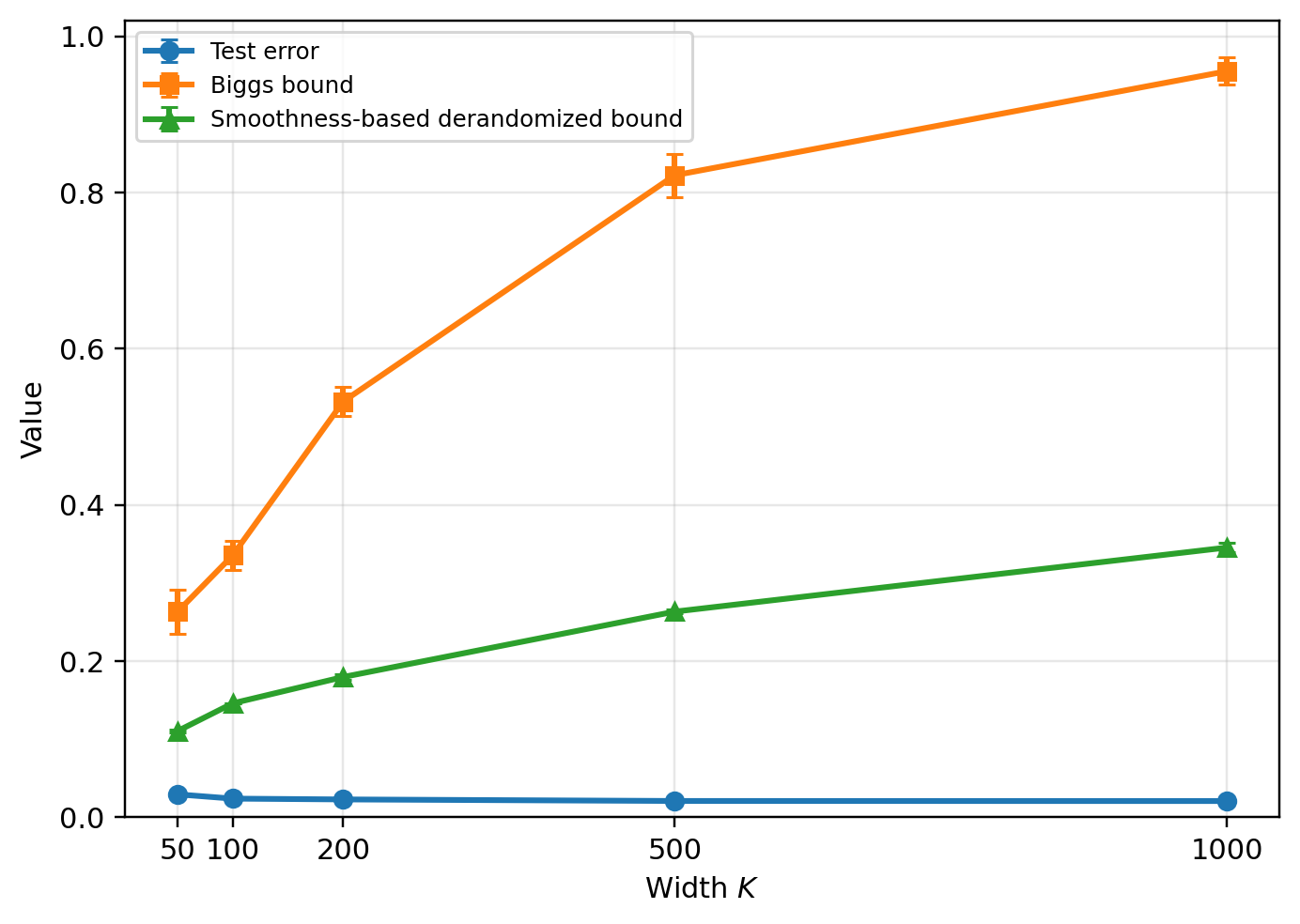}
        \caption{Learning rate \(0.05\).}
        \label{fig:ce-width-lr005}
    \end{subfigure}

    \caption{
    Generalization bounds after training on the cross-entropy loss as a function
    of the width \(K\) (with \(n^\ast=10^{11}\)).
    }
    \label{fig:ce-training-width}
\end{figure}

\begin{table}[t]
\centering
\caption{Estimated global exponents of the bounds and test error with respect to the width \(K\).}
\label{tab:width-global-slopes}
\begin{tabular}{lcc}
\toprule
Quantity & lr \(=0.01\) & lr \(=0.05\) \\
\midrule
Biggs margin bound& \(0.759\) & \(0.457\) \\
Smoothness-based derandomized bound& \(0.274\) & \(0.378\) \\
Test error & \(-0.045\) & \(-0.107\) \\
\bottomrule
\end{tabular}
\end{table}

\begin{table}[t]
\centering
\caption{Estimated global exponents with respect to the width \(K\) for lr \(=0.01\)
 with \(\sigma=0.03125\) and \(\tau=0.5\) fixed across
all widths. The exponent \(\alpha\) is obtained by fitting
\(\log y=a+\alpha\log K\) by ordinary least squares, and \(R^2\) denotes
the coefficient of determination of this linear fit in log-log space.}
\label{tab:fixed-sigma-tau-jh-slopes}
\begin{tabular}{lcc}
\toprule
Quantity & Exponent \(\alpha\) & \(R^2\) \\
\midrule
Full bound
& \(0.293\) & \(0.973\) \\[1mm]
\(\displaystyle

\|w\|\sqrt{\beta_\tau^2J+\ell_\tau^2H}
\)
& \(0.232\) & \(0.972\) \\[2mm]
\(\displaystyle
\sqrt{\beta_\tau^2J+\ell_\tau^2H}\)
& \(0.110\) & \(0.871\) \\[2mm]
Remainder after subtracting the full Jacobian-Hessian term
& \(0.378\) & \(0.868\) \\
\bottomrule
\end{tabular}
\end{table}

\FloatBarrier

\section{A Regularizer Inspired by Our Theory for Batch-Normalized Networks}
\label{sec:jh-bn-regularizer}
The Rademacher complexity bound for the Jensen-gap class obtained in our
theory (see Corollary~\ref{cor:rad-jensen-gap} and Remark~\ref{rem:global-controls-from-gaussian-averages}) motivates data-dependent
quantities that can be used as regularizers for neural networks. Rather than
attempting to compute a fully faithful version of the quantity appearing in
the bound, we evaluate a local and computationally tractable proxy at the
current network parameters during training. Let \(S_B\) denote the current training minibatch of size \(B\), and let \(S_{B'}\subseteq S_B\) denote a possibly smaller minibatch of size \(B'\) used to estimate the regularizer for computational efficiency. We define
\[
\mathrm{JH}(w;S_{B'})
:=
\sqrt{
\beta^2 J(w;S_{B'})^2
+
\ell^2 H(w;S_{B'})
},
\]
where 
\[
J(w;S_{B'})
:=
\frac{1}{B'}
\sum_{i=1}^{B'}
\|J_i(w)\|_F^2
\]
and
\[
H(w;S_{B'})
:=
\frac{1}{B'}
\sum_{i=1}^{B'}
\sum_{k=1}^C
\|H_{i,k}(w)\|_F^2.
\]
 The constants \(\ell\) and \(\beta\) denote the Lipschitz and smoothness constants of the loss with respect to the network output.

 For batch-normalized networks, we assume that each batch-normalization layer is
applied before the non-linearity. To remain close to the setting of the
theoretical results, we work with an effective parametrization in which each
batch-normalization layer is folded into the preceding affine layer. Indeed,
for fixed batch-normalization statistics, batch normalization acts as an affine
transformation of the pre-activation. Its composition with the preceding affine
map can therefore be represented as a single effective affine map. More precisely, consider an affine layer followed by batch normalization,
\[
z_t = W_t u_{t-1}+b_t,
\qquad
u_t = g_t(\mathrm{BN}_t(z_t)).
\]
Equivalently, we absorb the bias into an augmented weight matrix by defining
\[
\bar u_{t-1}
:=
\begin{pmatrix}
u_{t-1}\\
1
\end{pmatrix},
\qquad
\bar W_t
:=
\begin{pmatrix}
W_t & b_t
\end{pmatrix},
\]
so that
\[
z_t
=
W_tu_{t-1}+b_t
=
\bar W_t\bar u_{t-1}.
\]
Let \(\mu_{t,B'}\) and \(v_{t,B'}\) denote the
empirical mean and variance of \(z_t\).
For convolutional layers, these statistics are computed channelwise over both the batch and spatial dimensions. If the batch-normalization layer has affine
parameters \(\gamma_t\) and \(\delta_t\), then
\[
\mathrm{BN}_t(z)
=
\gamma_t\odot
\frac{z-\mu_{t,B'}}{\sqrt{v_{t,B'}+\varepsilon_t}}
+
\delta_t,
\]
where \(\odot\) denotes componentwise multiplication, and the division and
square root are also taken componentwise.
Define 
\[
s_{t,B'}
:=
\frac{\gamma_t}{\sqrt{v_{t,B'}+\varepsilon_t}}.
\]
Then, the affine map followed by batch normalization can be written as a single effective affine map
\[
\mathrm{BN}_t(z_t)
=
\bar W_{t,\mathrm{eff},B'}\bar u_{t-1},
\]
where
\[
\bar W_{t,\mathrm{eff},B'}
:=
\begin{pmatrix}
\operatorname{Diag}(s_{t,B'})W_t
&
s_{t,B'}\odot(b_t-\mu_{t,B'})+\delta_t
\end{pmatrix}.
\]
For convolutional layers, \(\operatorname{Diag}(s_{t,B'})W_t\) means that each
output channel of the convolutional kernel is multiplied by the corresponding
batch-normalization scale. In the augmented notation, the final column of
\(\bar W_{t,\mathrm{eff},B'}\) represents the corresponding effective bias,
which is shared across spatial locations.
We denote by
\[
w_{\mathrm{eff},B'}
=
\bigl\{
\bar W_{t,\mathrm{eff},B'}
\bigr\}_{t}
\]
the collection of all such effective affine parameters. Layers that are
not followed by batch normalization are included unchanged in this collection. The regularizer is computed with respect to the effective parametrization
\(w_{\mathrm{eff},B'}\). In other words, the Jacobian and Hessian terms are
formed in the effective coordinates and evaluated at the effective parameters
induced by the current network weights. The parameters being optimized remain
the original weights \(w\). The gradient of the regularized objective is backpropagated through the map \(w\mapsto w_{\mathrm{eff},B'}(w)\). We emphasize that the batch statistics \(\mu_{t,B'}\) and \(v_{t,B'}\) are
not treated as constants: they depend also on \(w\) and gradients are propagated through this
dependence as well.

Since, for any matrix A,
\[
\|A\|_F^2
=
\operatorname{Tr}(AA^\top)
=
\mathbb{E}_{v}\, v^\top AA^\top v
=
\mathbb{E}_{v}\,\|A^\top v\|_2^2,
\]
for any random vector \(v\) satisfying \(\mathbb{E}[vv^\top]=I\), the squared
Frobenius norms in \(J(w_{\mathrm{eff},B'},S_{B'})\) and \(H(w_{\mathrm{eff},B'},S_{B'})\) can be estimated using random matrix-vector products. In practice, this only requires Jacobian-vector and Hessian-vector products, and avoids explicitly forming or storing the full Jacobian or Hessian matrices. 

To estimate the Jacobian
term, we sample \(M\) independent random vectors \(v_1,\ldots,v_M\) in the
output space of the minibatch score vector
\[
\bigl(h_{w_{\mathrm{eff},B'}}(x_1),\ldots,
h_{w_{\mathrm{eff},B'}}(x_{B'})\bigr),
\]
and compute
\[\nabla_{w_{\mathrm{eff},B'}}
\left[
\frac{1}{\sqrt{B'}}
\left\langle
v_m,\,
\bigl(h_{w_{\mathrm{eff},B'}}(x_1),\ldots,
h_{w_{\mathrm{eff},B'}}(x_{B'})\bigr)
\right\rangle
\right],
\qquad m=1,\ldots,M.\]
The average of the squared norms of these quantities gives a stochastic estimate
of the corresponding squared Frobenius norm.

To estimate the Hessian term, for each \(m\), we additionally sample an
independent random vector \(u_m\) in parameter space and differentiate, with
respect to \(w_{\mathrm{eff},B'}\), the scalar product between the
vector-Jacobian product above and \(u_m\). This yields the required
Hessian-vector product without explicitly forming the Hessian. During training, these stochastic estimates are kept in the computational graph, so PyTorch
autograd can differentiate through the vector-Jacobian and Hessian-vector
products and optimize the resulting regularized objective.

We train an eight-layer convolutional neural network on CIFAR-10, where the
eight layers consist of six convolutional layers and two fully connected
layers. Each convolutional layer and the first fully connected layer are
followed by a BatchNorm layer and a centered Softplus activation. The loss function is the cross-entropy and training is
performed with Adam, using a cosine annealing learning-rate schedule preceded by
a linear warm-up phase of \(3\) epochs. The total number of epochs is set to
\(40\), except in the experiments that adjust the training time as a function of
the batch size. In that case, for a batch size \(B\in\{256,512,1024,2048,4096\}\), we use
\[
40\sqrt{B/256}
\]
epochs. For each batch size, we first tune the
learning rate for the baseline model. We then keep this learning rate fixed and
tune only the regularization coefficient for the \(\mathrm{JH}\)-regularized
model. The learning rate grid consists of \(10\) logarithmically spaced values
between \(10^{-3}\) and \(10^{-1}\), generated using
\(\texttt{np.geomspace}\). The regularization grid consists of \(10\)
logarithmically spaced values between \(10^{-5}\) and \(5\times10^{-4}\), except for the
\(\mathrm{JH}(w/\|w\|;S_{B'})\) method in the first experiment. For this method,
the appropriate search grid consisted of \(10\) logarithmically spaced values between
\(10^{-7}\) and \(5\times10^{-6}\). Hyperparameters are
selected using validation accuracy on \(5000\) examples and the final selected configurations are rerun over three seeds. In all experiments, the regularizer is computed on a subset minibatch of size \(B'=50\) with \(M=4\) random vectors except for the experiments reported in Table~\ref{tab:jh-bprime-probes-sqrt-epoch-comparison}.

In the first experiment, we compare four parametrizations of the
\(\mathrm{JH}\) regularizer with the unregularized cross-entropy baseline. The
first version computes \(\mathrm{JH}\) with respect to the effective weights
\(w_{\mathrm{eff},B'}\), as described in this section. The second computes
\(\mathrm{JH}\) with respect to the original weights \(w\), while the third also
includes the affine parameters of the BatchNorm layers. The fourth computes \(\mathrm{JH}\) with respect to layerwise normalized weights, where the weights of every layer of the network are replaced by \(W_t/\|W_t\|_F\). We denote this
method by \(\mathrm{JH}(w/\|w\|;S_{B'})\), where \(w/\|w\|\) is used as
shorthand for the collection \(\{W_t/\|W_t\|_F\}_t\).
Figure~\ref{fig:batchsize-jh-effective-comparison} shows the results across
batch sizes. Only the effective-weight version,
\(\mathrm{JH}(w_{\mathrm{eff},B'};S_{B'})\), and the layerwise normalized-weight version, \(\mathrm{JH}(w/\|w\|;S_{B'})\), meaningfully improve over the baseline. This highlights the importance of choosing a parametrization that is compatible with the effective
function implemented by networks with BatchNorm layers.

\begin{rem}
 Jacobian and Hessian regularizers have been studied as training objectives, including regularization in input space
\citep{sokolic2017robust,varga2017gradient,cui2022jacobianhessian}
and Hessian trace regularization with respect to the model parameters
\citep{liu2022hessiantrace}. To our knowledge, previous approaches based on curvature or sharpness for BatchNorm networks define these quantities with respect to the original weights or to normalized or scale invariant parameterizations of these weights
\citep{rangamani2019scale,tsuzuku2020normalized,yi2021bn,lyu2022understanding}. In contrast, we compute the regularizer with respect to the
BatchNorm effective weights \(w_{\mathrm{eff},B'}\). Such folding of the BatchNorm layer into the adjacent affine weights is used in
the quantization and inference literature
\citep{jacob2018quantization,nagel2021white}, where the goal is to deploy the
same trained function more efficiently. In our case, the folding is used for a
different purpose: it defines the coordinate system in which the regularizer is
computed during training. Moreover, the effective weights are batch dependent
because they are constructed using the current regularizer minibatch statistics rather than
fixed test time BatchNorm statistics.
\end{rem}

In the second experiment, we further investigate whether
\(\mathrm{JH}(w_{\mathrm{eff},B'};S_{B'})\) can mitigate the degradation in
performance sometimes observed when increasing the batch size. We compare the
baseline and the \(\mathrm{JH}\)-regularized model under two training-time
protocols: the fixed \(40\)-epoch protocol and the variable-epoch protocol
described above, in which the number of epochs is scaled by \(\sqrt{B/256}\).
We perform this comparison both on clean CIFAR-10 and under \(20\%\) uniform
label noise (with probability \(0.2\), each label is replaced by a uniformly sampled incorrect class). The results are shown in Figure~\ref{fig:batchsize-clean-fixed-vs-sqrt} for the clean-label setting and in Figure~\ref{fig:batchsize-noise20-fixed-vs-sqrt} for the label-noise setting. 

We first observe that the regularizer leads to larger absolute gains over the baseline in the label-noise setting than in the clean-label setting. This is expected since label noise exacerbates overfitting, thereby creating more room for the regularizer to improve generalization. In most cases, increasing the number of epochs benefits both the baseline and the \(\mathrm{JH}\)-regularized model. The main exception occurs under label noise at batch size \(4096\), where the fixed-epoch \(\mathrm{JH}\) run performs unusually well. We conjecture that this corresponds to an early-stopping-like regime, which may prevent excessive adaptation to the corrupted labels. Under label noise, when the number of epochs is increased according to the variable-epoch protocol, the larger batch sizes even outperform batch size \(256\). On clean data, the baseline performance deteriorates more rapidly as the batch size increases compared to the \(\mathrm{JH}\)-regularized model. Overall, these results suggest that \(\mathrm{JH}(w_{\mathrm{eff},B'};S_{B'})\) is especially beneficial in regimes where large batch training or noisy labels deteriorate performance.

\begin{table}[t]
\centering
\caption{CIFAR-10 test accuracy under the variable-epoch protocol with batch size \(B=4096\). We compare the baseline with \(\mathrm{JH}\)-regularized models using different regularizer minibatch sizes \(B'\) and different numbers \(M\) of random vectors. Results are reported as mean \(\pm\) standard deviation over three seeds.}
\label{tab:jh-bprime-probes-sqrt-epoch-comparison}
\begin{tabular}{lccc}
\toprule
Method & \(B'\) & \(M\) & Test accuracy (\%) \\
\midrule
Baseline & -- & -- & \(78.84 \pm 0.21\) \\
\(\mathrm{JH}(w_{\mathrm{eff},B'};S_{B'})\) & \(50\) & \(1\)  & \(81.19 \pm 0.43\) \\
\(\mathrm{JH}(w_{\mathrm{eff},B'};S_{B'})\) & \(50\) & \(4\)  & \(81.46 \pm 0.46\) \\
\(\mathrm{JH}(w_{\mathrm{eff},B'};S_{B'})\) & \(50\) & \(16\) & \(81.40 \pm 0.18\) \\
\(\mathrm{JH}(w_{\mathrm{eff},B'};S_{B'})\) & \(256\) & \(4\) & \(81.68 \pm 0.37\) \\
\(\mathrm{JH}(w_{\mathrm{eff},B'};S_{B'})\) & \(1024\) & \(4\) & \(81.73 \pm 0.11\) \\
\bottomrule
\end{tabular}
\end{table}

\begin{figure}[t]
    \centering
    \includegraphics[width=0.95\textwidth]{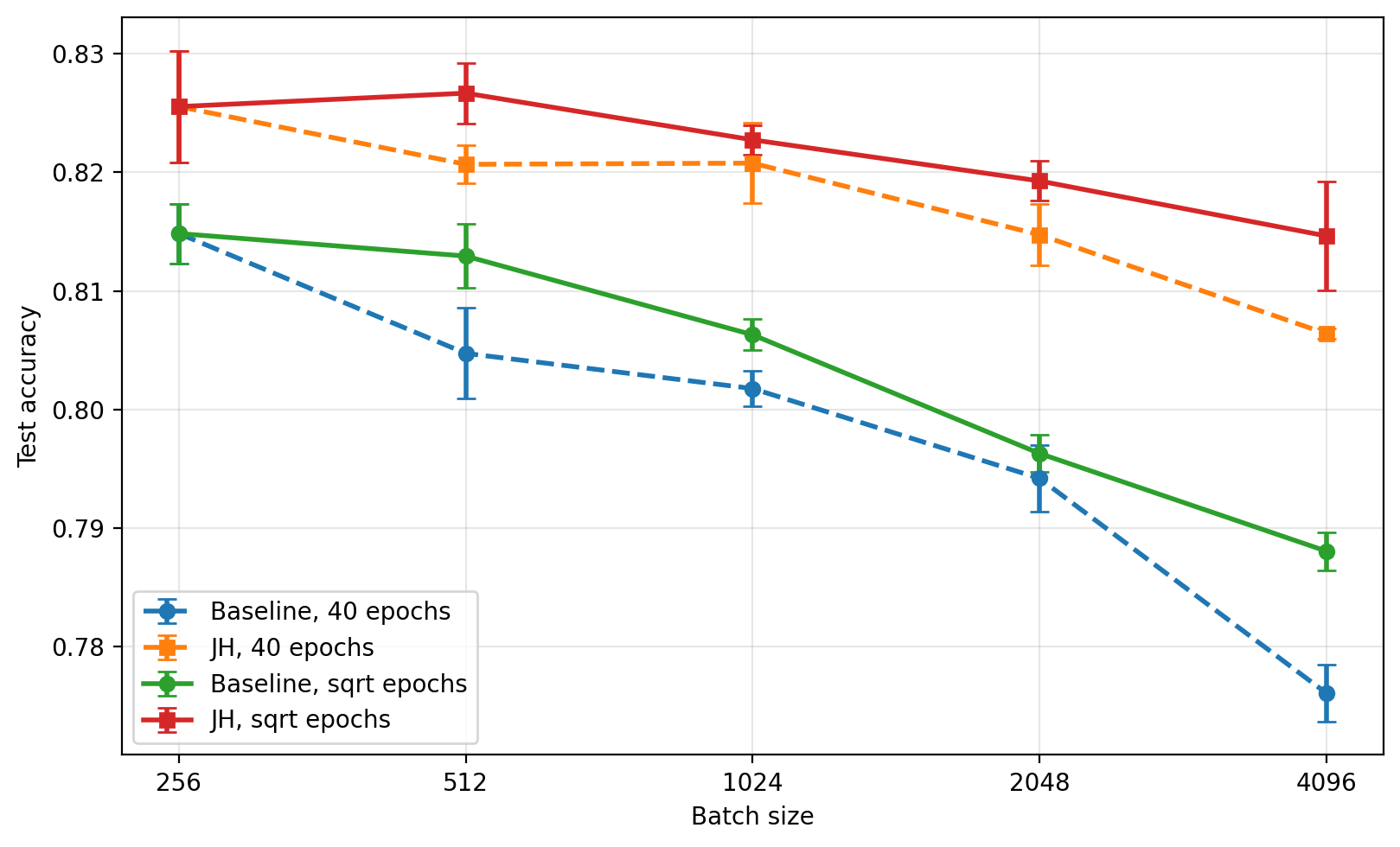}
    \caption{
    CIFAR-10 test accuracy as a function of the batch size for the baseline and the \(\mathrm{JH}\)-regularized model. The fixed \(40\)-epoch protocol is compared with the variable-epoch protocol where the number of epochs is scaled by \(\sqrt{B/256}\). Error bars indicate the standard deviation over three seeds.
    }
    \label{fig:batchsize-clean-fixed-vs-sqrt}
\end{figure}

\begin{figure}[t]
    \centering
    \includegraphics[width=0.95\textwidth]{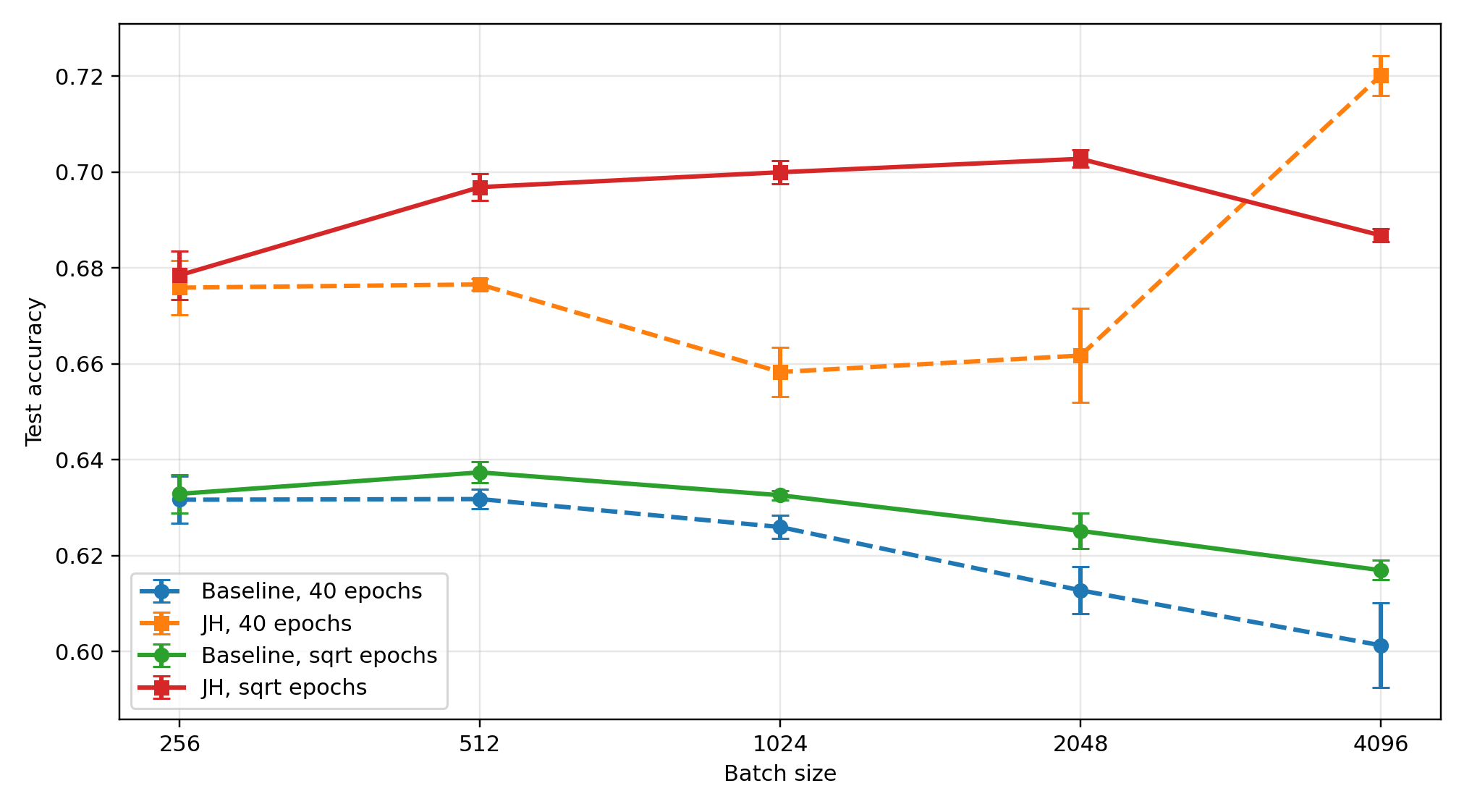}
    \caption{
    Clean CIFAR-10 test accuracy under \(20\%\) uniform label noise in the training and validation labels. The \(\mathrm{JH}\)-regularized model is compared with the baseline under both the fixed \(40\)-epoch protocol and the variable-epoch protocol where the number of epochs is scaled by \(\sqrt{B/256}\). Error bars indicate the standard deviation over three seeds.
    }
    \label{fig:batchsize-noise20-fixed-vs-sqrt}
\end{figure}

\begin{figure}[t]
    \centering
    \includegraphics[width=0.95\textwidth]{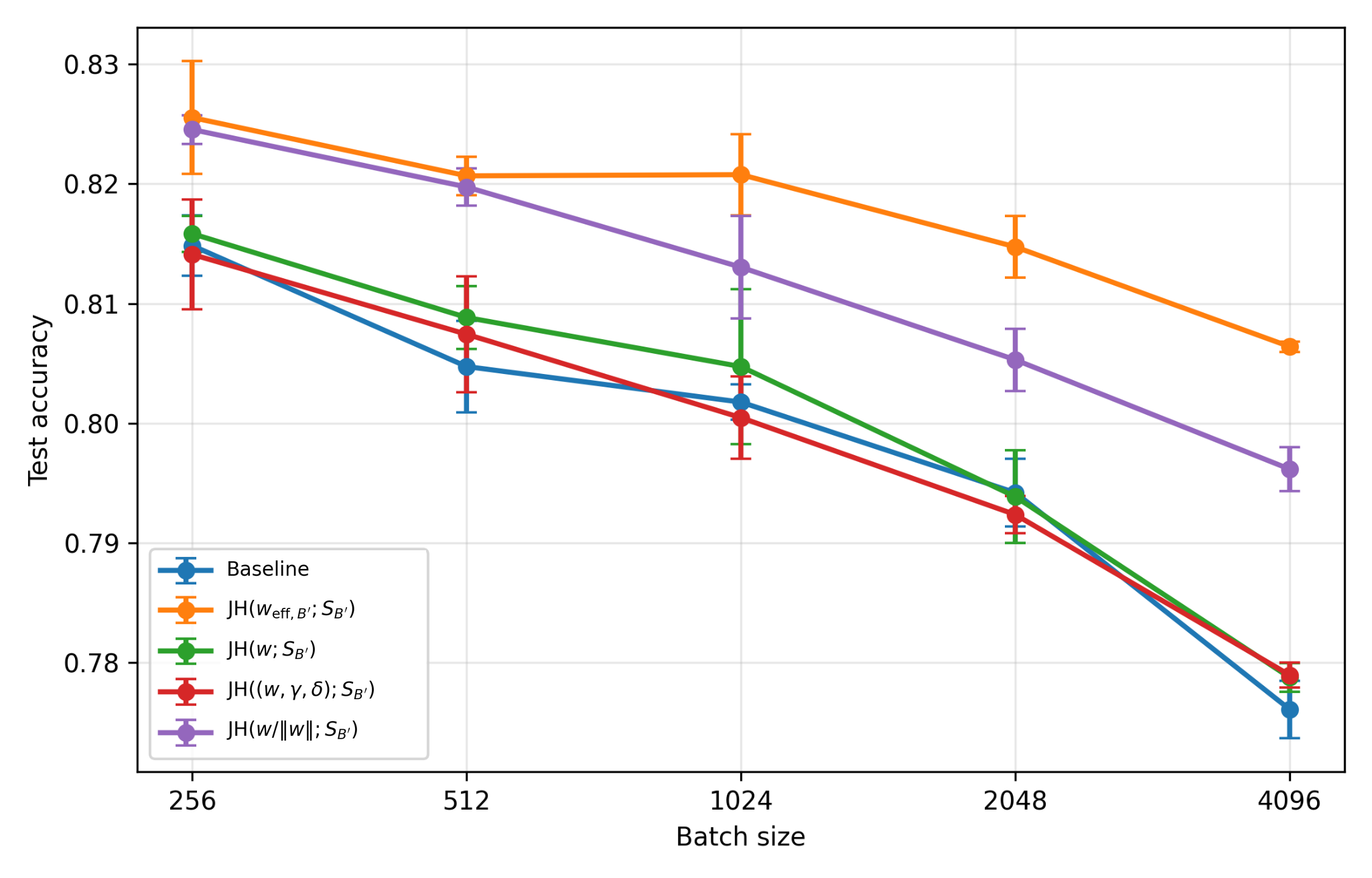}
    \caption{
    CIFAR-10 test accuracy as a function of the training batch size \(B\) for the baseline and four variants of the \(\mathrm{JH}\) regularizer, estimated using a regularizer minibatch of size \(B'=50\). Among the tested variants, only when the regularizer is
computed with respect to the effective weights or with respect to the normalized weights does the performance consistently and meaningfully improve over the baseline. Error bars indicate the standard deviation over three seeds.
    }
    \label{fig:batchsize-jh-effective-comparison}
\end{figure}

Finally, we investigate the effect of the regularizer minibatch size \(B'\) and the number \(M\) of random vectors used to estimate the Jacobian and Hessian terms. Table~\ref{tab:jh-bprime-probes-sqrt-epoch-comparison} reports the results under the variable-epoch protocol with \(B=4096\). Increasing \(B'\) led to a small improvement in test accuracy, while increasing \(M\) beyond \(4\) did not provide a clear additional benefit.

\begin{rem}
A natural idea suggested by the generalization bounds would be to combine the \(\mathrm{JH}\) regularizer with a norm penalty on the effective weights \(w_{\mathrm{eff},B}\). However, based on our preliminary experiments, this did not meaningfully improve performance compared with using only the \(\mathrm{JH}\) regularizer in our experimental setting. In some sense, the effective parameters \(w_{\mathrm{eff},B}\) are not arbitrary because of the BatchNorm layers. Indeed, BatchNorm induces some control over the scale of the inputs fed to the nonlinearities, although this control is data-dependent and comes with the caveat that the affine BatchNorm parameters can partially undo it. This type of control may already be sufficient in some practical settings to make additional norm control unnecessary. We leave a more thorough investigation of these considerations to future work.
\end{rem}

\begin{rem}
Unlike a regularizer built from the Hessian of the labeled loss, the \(\mathrm{JH}\) regularizer is label-free: it is
computed from the Jacobian and Hessian of the score map with respect
to the (effective) parameters. Consequently, \(\mathrm{JH}\) can also be evaluated on unlabeled
inputs, making it potentially suitable as an auxiliary regularizer in settings
where unlabeled data are available, such as semi-supervised or unsupervised
learning. Future research could investigate this direction.
\end{rem}

\clearpage

\section{Conclusion}
We introduced a smoothness based approach to PAC-Bayes derandomization. The
main message is that the passage from a randomized Gibbs predictor to the
deterministic predictor at the posterior mean can be isolated as a separate
generalization problem, namely bounding the generalization gap of the Jensen
gap class. This viewpoint makes it possible to combine PAC-Bayes control of
the randomized predictor with data dependent control of the derandomization
penalty. Controlling this derandomization term through Rademacher complexity
naturally brings in Jacobian and Hessian quantities of the score map.

An important direction for future work is to obtain sharper high probability bounds for the Jensen gap class. In particular, local Rademacher complexity bounds could lead to tighter bounds than those obtained here. Another promising direction is an algorithmic stability analysis of the Jensen gap class, which could provide a different route to controlling the derandomization penalty and clarify its behavior under a given optimization algorithm.

\section*{Acknowledgements}

The authors thank Mathieu Bazinet and Pascal Germain for helpful discussions on PAC-Bayes theory. This research was enabled in part by support provided by Calcul Québec and the Digital Research Alliance of Canada.
 
\bibliographystyle{plainnat}
\bibliography{iclr2024_conference}

@InProceedings{clerico2025deterministic,
  title = {Generalisation under gradient descent via deterministic {PAC}-{Bayes}},
  author = {Clerico, Eugenio and Farghly, Tyler and Deligiannidis, George and Guedj, Benjamin and Doucet, Arnaud},
  booktitle = {Proceedings of The 36th International Conference on Algorithmic Learning Theory},
  pages = {349--389},
  year = {2025},
  editor = {Kamath, Gautam and Loh, Po-Ling},
  volume = {272},
  series = {Proceedings of Machine Learning Research},
  month = {24--27 Feb},
  publisher = {PMLR},
  url = {https://proceedings.mlr.press/v272/clerico25a.html}
}

@inproceedings{casado2024pacbayeschernoff,
  title = {{PAC}-{Bayes}-{Chernoff} bounds for unbounded losses},
  author = {Casado, Ioar and Ortega, Luis A. and P{\'e}rez, Aritz and Masegosa, Andr{\'e}s R.},
  booktitle = {Advances in Neural Information Processing Systems},
  volume = {37},
  year = {2024},
  url = {https://openreview.net/forum?id=CyzZeND3LB}
}

@inproceedings{rivasplata2020pacbayesbeyond,
  title = {{PAC}-{Bayes} analysis beyond the usual bounds},
  author = {Rivasplata, Omar and Kuzborskij, Ilja and Szepesv{\'a}ri, Csaba and Shawe-Taylor, John},
  booktitle = {Advances in Neural Information Processing Systems},
  volume = {33},
  year = {2020},
  url = {https://arxiv.org/abs/2006.13057}
}

@InProceedings{haddouche2025flatminima,
  title = {A {PAC}-{Bayesian} Link Between Generalisation and Flat Minima},
  author = {Haddouche, Maxime and Viallard, Paul and Simsekli, Umut and Guedj, Benjamin},
  booktitle = {Proceedings of The 36th International Conference on Algorithmic Learning Theory},
  pages = {481--511},
  year = {2025},
  editor = {Kamath, Gautam and Loh, Po-Ling},
  volume = {272},
  series = {Proceedings of Machine Learning Research},
  month = {24--27 Feb},
  publisher = {PMLR},
  url = {https://proceedings.mlr.press/v272/haddouche25a.html}
}

@article{liu2022hessiantrace,
  title = {Regularizing Deep Neural Networks with Stochastic Estimators of {Hessian} Trace},
  author = {Liu, Yucong and Yu, Shixing and Lin, Tong},
  journal = {arXiv preprint arXiv:2208.05924},
  year = {2022},
  url = {https://arxiv.org/abs/2208.05924}
}

@article{cui2022jacobianhessian,
  title = {Generalizing and Improving {Jacobian} and {Hessian} Regularization},
  author = {Cui, Chenwei and Yan, Zehao and Liu, Guangshen and Lu, Liangfu},
  journal = {arXiv preprint arXiv:2212.00311},
  year = {2022},
  url = {https://arxiv.org/abs/2212.00311}
}

@article{leblanc2025deterministicrisk,
  title = {A Framework for Bounding Deterministic Risk with {PAC}-{Bayes}: Applications to Majority Votes},
  author = {Leblanc, Benjamin and Germain, Pascal},
  journal = {arXiv preprint arXiv:2510.25569},
  year = {2025},
  url = {https://arxiv.org/abs/2510.25569}
}

@inproceedings{rangamani2019scale,
  title     = {A Scale Invariant Measure of Flatness for Deep Network Minima},
  author    = {Rangamani, Akshay and Nguyen, Nam H. and Kumar, Abhishek
               and Phan, Dzung and Chin, Sang H. and Tran, Trac D.},
  booktitle = {2021 IEEE International Conference on Acoustics, Speech and
               Signal Processing (ICASSP)},
  pages     = {1680--1684},
  year      = {2021},
  publisher = {IEEE},
  doi       = {10.1109/ICASSP39728.2021.9413771},
  eprint    = {1902.02434},
  archivePrefix = {arXiv},
  primaryClass  = {stat.ML}
}

@inproceedings{yi2021bn,
  title     = {{BN}-Invariant Sharpness Regularizes the Training Model to
               Better Generalization},
  author    = {Yi, Mingyang and Zhang, Huishuai and Chen, Wei
               and Ma, Zhi-Ming and Liu, Tie-Yan},
  booktitle = {Proceedings of the Twenty-Eighth International Joint Conference
               on Artificial Intelligence},
  pages     = {4164--4170},
  year      = {2019},
  publisher = {International Joint Conferences on Artificial Intelligence
               Organization},
  eprint    = {2101.02944},
  archivePrefix = {arXiv},
  primaryClass  = {cs.LG}
}

@article{shoham2021exploration,
  title   = {An Exploration into Why Output Regularization Mitigates Label Noise},
  author  = {Shoham, Neta and Avidor, Tomer and Israel, Nadav},
  journal = {arXiv preprint arXiv:2104.12477},
  year    = {2021}
}

@inproceedings{jiang2020fantastic,
  title     = {Fantastic Generalization Measures and Where to Find Them},
  author    = {Jiang, Yiding and Neyshabur, Behnam and Mobahi, Hossein
               and Krishnan, Dilip and Bengio, Samy},
  booktitle = {International Conference on Learning Representations},
  year      = {2020},
  url       = {https://openreview.net/forum?id=SJgIPJBFvH}
}

@inproceedings{dziugaite2020search,
  title     = {In Search of Robust Measures of Generalization},
  author    = {Dziugaite, Gintare Karolina and Drouin, Alexandre
               and Neal, Brady and Rajkumar, Nitarshan
               and Caballero, Ethan and Wang, Linbo
               and Mitliagkas, Ioannis and Roy, Daniel M.},
  booktitle = {Advances in Neural Information Processing Systems},
  volume    = {33},
  pages     = {11723--11733},
  year      = {2020},
  publisher = {Curran Associates, Inc.}
}

@inproceedings{mcallester1998some,
  title     = {Some {PAC}-Bayesian Theorems},
  author    = {McAllester, David A.},
  booktitle = {Proceedings of the Eleventh Annual Conference on
               Computational Learning Theory},
  pages     = {230--234},
  year      = {1998},
  publisher = {Association for Computing Machinery},
  doi       = {10.1145/279943.279989}
}

@article{seeger2002pacbayes,
  title   = {{PAC}-Bayesian Generalisation Error Bounds for
             Gaussian Process Classification},
  author  = {Seeger, Matthias},
  journal = {Journal of Machine Learning Research},
  volume  = {3},
  pages   = {233--269},
  year    = {2002},
  month   = oct,
  url     = {https://www.jmlr.org/papers/v3/seeger02a.html}
}

@book{catoni2007pacbayes,
  title     = {{PAC}-Bayesian Supervised Classification:
               The Thermodynamics of Statistical Learning},
  author    = {Catoni, Olivier},
  series    = {IMS Lecture Notes--Monograph Series},
  volume    = {56},
  publisher = {Institute of Mathematical Statistics},
  address   = {Beachwood, Ohio},
  year      = {2007},
  doi       = {10.1214/074921707000000391}
}

@inproceedings{tsuzuku2020normalized,
  title     = {Normalized Flat Minima: Exploring Scale Invariant
               Definition of Flat Minima for Neural Networks Using
               {PAC}-Bayesian Analysis},
  author    = {Tsuzuku, Yusuke and Sato, Issei and Sugiyama, Masashi},
  booktitle = {Proceedings of the 37th International Conference on
               Machine Learning},
  pages     = {9636--9647},
  year      = {2020},
  editor    = {Daum{\'e} III, Hal and Singh, Aarti},
  volume    = {119},
  series    = {Proceedings of Machine Learning Research},
  publisher = {PMLR},
  url       = {https://proceedings.mlr.press/v119/tsuzuku20a.html}
}

@misc{lemirepaquin2026symmetrization,
  title         = {Symmetrization of Loss Functions for Robust Training of Neural Networks in the Presence of Noisy Labels},
  author        = {Lemire Paquin, Alexandre and Chaib-Draa, Brahim and Gigu{\`e}re, Philippe},
  year          = {2026},
  eprint        = {2605.20347},
  archivePrefix = {arXiv},
  primaryClass  = {cs.LG},
  doi           = {10.48550/arXiv.2605.20347},
  url           = {https://arxiv.org/abs/2605.20347}
}

@article{sokolic2017robust,
  title   = {Robust Large Margin Deep Neural Networks},
  author  = {Sokoli{\'c}, Jure and Giryes, Raja and Sapiro, Guillermo and Rodrigues, Miguel R. D.},
  journal = {IEEE Transactions on Signal Processing},
  volume  = {65},
  number  = {16},
  pages   = {4265--4280},
  year    = {2017},
  doi     = {10.1109/TSP.2017.2708039}
}

@article{varga2017gradient,
  title   = {Gradient Regularization Improves Accuracy of Discriminative Models},
  author  = {Varga, D{\'a}niel and Csisz{\'a}rik, Adri{\'a}n and Zombori, Zsolt},
  journal = {arXiv preprint arXiv:1712.09936},
  year    = {2017}
}

@inproceedings{lyu2022understanding,
  title     = {Understanding the Generalization Benefit of Normalization Layers: Sharpness Reduction},
  author    = {Lyu, Kaifeng and Li, Zhiyuan and Arora, Sanjeev},
  booktitle = {Advances in Neural Information Processing Systems},
  volume    = {35},
  year      = {2022}
}

@inproceedings{jacob2018quantization,
  title     = {Quantization and Training of Neural Networks for Efficient Integer-Arithmetic-Only Inference},
  author    = {Jacob, Benoit and Kligys, Skirmantas and Chen, Bo and Zhu, Menglong and Tang, Matthew and Howard, Andrew and Adam, Hartwig and Kalenichenko, Dmitry},
  booktitle = {Proceedings of the IEEE Conference on Computer Vision and Pattern Recognition},
  pages     = {2704--2713},
  year      = {2018}
}

@article{nagel2021white,
  title   = {A White Paper on Neural Network Quantization},
  author  = {Nagel, Markus and Fournarakis, Marios and Amjad, Rana Ali and Bondarenko, Yelysei and van Baalen, Mart and Blankevoort, Tijmen},
  journal = {arXiv preprint arXiv:2106.08295},
  year    = {2021}
}

@misc{Maurer2004,
  author       = {Andreas Maurer},
  title        = {A Note on the {PAC-Bayesian} Theorem},
  year         = {2004},
  eprint       = {cs/0411099},
  archivePrefix= {arXiv},
  primaryClass = {cs.LG},
  url          = {https://arxiv.org/abs/cs/0411099}
}

@inproceedings{letarte2019dichotomize,
  author    = {Ga{\"e}l Letarte and Pascal Germain and Benjamin Guedj and Fran{\c{c}}ois Laviolette},
  title     = {Dichotomize and Generalize: {PAC-Bayesian} Binary Activated Deep Neural Networks},
  booktitle = {Advances in Neural Information Processing Systems 32},
  pages     = {6869--6879},
  year      = {2019},
  publisher = {Curran Associates, Inc.}
}

@inproceedings{germain2009pacbayesian,
  author    = {Pascal Germain and Alexandre Lacasse and Fran{\c{c}}ois Laviolette and Mario Marchand},
  title     = {PAC-Bayesian Learning of Linear Classifiers},
  booktitle = {Proceedings of the 26th Annual International Conference on Machine Learning},
  series    = {ICML '09},
  pages     = {353--360},
  year      = {2009},
  publisher = {ACM},
  address   = {Montreal, Quebec, Canada},
  doi       = {10.1145/1553374.1553419}
}

@book{talagrand2005generic,
  title={The Generic Chaining: Upper and Lower Bounds of Stochastic Processes},
  author={Talagrand, Michel},
  series={Springer Monographs in Mathematics},
  year={2005},
  publisher={Springer},
  address={Berlin, Heidelberg}
}

@article{banerjee2020derandomized,
  author    = {Arindam Banerjee and Tiancong Chen and Yingxue Zhou},
  title     = {De-randomized PAC-Bayes Margin Bounds: Applications to Non-Convex and Non-Smooth Predictors},
  journal   = {CoRR},
  volume    = {abs/2002.09956},
  year      = {2020},
  url       = {https://arxiv.org/abs/2002.09956},
  eprint    = {2002.09956},
  archivePrefix = {arXiv}
}

@inproceedings{neyshabur2018spectralpacbayes,
  author    = {Behnam Neyshabur and Srinadh Bhojanapalli and David McAllester and Nathan Srebro},
  title     = {A PAC-Bayesian Approach to Spectrally-Normalized Margin Bounds for Neural Networks},
  booktitle = {International Conference on Learning Representations (ICLR)},
  year      = {2018}
}

@inproceedings{biggs2022margins,
  author    = {Fergus Immanuel Biggs and Benjamin Guedj},
  title     = {On Margins and Derandomization in PAC-Bayes},
  booktitle = {Proceedings of The 25th International Conference on Artificial Intelligence and Statistics (AISTATS)},
  year      = {2022}
}

@article{viallard2021general,
  author    = {Paul Viallard and Pascal Germain and Amaury Habrard and Emilie Morvant},
  title     = {A General Framework for the Practical Disintegration of PAC-Bayesian Bounds},
  journal   = {CoRR},
  volume    = {abs/2102.08649},
  year      = {2021},
  url       = {https://arxiv.org/abs/2102.08649},
  eprint    = {2102.08649},
  archivePrefix = {arXiv}
}

@inproceedings{mcallester1999pacbayes,
  author    = {David A. McAllester},
  title     = {PAC-Bayesian Model Averaging},
  booktitle = {Proceedings of the 12th Annual Conference on Computational Learning Theory (COLT)},
  year      = {1999},
  pages     = {164--170},
  publisher = {ACM}
}

@inproceedings{langford2002pacbayes,
  author    = {John Langford and John Shawe-Taylor},
  title     = {PAC-Bayes \& Margins},
  booktitle = {Advances in Neural Information Processing Systems (NeurIPS)},
  year      = {2002},
  volume    = {15}
}

@inproceedings{lacasse2007majorityvote,
  author    = {Alexandre Lacasse and Fran{\c{c}}ois Laviolette and Mario Marchand and Pascal Germain and Jean-Francis Roy},
  title     = {PAC-Bayes Bounds for the Risk of the Majority Vote and the Variance of the Gibbs Classifier},
  booktitle = {Advances in Neural Information Processing Systems (NeurIPS)},
  year      = {2007},
  volume    = {19}
}

@article{germain2015risk,
  author  = {Pascal Germain and Alexandre Lacasse and Fran{\c{c}}ois Laviolette and Mario Marchand and Jean-Francis Roy},
  title   = {Risk Bounds for the Majority Vote: From a PAC-Bayesian Analysis to a Learning Algorithm},
  journal = {Journal of Machine Learning Research},
  year    = {2015},
  volume  = {16},
  pages   = {787--860}
}

@inproceedings{masegosa2020secondorder,
  author    = {Andr{\'e}s R. Masegosa and Seraf{\'i}n Moral and Manuel Cebri{\'a}n},
  title     = {Second-Order PAC-Bayesian Bounds for Weighted Majority Votes},
  booktitle = {Advances in Neural Information Processing Systems (NeurIPS)},
  year      = {2020},
  volume    = {33}
}

@article{haddouche2020pacbayes,
  title   = {PAC-Bayes Unleashed: Generalisation Bounds with Unbounded Losses},
  author  = {Haddouche, Maxime and Guedj, Benjamin and Rivasplata, Omar and Shawe-Taylor, John},
  journal = {arXiv preprint arXiv:2006.07279},
  year    = {2020}
}

@BOOK{ShalevShwartz2014,
  author    = {Shai Shalev-Shwartz and Shai Ben-David},
  title     = {Understanding Machine Learning: From Theory to Algorithms},
  publisher = {Cambridge University Press},
  year      = {2014},
  isbn      = {9781107057135},
  address   = {Cambridge, UK},
}

@article{maurer2016vector,
  author       = {Andreas Maurer},
  title        = {A vector-contraction inequality for Rademacher complexities},
  journal      = {CoRR},
  volume       = {abs/1605.00251},
  year         = {2016},
  url          = {http://arxiv.org/abs/1605.00251},
  eprinttype    = {arXiv},
  eprint       = {1605.00251},
  bibsource    = {dblp computer science bibliography, https://dblp.org}
}

@book{Vershynin2018HDP,
  title        = {High-Dimensional Probability: An Introduction with Applications in Data Science},
  author       = {Vershynin, Roman},
  publisher    = {Cambridge University Press},
  year         = {2018},
  address      = {Cambridge},
  isbn         = {9781108415194},
  url          = {https://www.math.uci.edu/~rvershyn/papers/HDP-book/HDP-2.pdf},
  note         = {See Corollary~8.5.6 (Talagrand comparison inequality)}
}

@inproceedings{germain2016pac,
  title     = {PAC-Bayesian Theory Meets Bayesian Inference},
  author    = {Germain, Pascal and Bach, Francis and Lacoste, Alexandre and Lacoste-Julien, Simon},
  booktitle = {Advances in Neural Information Processing Systems 29 (NeurIPS 2016)},
  pages     = {1--9},
  year      = {2016},
  url       = {https://papers.nips.cc/paper/6569-pac-bayesian-theory-meets-bayesian-inference}
}

@article{alquier2016properties,
  author    = {Pierre Alquier and James Ridgway and Nicolas Chopin},
  title     = {On the Properties of Variational Approximations of Gibbs Posteriors},
  journal   = {Journal of Machine Learning Research},
  volume    = {17},
  number    = {239},
  pages     = {1--41},
  year      = {2016},
  publisher = {JMLR}
}

@article{Zhou2023unhinged,
  title={On the Dynamics Under the Unhinged Loss and Beyond},
  author={Xiong Zhou and Xianming Liu and Hanzhang Wang and Deming Zhai and Jiangjunjun and Xiangyang Ji},
  journal={Journal of Machine Learning Research},
  volume={24},
  pages={1--62},
  year={2023},
  url={https://jmlr.org/papers/v24/23-0771.html}
}

@article{Ghoshneuro,
author = {Ghosh, Aritra and Manwani, Naresh and Sastry, P.S.},
title = {Making Risk Minimization Tolerant to Label Noise},
year = {2015},
issue_date = {July 2015},
publisher = {Elsevier Science Publishers B. V.},
address = {NLD},
volume = {160},
number = {C},
issn = {0925-2312},
url = {https://doi.org/10.1016/j.neucom.2014.09.081},
doi = {10.1016/j.neucom.2014.09.081},
journal = {Neurocomput.},
month = {jul},
pages = {93–107},
numpages = {15}
}

\end{document}